\documentclass{article}

 \usepackage[preprint]{neurips_2026}

\usepackage[utf8]{inputenc} % allow utf-8 input
\usepackage[T1]{fontenc}    % use 8-bit T1 fonts
\usepackage{hyperref}       % hyperlinks
\usepackage{url}            % simple URL typesetting
\usepackage{booktabs}       % professional-quality tables
\usepackage{amsfonts}       % blackboard math symbols
\usepackage{nicefrac}       % compact symbols for 1/2, etc.
\usepackage{microtype}      % microtypography
\usepackage{xcolor}         % colors
\usepackage{amsmath, amsthm}
\usepackage{enumitem}
\usepackage{cleveref}
\usepackage{tikz}
\usetikzlibrary{positioning,arrows.meta,calc}

\crefname{theorem}{Theorem}{Theorems}
\Crefname{theorem}{Theorem}{Theorems}
\crefname{appendix}{Appendix}{Appendices}
\Crefname{appendix}{Appendix}{Appendices}
\crefname{lemma}{Lemma}{Lemmas}
\Crefname{lemma}{Lemma}{Lemmas}
\crefname{corollary}{Corollary}{Corollaries}
\Crefname{corollary}{Corollary}{Corollaries}
\crefname{proposition}{Proposition}{Propositions}
\Crefname{proposition}{Proposition}{Propositions}
\crefname{definition}{Definition}{Definitions}
\Crefname{definition}{Definition}{Definitions}
\crefname{example}{Example}{Examples}
\Crefname{example}{Example}{Examples}
\crefname{remark}{Remark}{Remarks}
\Crefname{remark}{Remark}{Remarks}
\crefname{section}{Section}{Sections}
\Crefname{section}{Section}{Sections}
\crefname{subsection}{Section}{Sections}
\Crefname{subsection}{Section}{Sections}
\crefname{equation}{Eq.}{Eqs.}
\Crefname{equation}{Equation}{Equations}

\usepackage{amssymb, mathtools, bm}
\usepackage{mathrsfs}

\newtheorem{theorem}{Theorem}[section]
\newtheorem{lemma}[theorem]{Lemma}

\newtheorem{corollary}[theorem]{Corollary}
\newtheorem{definition}[theorem]{Definition}
\newtheorem{remark}[theorem]{Remark}
\newtheorem{example}[theorem]{Example}

\def\cB{\mathcal B}

\def\cD{\mathcal D}
\def\cF{\mathcal F}

\def\cM{\mathcal M}
\def\cN{\mathcal N}

\newcommand{\bA}{{\bf A}}

\newcommand{\bb}{{\bf b}}

\newcommand{\bD}{{\bf D}}

\newcommand{\bG}{{\bf G}}
\newcommand{\bh}{{\bf h}}

\newcommand{\bfs}{{\bf s}}

\newcommand{\bt}{{\bf t}}

\newcommand{\bv}{{\bf v}}

\newcommand{\bW}{{\bf W}}
\newcommand{\bx}{{\bf x}}

\newcommand{\bz}{{\bf z}}

\newcommand{\tbfs}{\tilde\bfs}
\newcommand{\bone}{{\bf 1}}

\newcommand{\bgamma}{\boldsymbol{\gamma}}

\newcommand{\btheta}{\bm{\theta}}

\newcommand{\bxi}{\mbox{\boldmath{$\xi$}}}

\newcommand{\bbN}{{\mathbb N}}

\newcommand{\bbR}{{\mathbb R}}

\newcommand{\UB}{\mathcal{U}\mathcal{B}}

\DeclarePairedDelimiterX\Set[1]\{\}{%
  
  #1
}

\DeclarePairedDelimiterXPP\prob[1]{\mathbb{P}}(){}{%
  
  #1
}

\DeclarePairedDelimiterXPP\expt[1]{\mathbb{E}}[]{}{%
  
  #1
}

\DeclarePairedDelimiterXPP\exptd[2]{\mathbb{E}^{#2}}[]{}{%
  
  #1
}

\DeclarePairedDelimiterXPP\Log[1]{\operatorname{log}}(){}{#1}
\DeclarePairedDelimiterXPP\Exp[1]{\operatorname{exp}}(){}{#1}
\DeclarePairedDelimiterXPP\MEPR[1]{\operatorname{MEPR}}(){}{#1}
\DeclarePairedDelimiterXPP\Gaussian[2]{\mathcal{N}}(){}{#1,~#2}
\DeclarePairedDelimiterXPP\Ber[1]{\mathrm{Ber}}(){}{#1}
\DeclarePairedDelimiter\abs{\lvert}{\rvert}
\DeclarePairedDelimiter\norm{\lVert}{\rVert}
\DeclarePairedDelimiter\pbrac{(}{)}
\DeclarePairedDelimiter\sbrac{[}{]}
\DeclarePairedDelimiter\cbrac{\{}{\}}

\DeclarePairedDelimiterXPP\Covering[1]{\mathcal{N}}(){}{#1}

\makeatletter
\let\oldSet\Set
\def\Set{\@ifstar{\oldSet}{\oldSet*}}

\let\oldprob\prob
\def\prob{\@ifstar{\oldprob}{\oldprob*}}

\let\oldexpt\expt
\def\expt{\@ifstar{\oldexpt}{\oldexpt*}}

\let\oldexptd\exptd
\def\exptd{\@ifstar{\oldexptd}{\oldexptd*}}

\let\oldLog\Log
\def\Log{\@ifstar{\oldLog}{\oldLog*}}

\let\oldExp\Exp
\def\Exp{\@ifstar{\oldExp}{\oldExp*}}

\let\oldMEPR\MEPR
\def\MEPR{\@ifstar{\oldMEPR}{\oldMEPR*}}

\let\oldabs\abs
\def\abs{\@ifstar{\oldabs}{\oldabs*}}

\let\oldnorm\norm
\def\norm{\@ifstar{\oldnorm}{\oldnorm*}}

\let\oldBer\Ber
\def\Ber{\@ifstar{\oldBer}{\oldBer*}}

\let\oldGaussian\Gaussian
\def\Gaussian{\@ifstar{\oldGaussian}{\oldGaussian*}}

\let\oldpbrac\pbrac
\def\pbrac{\@ifstar{\oldpbrac}{\oldpbrac*}}

\let\oldsbrac\sbrac
\def\sbrac{\@ifstar{\oldsbrac}{\oldsbrac*}}

\let\oldcbrac\cbrac
\def\cbrac{\@ifstar{\oldcbrac}{\oldcbrac*}}
\makeatother

\DeclareMathOperator*{\argmin}{arg\,min}

\newcommand{\BS}{\psi}
\title{Posterior Contraction Rates for Sparse Kolmogorov-Arnold Networks in Anisotropic Besov Spaces}

\author{%
  Jeunghun Oh$^{1}$ \quad Kyeongwon Lee$^{2}$ \quad Jaeyong Lee$^{1}$ \quad Lizhen Lin$^{3}$ \\
  $^{1}$Department of Statistics, Seoul National University \\
  $^{2}$Department of Statistics, Inha University \\
  $^{3}$Department of Mathematics, University of Maryland \\
  \texttt{nomad1994@snu.ac.kr} \quad \texttt{leejyc@gmail.com} \quad \texttt{kwlee@inha.ac.kr} \quad \texttt{lizhen01@umd.edu}
}

\begin{document}

\maketitle

\begin{abstract}
We study posterior contraction rates for sparse Bayesian Kolmogorov-Arnold networks (KANs) over anisotropic Besov spaces, providing a statistical foundation of KANs from a Bayesian point of view.
We show that sparse Bayesian KANs equipped with spike-and-slab-type sparsity priors attain the near-minimax posterior contraction. In particular, the contraction rate depends on the intrinsic anisotropic smoothness of the underlying function.
Moreover, by placing a hyperprior on a single model-size parameter, the resulting posterior adapts to unknown anisotropic smoothness and still achieves the corresponding near-minimax rate. 
A distinctive feature of our results, compared with those for standard sparse MLP-based models, is that the KAN depth can be kept fixed: owing to the flexibility of learnable spline edge functions, the required approximation complexity is controlled through the network width, spline-grid range and size, and parameter sparsity.
Our analysis develops theoretical tools tailored to sparse spline-edge architectures, including approximation and complexity bounds for Bayesian KANs.  We then extend to compositional Besov spaces and show that the contraction rates depend on \emph{layerwise smoothness and effective dimension} of the underlying compositional structure, thereby effectively avoiding the curse of dimensionality. 
Together, the developed  tools and findings advance the theoretical understanding of Bayesian neural networks and provide rigorous statistical foundations for KANs.
\end{abstract}

\section{Introduction}

Kolmogorov-Arnold Networks (KANs) are a recently introduced alternative to Multi-Layer Perceptrons (MLPs) \citep{liu2024kan, liu2024kan2}.
This design is motivated by the Kolmogorov–Arnold representation theorem, which states that any continuous multivariate function can be expressed through sums and compositions of suitable univariate functions. By representing the edges of the network as linear combinations of univariate B-splines, KANs offer greater modeling flexibility than conventional fully-connected neural networks that use fixed activation functions. KANs are also more interpretable than MLPs, since each edge function can be visualized and  interpreted directly.
Motivated by these properties, KANs have recently been applied in a wide range of domains, including time series analysis \citep{vaca2024kolmogorov}, graph learning \citep{kiamari2024gkan, bresson2025kagnn}, and differential equations \citep{koenig2024kan}.

In high-dimensional nonparametric regression, statistical performance is strongly affected by both the ambient dimension and the regularity structure of the underlying regression function. The deterioration of estimation rates with dimension, the so-called curse of dimensionality, is a familiar manifestation of this dependence. In addition, when the smoothness of the target function varies across coordinates or the function exhibits local irregularities, isotropic smoothness classes such as H\"older or Sobolev spaces may fail to capture the effective complexity of the problem. Anisotropic Besov spaces offer a flexible alternative by encoding direction-dependent regularity through a smoothness vector \citep{kerkyacharian2001nonlinear,suzuki_deep_2021}; the corresponding contraction rates in this paper are governed by the intrinsic anisotropic smoothness defined below.

Recent work has established near-minimax guarantees for Bayesian deep learning procedures over anisotropic function spaces. 
For fractional posteriors, \citet{egels2025posterior} proved adaptive near-minimax contraction rates for deep ReLU networks with heavy-tailed priors over several function classes, including anisotropic Besov spaces. 
For standard posteriors, \citet{lee2025posterior} showed that sparse Bayesian neural networks equipped with sparsity-inducing priors attain near-optimal contraction rates over anisotropic Besov and composite anisotropic Besov spaces, together with adaptive results through hierarchical priors. 
These results, however, have been developed primarily for ReLU-based architectures such as sparse MLPs, and it remains unclear whether analogous minimax and adaptive guarantees can be established for spline-edge architectures such as KANs.

The spline-edge structure of KANs makes them naturally connected to Besov spaces, where local and multiscale regularity can be described through spline-type representations. 
Along these lines, \citet{kratsios2026approximation} established optimal approximation rates for residual KANs in Besov norms, together with learning guarantees based on complexity bounds,  
providing theoretical support for the expressive power of KAN-type architectures.
Our work is complementary in scope: 
we study the \emph{posterior contraction} of fully Bayesian KANs and obtain near-minimax and adaptive guarantees, with rates determined by the intrinsic smoothness $\tbfs$.
These results establish a Bayesian theoretical framework for spline-edge architectures and show that KANs can alleviate the curse of dimensionality by exploiting anisotropic regularity through sparse representations.

A key insight from our work is that the depth of the network can be kept fixed, while the required approximation complexity is controlled through the network width, grid range, grid size, and weight sparsity of each spline-edge representations.
In particular, our construction does not rely on making the grid spacing increasingly fine; instead, the approximation power is gained through sparse representations of learnable edge functions. 
To technically carry it through, our proof develops approximation, entropy, prior-concentration, and sieve-complement bounds tailored to sparse spline-edge architectures.

\textbf{Contributions.}
Our main contributions are threefold.
First, we prove that sparse Bayesian KANs attain near-minimax posterior contraction rates over anisotropic Besov spaces, with the rate depending on the intrinsic smoothness $\tilde{\mathbf{s}}$; our construction keeps the KAN depth fixed and controls approximation complexity through sparse spline-edge representations rather than depth growth or increasingly fine grid spacing.
Second, we establish adaptation to unknown anisotropic smoothness by placing a hyperprior on a single model-size parameter that jointly controls the width, grid size, and grid range of the KAN architecture.
Third, we extend the result to compositional anisotropic Besov spaces, obtaining rates governed by layerwise smoothness and effective dimension.  These results extend the theoretical foundations of Bayesian neural networks beyond the ReLU-MLP setting that has dominated the literature, providing the first minimax-optimal and adaptive posterior contraction guarantees for spline-edge architectures. More broadly, our analysis demonstrates that sparsity-inducing priors and a single model-size hyperprior suffice to deliver near-optimal Bayesian inference in this richer architectural class, suggesting that the principles underlying Bayesian theory for deep ReLU networks generalize nicely to KAN-type models.
\section{Model assumptions and setup}

\paragraph{Notation.} Let $\bbN$ and $\bbR$ denote the sets of natural numbers and real numbers, respectively.
Moreover, let $\bbR_+$ denote the set of nonnegative real numbers and $\bbR_{++}$ the set of positive real numbers.
For $a \in \bbR$, let $\lfloor a \rfloor$ and $\lceil a \rceil$ be the floor and ceiling functions. For $n \in \bbN$, let $[n]$ denote the set $\{1, 2, \ldots, n \}$.
For each $a, b \in \bbR$, write $a \vee b := \max \{ a, b \}$ and $a \wedge b := \min \{a, b \}$.
For a real-valued vector $\bv$, denote its $\ell_p$-norm by $\| \bv \|_p$, $1 \leq p \leq \infty$, and let $\| \bv \|_0$  be  the number of nonzero coordinates. For a measurable function $f:[0,1]^d \rightarrow \bbR$ and a measure $\mu$ on $[0,1]^d$ with $d \in \bbN$, define $\| f \|_{L^p(\mu)} := ( \int_{[0,1]^d} | f(x)| d \mu(x) )^{1/p}$ for $0 < p < \infty$.
When $p = \infty$, define $\| f \|_\infty := \sup_{x \in [0, 1]^d} | f(x) |$.
Let $\delta_0$ denote the point mass at $0$. For two sequences $a_n$, $b_n \geq 0$, if $a_n \leq C b_n$ holds for some universal constant $C > 0$, then we write $a_n \lesssim b_n$; equivalently, we write $b_n \gtrsim a_n$. If $a_n \lesssim b_n \lesssim a_n$, then we write $a_n \asymp b_n$.
Without loss of generality, we define the class of uniformly bounded functions as $\UB := \{ f:[0,1]^d \rightarrow \bbR : \| f \|_\infty \leq 1 \}$ as the set of uniformly bounded functions. For a normed space $\cF$, let $U(\cF)$ denote the unit ball of $\cF$.

\paragraph{Regression model.}
We observe data $\mathcal{D}_n=\{(X_i,Y_i)\}_{i=1}^n$ with $(X_i,Y_i) \in [0,1]^{d} \times \bbR$, from  the following random design nonparametric regression model: 
\begin{equation*}
Y_i = f_0(X_i) + \varepsilon_i, \qquad Y_i \, | \, X_i \stackrel{ind}{\sim} N(f_0(X_i),\sigma_0^2), \qquad X_i \stackrel{iid}{\sim} P_X
\end{equation*}
where $f_0:[0,1]^d\to\mathbb{R}$ is the true regression function, and $\sigma_0^2>0$ is the variance of the observation noise.
The distribution $P_X$ denotes the distribution of the design points $X_i$.

\subsection{Kolmogorov--Arnold Networks}

\subsubsection{B-splines}
Following the original formulation of \citet{liu2024kan}, we assume that each edge in the network is given by a sum of a B-spline expansion and a silu activation.
B-spline bases admit several constructions depending on the knot, or grid, specification such as allowing repeated or nonuniform knots \citep{schumaker2007spline}.
In this work, we restrict attention to \emph{fixed-knot KANs}, whose edges are built from B-splines on a fixed, equally-spaced knot grid with no repeated knots.

Specifically, let $m \in \bbN$ be the spline degree and let $G \in \bbN$ be the number of grid intervals.
For an extended interval $[a, b]$, define a knot vector $\boldsymbol{\xi} = (\xi_{-m}, \cdots, \xi_{G + m} ) \in [a, b]^{G + 2m + 1}$  whose entries are strictly increasing and equally spaced i.e. $\xi_k < \xi_{k + 1}$ and $\Delta = \xi_{k + 1} - \xi_k$, $\forall k$. We denote by $\{ B_{k,m}(\cdot; \bxi) \}_{k=1}^{G + m}$ the collection of degree $m$ B-spline basis functions generated by $\bxi$ \citep{de2001practical}.
We assume estimation domain is $[\xi_0, \xi_G]$, while $[a, b]$ is used to mitigate boundary effects.

\subsubsection{KANs specification}
We proceed with introducing the KAN architecture by defining each edge of KANs. Let $L$
 denote the depth of the network, $\bD=(D_0,\ldots,D_L)$  its layer widths, and $\bG=(G_0,\ldots,G_{L - 1})$ the per-layer numbers of grid intervals, with $D_0=d$ and $D_L=1$. 
For each $l = 0, \ldots, L - 1$, the edge function $\phi_{i,j}^{(l)}:\bbR \rightarrow \bbR$ connecting  node $j \in [D_{l}]$ in layer $l$ to the node $i \in [D_{l + 1}]$ in layer $l + 1$ is defined by
\begin{equation}    \label{eq:KANs_edge}
\phi_{i,j}^{(l)}(x)
=
\sum_{k=1}^{G_l + m}\theta_{i,j,k}^{(l)} B_{k,m}(x; \bxi^{(l)})
+
\theta_{i,j,0}^{(l)}\,\mathrm{silu}(x),  \quad x \in \bbR
\end{equation}
where $B_{k,m}(x; \bxi^{(l)})$ denotes the B-spline basis generated by the  layer $l$ knots $\bxi^{(l)} \in [a_l, b_l]^{G_l + 2m + 1}$, and $\mathrm{silu}(x)=x/(1+e^{-x})$. We set $[a_l, b_l] = [-H, H]$, $l = 1, \ldots, L - 1$ for $H > 0$ and if $l=0$, any fixed points $a_0  < \xi_0^{(0)} < \xi_{G_0}^{(0)} < b_0$ such that $[\xi_0^{(0)}, \xi_{G_0}^{(0)}]^d \supseteq [0, 1]^d$.  Each $\theta_{i,j,k}^{(l)} \in \bbR$ is the corresponding edge weight, and we collect the weights of edge $(i,j)$ at layer $l$ into the vector
$\btheta^{(l)}_{i,j} = (\theta_{i,j,0}^{(l)}, \ldots, \theta_{i,j,G_l + m}^{(l)} ) \in \bbR^{G_l + m + 1}$. The transformation $\Phi_l : \bbR^{D_l} \rightarrow \bbR^{D_{l + 1}}$ at layer $l$ is then given by
\begin{equation}    \label{eq:KANs_layer}
\Phi_l (\bx)
=
\Bigl(
\sum_{j=1}^{D_l}\phi_{1,j}^{(l)}(x_j),
\dots,
\sum_{j=1}^{D_l}\phi_{D_{l + 1},j}^{(l)}(x_j)
\Bigr)^\top
\end{equation}
with weight tensor $\bW^{(l)} = (\btheta_{i,j}^{(l)})_{i,j} \in \bbR^{D_{l + 1} \times D_l \times (G_l + m + 1)}$. The Kolmogorov-Arnold network  (KAN) $f_{\btheta}:\bbR^d \rightarrow \bbR$ is then defined as
$
f_{\boldsymbol{\theta}}
=
\Phi_{L-1}\circ\cdots\circ\Phi_0
$
where $\btheta = (\bW^{(l)})_l = (\theta_{i,j,k}^{(l)})_{i,j,k,l}$ collects all the  parameters; we also use the 
vectorized representation $\btheta = (\theta_1, \ldots, \theta_T)$ when convenient. The  total number of parameters is $T(L, \bD, \bG) = \sum_{l=0}^{L - 1} D_{l + 1} D_l (G_l + m + 1)$. For  a fixed structure $(L, \bD, \bG, H)$, where $L \in \bbN$, $\bD \in \bbN^{L + 1}$, $\bG \in \bbN^{L}$, $H > 0$, and parameters $\boldsymbol{\theta} = (\theta_{i,j,k}^{(l)})_{i,j,k,l}$, we define the class of KANs of order $m$ :
\begin{equation}    \label{eq:KANs_architecture}
    KAN(L, \bD, \bG, H ; m)
    := \left\{ f_{\boldsymbol{\theta}} = \Phi_{L - 1} \circ \cdots \circ \Phi_0 \middle| 
        \begin{array}{l}
        \btheta = (\bW^{(l)})_l \; \text{such that}    \\
        \Phi_l \text{ is defined by \cref{eq:KANs_edge}-\cref{eq:KANs_layer}},   \\
        \bW^{(l)} \in \bbR^{D_{l + 1} \times D_l \times (G_l + m + 1)}, \; \forall l
    \end{array}
    \right\}.
\end{equation}

For the posterior contraction analysis, we further specialize to KANs with equal hidden width and a uniform fixed-knot grid: for $D, G \in \bbN$ and $H > 0$,
\begin{equation*}
    \mathbf{D}=(d,D,\dots,D,1), \quad \mathbf{G}=(G_0, G,\dots,G), \quad [a_l, b_l] = [-H, H] \; \text{for} \; l = 1, \ldots, L - 1
\end{equation*}
where $G_0$ is fixed. For notational simplicity, we write KANs with equal-width and uniform fixed-knot structure with parameter $\btheta$ as $f_{\btheta} \in KAN(L,D,G,H)$, dropping the boldface to indicate $D$ and $G$ are now scalars. Under this setting, the total number of parameters is given by $T = T(L,D,G) = d D (G_0 + m  + 1) + ((L - 2)D^2 + D) (G + m + 1 )$.

% Sparse fixed-knot B-spline KAN schematic.
% Requires: tikz with positioning, arrows.meta, calc.
\begin{figure}[t]
\centering
\begin{tikzpicture}[
    x=1cm,
    y=1cm,
    >=Stealth,
    font=\footnotesize,
    panel/.style={draw=black!25, fill=black!2, rounded corners=3pt, line width=0.45pt},
    title/.style={font=\bfseries\footnotesize, align=center},
    tinylabel/.style={font=\scriptsize, align=center},
    formula/.style={font=\scriptsize, align=center},
    nodept/.style={circle, fill=black, inner sep=1.45pt},
    activeedge/.style={draw=black!88, line width=0.62pt},
    inactiveedge/.style={draw=black!88, line width=0.62pt},
    % inactiveedge/.style={draw=black!20, dashed, line width=0.42pt},
    omitted/.style={draw=black!45, line width=0.58pt, line cap=round, dash pattern=on 0.55pt off 2.0pt},
    flow/.style={-Stealth, line width=0.75pt, draw=black!65},
    zoomflow/.style={-Stealth, line width=0.70pt, draw=black!55},
    coefzero/.style={draw=black!20, fill=black!6, line width=0.35pt},
    basis/.style={draw=black!32, line width=0.48pt},
    basisselect/.style={draw=blue!65!black, line width=0.72pt},
    resultcurve/.style={draw=orange!85!black, line width=1.05pt},
    silucurve/.style={draw=green!35!black, opacity=0.58, line width=0.85pt},
    edgemask/.style={draw=black!35, fill=white, minimum width=0.46cm, minimum height=0.32cm, inner sep=0pt},
    callout/.style={draw=orange!80!black, line width=0.75pt}
]

% ------------------------------------------------------------------
% Layout: same width, increased height.
% ------------------------------------------------------------------
\draw[panel] (0,0) rectangle (7.15,8.65);
\draw[panel] (7.55,4.00) rectangle (13.75,8.65);
\draw[panel] (7.55,0) rectangle (13.75,3.75);

\node[title] at (3.575,8.32) {1. KAN};
\node[title] at (10.65,8.32) {2. Sparse edge function};
\node[title] at (10.65,3.42) {3. B-spline basis};

% ------------------------------------------------------------------
% Panel 1: KAN with active and inactive edges.
% ------------------------------------------------------------------
\begin{scope}[xshift=0.42cm]
\node[nodept, label=below:{\scriptsize $x_1$}] (x1) at (1.00,0.92) {};
\node[nodept, label=below:{\scriptsize $x_2$}] (x2) at (2.35,0.92) {};
\draw[omitted] (2.85,0.92) -- (4.25,0.92);
\node[nodept, label=below:{\scriptsize $x_d$}] (xd) at (4.85,0.92) {};

\node[nodept] (h11) at (0.90,2.25) {};
\node[nodept] (h12) at (2.05,2.25) {};
\draw[omitted] (2.50,2.25) -- (3.82,2.25);
\node[nodept] (h13) at (4.25,2.25) {};
\node[nodept] (h14) at (5.40,2.25) {};

\node[nodept] (h21) at (1.20,3.72) {};
\node[nodept] (h22) at (2.70,3.72) {};
\draw[omitted] (3.15,3.72) -- (4.62,3.72);
\node[nodept] (h23) at (5.05,3.72) {};

% \draw[omitted] (1.20,4.12) -- (1.20,5.20);
% \draw[omitted] (2.70,4.12) -- (2.70,5.20);
% \draw[omitted] (5.05,4.12) -- (5.05,5.20);

\node[nodept] (hm1) at (1.35,5.60) {};
\node[nodept] (hm2) at (3.20,5.60) {};
\node[nodept] (hm3) at (5.10,5.60) {};

\node[nodept] (out) at (3.20,6.82) {};
\node[tinylabel, anchor=south] at (3.20,6.94) {$f_{\boldsymbol{\theta}}(\boldsymbol{x})$};

% Inactive candidate edges.
\draw[inactiveedge] (x1) -- (h12);
\draw[inactiveedge] (x1) -- (h14);
\draw[inactiveedge] (x2) -- (h11);
\draw[inactiveedge] (x2) -- (h14);
\draw[inactiveedge] (xd) -- (h11);
\draw[inactiveedge] (xd) -- (h12);
\draw[inactiveedge] (xd) -- (h13);
\draw[inactiveedge] (h11) -- (h22);
\draw[inactiveedge] (h12) -- (h23);
\draw[inactiveedge] (h13) -- (h21);
\draw[inactiveedge] (h13) -- (h23);
\draw[inactiveedge] (h14) -- (h21);
\draw[inactiveedge] (h14) -- (h22);
\draw[inactiveedge] (h11) -- (h23);
\draw[inactiveedge] (h12) -- (h21);
\draw[inactiveedge] (h12) -- (h22);
\draw[inactiveedge] (h13) -- (h23);
\draw[inactiveedge] (h14) -- (h23);

% Active sparse KAN edges.
\draw[activeedge] (x1) -- (h11);
\draw[activeedge] (x1) -- (h13);
\draw[activeedge] (x2) -- (h12);
\draw[activeedge] (x2) -- (h13);
\draw[activeedge] (xd) -- (h14);
\draw[activeedge] (h11) -- (h21);
\draw[activeedge] (h12) -- (h21);
\draw[activeedge] (h13) -- (h22);
\draw[activeedge] (h14) -- (h23);
\draw[activeedge] (hm1) -- (out);
\draw[activeedge] (hm2) -- (out);
\draw[activeedge] (hm3) -- (out);

\draw[omitted, shorten <=15pt, shorten >=15pt] (h21) -- (hm1);
\draw[omitted, shorten <=15pt, shorten >=15pt] (h21) -- (hm2);
\draw[omitted, shorten <=15pt, shorten >=15pt] (h21) -- (hm3);
\draw[omitted, shorten <=15pt, shorten >=15pt] (h22) -- (hm1);
\draw[omitted, shorten <=15pt, shorten >=15pt] (h22) -- (hm2);
\draw[omitted, shorten <=15pt, shorten >=15pt] (h22) -- (hm3);
\draw[omitted, shorten <=15pt, shorten >=15pt] (h23) -- (hm1);
\draw[omitted, shorten <=15pt, shorten >=15pt] (h23) -- (hm2);
\draw[omitted, shorten <=15pt, shorten >=15pt] (h23) -- (hm3);
\draw[omitted, shorten <=15pt, shorten >=15pt] (hm1) -- (hm2);
\draw[omitted, shorten <=15pt, shorten >=15pt] (hm2) -- (hm3);

% Each small box denotes a learned sparse univariate edge function.
\foreach \A/\B/\pos in {
    x1/h11/0.50,
    x2/h13/0.54,
    xd/h14/0.48,
    h12/h21/0.50,
    h13/h22/0.50,
    hm1/out/0.52,
    hm2/out/0.50,
    hm3/out/0.52,
    x1/h13/0.50,
    h11/h21/0.50,
    h14/h23/0.50
} {
    \node[edgemask] (box-\A-\B) at ($(\A)!\pos!(\B)$) {};
    \draw[black!25, line width=0.22pt] ($(box-\A-\B.center)+(-0.18,-0.08)$) -- ($(box-\A-\B.center)+(0.18,-0.08)$);
    \draw[black!25, line width=0.25pt] ($(box-\A-\B.center)+(-0.17,-0.08)$)
        .. controls ($(box-\A-\B.center)+(-0.10,0.03)$) and ($(box-\A-\B.center)+(-0.06,0.06)$) ..
        ($(box-\A-\B.center)+(0.00,-0.08)$);
    \draw[blue!65!black, line width=0.35pt] ($(box-\A-\B.center)+(-0.07,-0.08)$)
        .. controls ($(box-\A-\B.center)+(-0.02,0.06)$) and ($(box-\A-\B.center)+(0.04,0.06)$) ..
        ($(box-\A-\B.center)+(0.09,-0.08)$);
    \draw[orange!85!black, line width=0.36pt] ($(box-\A-\B.center)+(-0.17,-0.02)$)
        .. controls ($(box-\A-\B.center)+(-0.04,0.07)$) and ($(box-\A-\B.center)+(0.08,-0.06)$) ..
        ($(box-\A-\B.center)+(0.17,0.04)$);
}

% Call out one active edge function for zoom-in.
\draw[callout] (box-hm3-out.center) circle[radius=0.36];

% Width bracket, shown as an architecture-level constraint.
\draw[black!48, line width=0.42pt] (0.65,7.32) -- (0.65,7.5) -- (5.60,7.5) -- (5.60,7.32);
\node[tinylabel, fill=black!2, inner sep=1pt] at (3.225,7.71) {max width $D$};

\node[tinylabel] at (0.38,1.58) {$\Phi_0$};
\node[tinylabel] at (0.45,3.12) {$\Phi_1$};
% \draw[omitted] (0.62,4.35) -- (0.62,5.00);
\node[tinylabel] at (1.15,6.25) {$\Phi_{L-1}$};
% \node[tinylabel] at (2.72,4.02) {sum};
% \node[tinylabel] at (3.20,5.86) {sum};

% \draw[activeedge] (1.08,0.22) -- (1.52,0.22);
% \node[tinylabel, anchor=west] at (1.60,0.22) {active edge};
% \draw[inactiveedge] (3.12,0.22) -- (3.56,0.22);
% \node[tinylabel, anchor=west] at (3.64,0.22) {inactive edge};
\end{scope}

\draw[zoomflow] ($(box-hm3-out.center)+(0.36,0.02)$)
    .. controls (6.55,6.35) and (7.05,7.35) .. (7.48,7.45);

% ------------------------------------------------------------------
% Panel 2: one sparse edge function with bounded coefficients.
% ------------------------------------------------------------------
\begin{scope}[yshift=-0.24cm]
\node[formula] at (10.65,7.92)
{$\phi_{i,j}^{(l)}(u)=\theta_{i,j,0}^{(l)}\mathrm{silu}(u)+s_{i,j}^{(l)}(u)$};
\node[
    draw=orange!80!black,
    rounded corners=2pt,
    inner xsep=2.4pt,
    inner ysep=1.3pt,
    font=\scriptsize,
    align=center
] (sumterm) at (10.65,7.48)
{$\textstyle s_{i,j}^{(l)}(u)=
\sum_{k=1}^{G_l+m}\theta_{i,j,k}^{(l)}B_{k,m}(u;\boldsymbol{\xi}^{(l)})$};
% \node[tinylabel] at (10.65,6.98) {$\|\boldsymbol{\theta}_{i,j}^{(l)}\|_\infty\le B$};

\draw[black!55, line width=0.40pt] (8.75,6.42) -- (12.58,6.42);
\draw[black!35, dashed, line width=0.35pt] (8.75,6.76) -- (12.58,6.76);
\draw[black!35, dashed, line width=0.35pt] (8.75,6.08) -- (12.58,6.08);
\node[tinylabel, anchor=west] at (12.70,6.76) {$B$};
\node[tinylabel, anchor=west] at (12.70,6.42) {$0$};
\node[tinylabel, anchor=west] at (12.70,6.08) {$-B$};

\foreach \i in {0,...,8} {
    \pgfmathsetmacro{\xx}{9.00+0.35*\i}
    \draw[black!25, line width=0.35pt] (\xx,6.35) -- (\xx,6.49);
    \node[coefzero, minimum width=0.20cm, minimum height=0.20cm, inner sep=0pt] at (\xx,6.42) {};
}
\draw[black!72, line width=1.15pt] (9.75,6.42) -- (9.75,6.72);
\draw[black!72, line width=1.15pt] (10.75,6.42) -- (10.75,6.14);
\draw[black!72, line width=1.15pt] (11.75,6.42) -- (11.75,6.73);
% \node[tinylabel, fill=black!2, inner sep=1pt] at (10.65,5.70) {$\|\boldsymbol{\theta}_{i,j}^{(l)}\|_0\le S$};

\draw[black!60, line width=0.40pt] (8.90,4.65) -- (12.55,4.65);
\foreach \c in {9.25,9.75,10.25,10.75,11.25,11.75,12.25} {
    \draw[basis] (\c-0.38,4.65)
        .. controls (\c-0.18,4.85) and (\c-0.13,5.16) .. (\c,5.16)
        .. controls (\c+0.13,5.16) and (\c+0.18,4.85) .. (\c+0.38,4.65);
}
\foreach \c in {9.75,10.75,11.75} {
    \draw[basisselect] (\c-0.38,4.65)
        .. controls (\c-0.18,4.85) and (\c-0.13,5.16) .. (\c,5.16)
        .. controls (\c+0.13,5.16) and (\c+0.18,4.85) .. (\c+0.38,4.65);
}
\draw[silucurve, domain=8.95:12.55, samples=120, smooth, variable=\x]
    plot ({\x},{4.60 + 0.10*(\x-8.95) + 0.38/(1+exp(-2.0*(\x-10.95)))});
\node[tinylabel, text=green!35!black, opacity=0.70] at (12.30,5.50) {silu};
\draw[resultcurve, domain=8.95:12.55, samples=160, smooth, variable=\x]
    plot ({\x},{4.60 + 0.10*(\x-8.95) + 0.38/(1+exp(-2.0*(\x-10.95)))
        + 0.50*pow(max(0,1-pow((\x-9.75)/0.42,2)),2)
        - 0.46*pow(max(0,1-pow((\x-10.78)/0.50,2)),2)
        + 0.44*pow(max(0,1-pow((\x-11.75)/0.42,2)),2)});
\end{scope}

% Zoom from the B-spline sum term to the basis panel.
\draw[zoomflow] (sumterm.east)
    .. controls (13.70,7.48) and (13.70,4.18) .. (13.28,3.82);

% ------------------------------------------------------------------
% Panel 3: fixed B-spline basis on [-H,H].
% ------------------------------------------------------------------
\node[formula] at (10.65,2.98)
{$\xi_{-m}^{(l)}=-H,\quad \xi_{G_l+m}^{(l)}=H,\qquad
\{B_{k,m}\}_{k=1}^{G_l+m}$};

\draw[black!70, line width=0.45pt] (8.85,1.67) -- (12.55,1.67);
\foreach \i in {0,...,9} {
    \pgfmathsetmacro{\xx}{8.85+0.41*\i}
    \draw[black!70, line width=0.45pt] (\xx,1.60) -- (\xx,1.74);
}
\node[tinylabel, anchor=north] at (8.85,1.53) {$-H$};
\node[tinylabel, anchor=north] at (9.67,1.53) {$\xi_0^{(l)}$};
\node[tinylabel, anchor=north] at (11.72,1.53) {$\xi_{G_l}^{(l)}$};
\node[tinylabel, anchor=north] at (12.55,1.53) {$H$};
\node[tinylabel] at (10.75,1.41) {$\cdots$};

\foreach \c in {9.25,9.66,10.07,10.48,10.89,11.30,11.71,12.12} {
    \draw[basis] (\c-0.41,1.67)
        .. controls (\c-0.20,1.95) and (\c-0.14,2.35) .. (\c,2.35)
        .. controls (\c+0.14,2.35) and (\c+0.20,1.95) .. (\c+0.41,1.67);
}

\node[formula] at (10.65,0.55)
{$\displaystyle s_{i,j}^{(l)}(u)=
\sum_{k=1}^{G_l+m}\theta_{i,j,k}^{(l)}B_{k,m}(u;\boldsymbol{\xi}^{(l)})$};

\end{tikzpicture}
% \caption{Sparse fixed-knot B-spline KAN construction. The left panel shows a sparse KAN with layer widths bounded by $D$; active edges carry learnable univariate functions and inactive edges are shown as dashed gray lines. The upper-right panel zooms in on one edge function with coefficient-level sparsity, the uniform coefficient bound $B$, and a SiLU component. The lower-right panel zooms further into the B-spline basis used in the spline part of the edge function.}
\caption{Sparse fixed-knot B-spline KAN construction. The left panel shows a KAN with layer widths bounded by $D$; edges carry learnable univariate functions. The upper-right panel zooms in on one edge function with coefficient-level sparsity, the uniform coefficient bound $B$, and a SiLU component. The lower-right panel zooms further into the B-spline basis used in the spline part of the edge function.}
\label{fig:sparse-kan-dag}
\end{figure}

\subsection{Anisotropic Besov spaces}
In this section, we introduce anisotropic Besov spaces, the function class to which  we assume the true function belongs. The anisotropic modulus of smoothness is defined similarly to that of  the isotropic Besov space case, but by taking each direction into account \citep{nikol2012approximation}.  For a function $f: \bbR^d \rightarrow \bbR$,  $r$th-order finite difference of $f$ in direction $\bh \in \bbR^d$ be defined by $\Delta_h^r (f) (\bx) = \sum_{j=0}^r \binom{r}{j} (-1)^{r - j} f(\bx + j \bh)$ if $\bx + r \bh \in [0, 1]^d $ or $\Delta_h^r (f) (\bx) = 0$ otherwise. Given a directional radius $\bt = (t_1, \ldots, t_d) \in \bbR^d_{++}$, the anisotropic modulus of smoothness of a function $f \in L^p$ is defined by $w_{r,p} (f, t) = \sup_{\bh \in \bbR^d : |h_i| \leq t_i } \| \Delta^r_h (f) \|_{L^p}$.
\begin{definition}[Anisotropic Besov Space: $\cB^{\bfs}_{p,q}( \lbrack 0,1 \rbrack^d)$]
For $p,q \in (0,\infty]$ and $\bfs=(s_1,\ldots,s_d)\in\bbR_{++}^d$, let $r := \max_i \lfloor s_i \rfloor + 1$. For a function $f \in L^p$, the anisotropic Besov seminorm $| \cdot |_{\cB^{\bfs}_{p,q}}$ is
\begin{align*}
    | f |_{\cB^{\bfs}_{p,q}} :=
    \begin{cases}
        \left( \sum_{k=0}^\infty [ 2^k w_{r,p} (f, (2^{-k/s_1}, \ldots, 2^{-k/s_d})) ]^q \right)^{1/q},     &   q < \infty  \\
        \sup_{k \geq 0} [ 2^k w_{r,p} (f, (2^{-k/s_1}, \ldots, 2^{-k/s_d})) ],       &   q = \infty.
    \end{cases}
\end{align*}
The norm on the space $\cB^\bfs_{p,q} ([0,1]^d)$ is defined by $\| f \|_{\cB^{\bfs}_{p,q}} := \| f \|_{L^p} + | f |_{\cB^{\bfs}_{p,q}}$, and $\cB^\bfs_{p,q} ([0,1]^d) := \{f \in L^p : \| f \|_{\cB^{\bfs}_{p,q}} < \infty \}$.
\end{definition}
The anisotropic smoothness vector $\bfs$
can be summarized by a single \emph{intrinsic smoothness} index
$\tbfs := ( \sum_{j=1}^d s_j^{-1} )^{-1},$
which captures the joint regularity of $f$ across all coordinates and governs the minimax convergence rate. Let  $\underline{\bfs} := \min_j s_j $ be the smallest smoothness among all directions, and $\overline{\bfs} := \max_j s_j$ the largest smoothness. Intuitively, a small $s_j$
indicates that $f$ is rough along the $j$th coordinate, while a large $s_j$ indicates higher regularity  or smoothness in that direction. The isotropic case $s_1 = \cdots = s_d = s$ recovers the standard isotropic Besov space, in which case $\tbfs = s / d$.

\begin{remark}[Intrinsic smoothness and intrinsic dimension]
\label{rem:intrinsic-dimension}
The intrinsic smoothness $\tbfs$ can be equivalently expressed through the anisotropy-induced dimension $d_{\mathrm{int}}(\bfs):= \underline{\bfs}/{\tbfs}= \sum_{j=1}^d \underline{\bfs}/{s_j}\le d.$
Indeed, the classical anisotropic minimax rate \citep{kerkyacharian2001nonlinear, hoffman2002random} satisfies $n^{-\tbfs/(2\tbfs+1)}=n^{-\underline{\bfs}/(2\underline{\bfs}+d_{\mathrm{int}}(\bfs))}.$
Thus $d_{\mathrm{int}}(\bfs)$ is the dimension index induced by anisotropy: a roughest coordinate contributes one full dimension, while a smoother coordinate contributes only the fraction $\underline{\bfs}/s_j$. In the isotropic case $d_{\mathrm{int}}(\bfs)=d$, and the layerwise quantity $t^{*(j)}=\underline{\bfs}^{(j)}/\tilde{s}^{(j)}$ below is the same index for each compositional component.
\end{remark}

\section{Main results}  \label{sec:main_results}

In this section, we present posterior contraction rate results for sparse Bayesian KANs over anisotropic Besov spaces.
For the main results, we impose the following assumptions.
\paragraph{Assumptions.}
\begin{enumerate}[
    label=\textbf{(A\arabic*)},
    ref=A\arabic*
]
\item \label{assump:A1}
For some $0<p,q\le\infty$ and $\mathbf{s}\in\mathbb{R}_{++}^d$, the true function satisfies $f_0 \in U(\cB_{p,q}^{\mathbf{s}})\cap\UB$ and $(1/p-1/2)_+< \tbfs$.
% For the Bayesian model, assume clipped KAN output 
In the likelihood,
\[
Y_i | X_i \sim N(\operatorname{clip} \circ f(X_i), \sigma^2),
\]
where $\operatorname{clip} (x) := \min \{1, \max\{-1, x\} \}$.

\item \label{assump:A2}
The design distribution $P_X$ is supported on $[0,1]^d$ and admits a bounded density function $p_X$, i.e. there exists a constant $C_X >0$ such that $\|p_X\|_{L^\infty} \le C_X$.

\item \label{assump:A3}
There exist constants $ 0<\underline\sigma<\overline\sigma<\infty $
such that $ \underline\sigma<\sigma_0<\overline\sigma $.

\item \label{assump:A4}
The spline degree of KANs satisfies $m \geq 2$ and $0 < \overline{\bfs} < \min \{ m, m - 1 + 1/p \} $.
\end{enumerate}

Assumption \eqref{assump:A1} is a standard condition adopted in theoretical studies over Besov spaces. It means that the true function $f_0$ belongs to a bounded Besov class, and the condition $\tbfs > (1/p-1/2)_+$ is the minimal condition for $L^2$-integrability and for the embedding $\cB^{\tbfs}_{p,q} \hookrightarrow L^2$. Meanwhile, \eqref{assump:A2}--\eqref{assump:A3} are assumptions frequently used in the analysis of posterior contraction for Bayesian models. Assumption \eqref{assump:A2} is natural once the inputs $X$ have been normalized to $[0,1]^d$ in fixed-dimensional regime  $d = O(1)$. Assumption \eqref{assump:A3} reflects a distinctive  feature of Bayesian model from that of frequentist procedures: the regression function and the noise scale are estimated jointly, and the assumption requires only mild prior knowledge of the range in which  $\sigma_0$ lies.  

\begin{remark}
    The uniform boundedness condition in \eqref{assump:A1} is imposed with bound $1$ only for simplicity. More generally, one may take any sufficiently large $F > 0$, define $\UB_F := \{f: [0, 1]^d \rightarrow \bbR : \| f \|_\infty \leq F \}$ and $\operatorname{clip}_F (x) := \min \{ F, \max \{-F, x \} \}$. We set $F = 1$ without loss of generality. The convention applies to \eqref{assump:D1}, to be introduced later.
\end{remark}

\subsection{Spike-and-slab prior}
The result in this section can be summarized as follows. 
For any function $f_0$ satisfying Assumption \eqref{assump:A1}, there exists an $m$-degree sparse KAN approximator
$ f^\star = f_{\btheta^\star} \in KAN (L_0, D_n, G_n, H_n ; m) $
with parameter $\btheta^\star$ satisfying
$\| \btheta^\star \|_\infty \leq B_n^\star$ and $\| \btheta^\star \|_0 \leq S_n$,  attaining approximation error $\epsilon_n \asymp n^{- \tbfs/ (2 \tbfs + 1)}$.  Under  a predefined spike-and-slab prior, the posterior distribution contracts toward $f_0$ at a rate that matches this approximation error up to a logarithmic factor.  The optimal architecture of the given sparse KANs is as follows:
\begin{equation}    \label{eq:sas_arch}
    L_0 = 3 + 2 \lceil \log_2 d \rceil, \quad D_n = 2d N_n  , \quad G_n = 4 H_n - 2m, \quad H_n = \lceil 2B_n^\star \rceil
\end{equation}
with sparsity $S_n := \lceil S_0 N_n \rceil$ for some $S_0 > 0$, where $N_n = \lceil C_N n^{ 1 / (2 \tbfs + 1)} \rceil$ for large $C_N > 0$, $B_n^\star = B_0 N_n^{\beta}$ for some $B_0 > 0$, and 
$
    \beta = 
    (1 + \kappa) \max \{(1 / p - \tbfs)_+, 1/\underline{\bfs} \}
$
with $\kappa = \frac{2 \omega}{\tbfs - \omega}$ if $p < 2$ and $\kappa = 0$ if $p \geq 2$ and $\omega = ( 1/p - 1/2 )_+$.

Equation \eqref{eq:sas_arch} highlights two structural features of the construction.
First, the network depth is held fixed at the finite value $L_0$, while approximation power is acquired through the width $D_n$, the grid size $G_n$, and the grid range $H_n$.
The induced grid spacing $\tfrac{2 H_n}{G_n + 2m} \asymp 1$
remains bounded away from zero, so the construction does not rely on increasingly fine grids. 
Second, the entire architecture parameters $(L_0, D_n, G_n, H_n, S_n)$  is governed by a single model-size parameter $N_n$, yielding a particularly parsimonious specification.
The weight bound $B^\star_n$ of the oracle network $f^\star$ also scales with  $N_n$;
since $B_n^\star$ enters the prior conditions stated below, the slab distribution must be adjusted accordingly.

Let  $T_n := T(L_0, D_n, G_n)$ be the dimension of the model parameter under the architecture \cref{eq:sas_arch}. The spike-and-slab (SAS) prior places a joint distribution on the weight vector $\btheta = (\theta_t)_{t=1}^{T_n} \in \bbR^{T_n}$, the sparsity index $\bgamma = (\gamma_t)_{t=1}^{T_n} \in \{0, 1 \}^{T_n}$  and the noise variance $\sigma^2$: 
\begin{align}    \label{eq:sas_prior}
\begin{split}
\pi(\btheta \mid \bgamma) &= \prod_{t=1}^{T_n}
\Bigl[
\gamma_t \widetilde{\pi}_{SL}(\theta_t)
+ (1-\gamma_t)\delta_0(\theta_t)
\Bigr],\\
\pi(\bgamma) &= \binom{T_n}{S_n}^{-1}
\bone\{\bgamma\in\{0,1\}^{T_n},\ \norm{\bgamma}_0=S_n\},\\
\pi(d\sigma^2) &=
\pi_{\sigma^2}(\sigma^2)\,d\sigma^2
\end{split}
\end{align}
Here, $\widetilde{\pi}_{SL}$ is a continuous slab density that governs the nonzero  coordinates of $\theta$ and $\pi_{\sigma^2}$ is the prior density function of the noise variance. The prior on $\bgamma$
 is uniform over the set of $T_n$ dimensional binary vectors with exactly $S_n$ active coordinates.

For the spike-and-slab prior \eqref{eq:sas_prior} to support our contraction results, we require
\begin{enumerate}[
    label=\textbf{(B\arabic*)},
    ref=B\arabic*
]
\item \label{assump:B1}
There exists a constant $c_1>0$ such that
$
    \log\inf_{\abs{u}\le B_n^\star + 1}\widetilde{\pi}_{SL}(u)\ge -c_1\log n
$
for all sufficiently large $n$.
\item \label{assump:B2}
There exist constants $c_2,c_3,\alpha>0$ and a sequence $\tau_n \lesssim B_n^\star$ such that for all $t>0$,
$
    \int_{\abs{u}>t}\widetilde{\pi}_{SL}(u)\,du
    \le
    c_2 \exp\!\left[-c_3\left(\frac{t}{\tau_n}\right)^\alpha\right].
$

\item \label{assump:B3}
$\pi_{\sigma^2}$ is supported on $[\underline{\sigma}^2,\overline{\sigma}^2]$, and there exists a constant $c_\sigma$ such that
$
    \inf_{\sigma\in[\underline{\sigma},\overline{\sigma}]}\pi_{\sigma^2}(\sigma^2)\ge c_\sigma>0.
$
\end{enumerate}

Conditions \eqref{assump:B1}–\eqref{assump:B3} reflect the prior properties needed for the general posterior contraction framework of \citet{ghosal2000convergence, ghosal2007convergence}.

\begin{theorem}[Spike-and-slab prior]\label{thm:sas} \label{thm:main}
Suppose Assumptions \eqref{assump:A1}–\eqref{assump:A4} hold, and that the architecture and prior of the sparse Bayesian KAN are specified as in \eqref{eq:sas_arch} and \eqref{eq:sas_prior}, with the prior satisfying \eqref{assump:B1}–\eqref{assump:B3}.  Then the posterior distribution contracts toward the true value $(f_0, \sigma_0^2)$ at the rate
\begin{equation}    \label{eq:rate}
    \epsilon_n = n^{- \frac{\tbfs}{2 \tbfs + 1}} (\log n )^{1/2}.
\end{equation}
In other words, for any sequence $M_n \rightarrow \infty$,
\begin{equation}    \label{eq:convergence}
    \Pi \left( (\btheta,\sigma) : \norm{ \operatorname
{clip} \circ f_{\btheta} -f_0}_{L^2(P_X)}+\abs{\sigma^2 - \sigma_0^2}
    > M_n\epsilon_n \;\middle|\; \cD_n \right)
    \to 0
\end{equation}
in $P_{f_0,\sigma_0}^{(n)}$-probability as $n \rightarrow \infty$.
\end{theorem}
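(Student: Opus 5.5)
We verify the three conditions of the general contraction theorem of \citet{ghosal2007convergence} for i.i.d.\ observations $(X_i,Y_i)$ with densities $p_{f,\sigma}(x,y)=p_X(x)\,\phi_\sigma(y-\operatorname{clip}\circ f(x))$. Since $\operatorname{clip}\circ f$ and $f_0$ are bounded by $1$ and $\sigma^2\in[\underline\sigma^2,\overline\sigma^2]$, the Hellinger distance, Kullback--Leibler divergence and second KL moment between $p_{f,\sigma}$ and $p_{f_0,\sigma_0}$ are all equivalent, up to constants, to $\|\operatorname{clip}\circ f-f_0\|_{L^2(P_X)}^2+|\sigma^2-\sigma_0^2|^2$. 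Hellinger contraction therefore yields \eqref{eq:convergence}. We must verify three conditions with $\epsilon_n=n^{-\tbfs/(2\tbfs+1)}(\log n)^{1/2}$, so that $n\epsilon_n^2\asymp N_n\log n$.
\begin{enumerate}
\item \emph{Prior mass}: $\Pi(B_{KL}(\epsilon_n))\ge e^{-Cn\epsilon_n^2}$.
\item \emph{Sieve entropy}: $\log N(\epsilon_n,\cF_n,\|\cdot\|_\infty)\lesssim n\epsilon_n^2$.
\item \emph{Remaining mass}: $\Pi(\cF_n^c)\le e^{-(C+4)n\epsilon_n^2}$.
\end{enumerate}

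\emph{Approximation and prior mass.} We first need the approximation result summarized before \eqref{eq:sas_arch}: an oracle $f^\star=f_{\btheta^\star}\in KAN(L_0,D_n,G_n,H_n;m)$ with $\|\btheta^\star\|_0\le S_n$, $\|\btheta^\star\|_\infty\le B_n^\star$ and $\|f^\star-f_0\|_{L^\infty}$ (or $L^2(P_X)$ via \eqref{assump:A2}) at most $\epsilon_n$. This is where \eqref{assump:A4} and the choice of $\beta$ enter. The construction proceeds in three steps. First, a sparse anisotropic tensor-product B-spline quasi-interpolant with $N_n$ terms approximates $f_0$ at rate $N_n^{-\tbfs}$, via adaptive wavelet/spline selection as in \citet{suzuki_deep_2021}. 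Second, each univariate factor is realized by the first-layer edges. Third, products are realized through a binary tree of depth $\lceil\log_2 d\rceil$ using the identity $xy=((x+y)^2-x^2-y^2)/2$ with quadratic splines ($m\ge 2$) on $[-H_n,H_n]$. This accounts for $L_0=3+2\lceil\log_2 d\rceil$ and explains why the grid range $H_n$ must grow with $B_n^\star$.

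Next, we need a Lipschitz-in-parameters bound: for $\btheta,\btheta'$ with the same support and $\|\btheta-\btheta'\|_\infty\le\eta$,
\[
\|f_{\btheta}-f_{\btheta'}\|_\infty\le \eta\,L_0\,(D_n+1)^{L_0}(G_n+m)\,(B_n^\star+1)^{L_0}\Lambda_n^{L_0},
\]
where $\Lambda_n\lesssim 1$ bounds the Lipschitz constant of the B-splines (the grid spacing is of constant order) and silu is $1$-Lipschitz. This Lipschitz factor is polynomial in $n$.

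Taking $\eta=\epsilon_n/\mathrm{poly}(n)$, the prior mass of $\{\bgamma=\operatorname{supp}\btheta^\star\}\times\{\|\btheta-\btheta^\star\|_\infty\le\eta\}\times\{|\sigma^2-\sigma_0^2|\le\epsilon_n\}$ is at least
\[
\binom{T_n}{S_n}^{-1}\,(2\eta\, n^{-c_1})^{S_n}\,c_\sigma\epsilon_n\ge e^{-CS_n\log n},
\]
using \eqref{assump:B1}, \eqref{assump:B3}, and $\log T_n\lesssim\log n$.

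\emph{Sieve and entropy.} We take
\[
\cF_n=\{f_{\btheta}:\|\btheta\|_0\le S_n,\ \|\btheta\|_\infty\le \bar B_n\},\qquad \bar B_n=\tau_n n^{\rho}
\]
for a suitable $\rho$. Combining $\binom{T_n}{S_n}$ support choices with an $\eta'$-grid in each coordinate and the Lipschitz bound gives
\[
\log N\lesssim S_n\log\bigl(T_n\bar B_n^{L_0}(D_nG_n)^{L_0}/\epsilon_n\bigr)\lesssim N_n\log n\asymp n\epsilon_n^2 .
\]
Clipping only decreases distances. For the remaining mass, since $\|\bgamma\|_0=S_n$ almost surely, \eqref{assump:B2} gives
\[
\Pi(\cF_n^c)\le S_n c_2\exp\bigl(-c_3n^{\rho\alpha}\bigr),
\]
which is $\le e^{-(C+4)N_n\log n}$ once $\rho\alpha>1/(2\tbfs+1)$. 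This choice keeps $\log\bar B_n\lesssim\log n$, so the entropy bound is unaffected.

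\emph{Main obstacle.} The main obstacle is the approximation step: keeping the depth fixed while the B-spline multiplication gadget must operate on the range $[-H_n,H_n]$ with coefficient bound $B_n^\star=B_0N_n^\beta$. We also need to control error propagation through the composition, where the edges of later layers have Lipschitz constants growing with $B_n^\star$, so that all polynomial-in-$N_n$ factors remain absorbed into $\log n$. The case $p<2$, with the $\kappa$ inflation arising from the nonlinear (adaptive) approximation, is the delicate part. I would first establish the univariate and product gadgets exactly (quadratic splines reproduce $x^2$ on the interior grid), and then transplant the sparse anisotropic approximation of \citet{suzuki_deep_2021} layer by layer.
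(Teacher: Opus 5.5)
Your skeleton matches the paper's proof. You verify the three Ghosal--van der Vaart conditions using a sieve of $S_n$-sparse KANs with an inflated weight bound. Entropy comes from $\binom{T_n}{S_n}$ supports times a coordinate grid, passed through a parameter-to-function Lipschitz bound that is polynomial in $n$. Prior mass comes from (B1) and (B3) around an oracle, and the remaining mass from a union bound over the $S_n$ slab coordinates together with (B2). Your truncation $\bar B_n=\tau_n n^{\rho}$ with $\rho\alpha>1/(2\tbfs+1)$ works as well as the paper's $\overline{B}_0N_n^{\beta+1/\alpha}(\log n)^{1/\alpha}$.

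Two steps fail as written. First, the event $\{\bgamma=\operatorname{supp}\btheta^\star\}$ has prior probability zero whenever $\|\btheta^\star\|_0<S_n$, because $\pi(\bgamma)$ charges only vectors with exactly $S_n$ ones. You need to pad to some $J_n^\star\supset\operatorname{supp}\btheta^\star$ with $|J_n^\star|=S_n$. The padded coordinates only need $|\theta_t|\le\eta\le 1$, which (B1) covers, and then your bound $\binom{T_n}{S_n}^{-1}(2\eta n^{-c_1})^{S_n}$ is correct.

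Second, and more substantively, your oracle construction breaks at ``each univariate factor is realized by the first-layer edges.'' The first-layer grid size $G_0$ is fixed, so layer-$0$ edges lie in a fixed finite-dimensional spline space on $[a_0,b_0]$. They cannot reproduce $\BS_m(2^{\lfloor ks_i^{-1}\rfloor}x_i-j_i)$, whose breakpoints have spacing $2^{-\lfloor ks_i^{-1}\rfloor}$, as small as roughly $N_n^{-(1+\kappa)/\underline{\bfs}}$.

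The paper's construction proceeds as follows:
\begin{itemize}
\item Layer $0$ computes only the affine map $x_i\mapsto 2^{\lfloor ks_i^{-1}\rfloor}x_i-j_i$. Any spline of degree $\ge1$ on the fixed grid reproduces it exactly, with coefficients $\lesssim N_n^{\beta}$.
\item Layer $1$, whose knots on $[-H_n,H_n]$ have spacing $2H_n/(G_n+2m)=1/2$, realizes the cardinal B-spline $\BS_m$ exactly.
\item Then come $2\lceil\log_2 d\rceil$ product layers and one output layer, which gives $L_0=1+1+2\lceil\log_2 d\rceil+1$.
\end{itemize}
It is this rescaled input in layer $1$ that requires the range $H_n$ to grow polynomially in $N_n$, with constant-order spacing so that $G_n\asymp H_n$. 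The multiplication gadget does not require this, since its inputs are B-spline values in $[0,1]$.

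Because the realization is exact, the layerwise error propagation you single out as the main obstacle never arises. The only error is the spline error $\lesssim N_n^{-\tbfs}=O(\epsilon_n(\log n)^{-1/2})$ from Lemma 2 of \citet{suzuki_deep_2021}. Under (A1) alone this bound holds in $L^2$, not $L^\infty$, and (A2) transfers it to $L^2(P_X)$.
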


\begin{proof}
    See \Cref{app:main-proof}.
\end{proof}

\begin{remark}
\cref{thm:sas} is a near-optimal result,  it differs from known minimax convergence rate $n^{-\tbfs/(2\tbfs+1)}$ over anisotropic Besov function spaces only by the logarithmic factor $(\log n)^{1/2}$ \citep{kerkyacharian2001nonlinear, hoffman2002random}.
\end{remark}

The following are four standard families of slab priors that satisfy \eqref{assump:B1}--\eqref{assump:B2}. In each case the slab scale  $\tau_n$ is tuned in accordance with the oracle weight bound  $B_n^\star \asymp n^{\beta/(2 \tbfs + 1)}$.

\begin{example}[Uniform slab distribution]
    Slab distribution $\tilde\pi_{SL} = U(-\tau_n , \tau_n)$ where $\tau_n = C_\tau \, n^{\beta / (2\tbfs + 1)} \geq B_n^\star + 1$ for some $C_\tau > 0$.
\end{example}

\begin{example}[Gaussian slab distribution]
$\tilde\pi_{SL} = N(0, \tau_n^2)$ with $\tau_n = C_\tau \, n^{\frac{\beta}{2 \tbfs + 1}}$ for some $C_\tau > 0$. 
Condition \eqref{assump:B1} holds because
$
        - \log \inf_{|u| \leq B_n^\star + 1} N (u | 0, \tau_n^2) = 
        \frac{(B_n^\star + 1)^2}{2 \tau_n^2} + \log (\sqrt{2 \pi} \tau_n) \lesssim \log n. 
$
  The sieve complement condition \eqref{assump:B2} holds with $\alpha = 2$.
\end{example}

\begin{example}[Laplace slab distribution]
    $\tilde\pi_{SL} = \mathrm{Laplace} (0, \tau_n)$
    , where the scale parameter is
    $\tau_n = C_\tau \, n^{\frac{\beta}{2 \tbfs + 1}}$ for some $ C_\tau > 0$. 
    %also satisfies the stated posterior contraction property. 
    The lower bound condition in assumption \eqref{assump:B1} holds because
$
         - \log \inf_{|u| \leq B_{n}^\star + 1} Laplace (u | 0, \tau_n) = \frac{B_n^\star + 1}{\tau_n} + \log (2 \tau_n ) \lesssim \log n
$
    and  the sieve complement condition \eqref{assump:B2}, holds with $\alpha = 1$.
\end{example}

\begin{example}[Sub-Weibull slab distribution]
$\tilde\pi_{SL} = \mathrm{subW} ( 1/\alpha )$, $\alpha > 0$ \citep{vladimirova2020sub}.   When the density is given by
$
    \tilde\pi_{SL} (u) \propto \frac{1}{\tau_n} e^{- \left| \frac{u}{\tau_n} \right|^\alpha}$, 
$
    \tau_n = C_\tau \, n^{\frac{\beta}{2\tbfs + 1}}
$
for some $C_\tau > 0$, the prior distribution satisfies both \eqref{assump:B1}--\eqref{assump:B2}. The case $\alpha = 2$ corresponds to the Gaussian slab distribution, and the case $\alpha = 1$ corresponds to the Laplace slab distribution.
\end{example}

\subsection{Bernoulli spike-and-slab prior}
We now consider a Bernoulli variant of the spike-and-slab prior, in which the inclusion indicators  $\gamma_t$
 are i.i.d. Bernoulli draws rather than uniform on a fixed-cardinality support. Let $\rho_n \in (0, 1)$ denote the coordinatewise inclusion probability. 

 The Bernoulli spike-and-slab prior is then defined by
\begin{align}    \label{eq:ber_sas_prior}
\begin{split}
\pi(\btheta \mid \bgamma) &= \prod_{t=1}^{T_n}
\Bigl[
\gamma_t \widetilde{\pi}_{SL}(\theta_t)
+ (1-\gamma_t)\delta_0(\theta_t)
\Bigr],\\
\pi(\bgamma) &= \prod_{t=1}^{T_n}\rho_n^{\gamma_t}(1-\rho_n)^{1-\gamma_t},\\
\pi(d\sigma^2) &=
\pi_{\sigma^2}(\sigma^2)\,d\sigma^2
\end{split}
\end{align}
Specific admissible choices of $\rho_n$ are discussed in \cref{eq:ber_sas_rho} and \cref{rem:ber_sas_rho} below.

\begin{theorem}[Bernoulli spike-and-slab]\label{thm:main-ber}
 Suppose Assumptions \eqref{assump:A1}--\eqref{assump:A4} hold, that the architecture and prior of the sparse Bayesian KAN are specified as in \eqref{eq:sas_arch} and \eqref{eq:ber_sas_prior} with the slab and variance prior satisfying \eqref{assump:B1}--\eqref{assump:B3},  and that the inclusion probability $\rho_n$ satisfies
\begin{equation}    \label{eq:ber_sas_rho}
    T_n^{-1}\lesssim \rho_n,    \quad   \log\frac{S_n}{T_n\rho_n}\gtrsim \log n
\end{equation}
Then the posterior achieves the same rate in \eqref{eq:rate} and the result \eqref{eq:convergence} holds.
\end{theorem}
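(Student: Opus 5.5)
We will reuse the proof of \cref{thm:sas} in the general framework of \citet{ghosal2007convergence} for i.n.i.d./random-design regression. That framework asks for three ingredients at rate $\epsilon_n = n^{-\tbfs/(2\tbfs+1)}(\log n)^{1/2}$:
\begin{enumerate}[label=(\roman*)]
\item prior mass $\Pi(B_n(\epsilon_n)) \ge e^{-C n\epsilon_n^2}$ on a Kullback--Leibler neighbourhood of $(f_0,\sigma_0)$;
\item a sieve $\cF_n$ with $\log \cN(\epsilon_n,\cF_n,\|\cdot\|_{L^2(P_X)}) \lesssim n\epsilon_n^2$;
\item $\Pi(\cF_n^c) \le e^{-(C+4)n\epsilon_n^2}$.
\end{enumerate}
The Bernoulli prior \eqref{eq:ber_sas_prior} differs from \eqref{eq:sas_prior} only in the law of $\bgamma$. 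The approximation result (the oracle $f^\star=f_{\btheta^\star}$ with $\|\btheta^\star\|_0\le S_n$, $\|\btheta^\star\|_\infty\le B_n^\star$) and the entropy bound for sparse KANs are therefore taken over unchanged. Only the prior-mass and sieve-complement steps need to be redone for the Bernoulli inclusion law. Note that $n\epsilon_n^2 \asymp N_n\log n \asymp S_n\log n$, and that $\log T_n \asymp \log n$ since $T_n$ is polynomial in $N_n$.

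\emph{Prior mass.} As in the proof of \cref{thm:sas}, we use the Lipschitz/perturbation bound of KANs in their parameters on the bounded region. This shows that every $\btheta$ with support equal to $\mathrm{supp}(\btheta^\star)=:\gamma^\star$ and $\|\btheta-\btheta^\star\|_\infty\le \eta_n$ (with $\eta_n$ polynomially small in $n$) satisfies $\|\operatorname{clip}\circ f_{\btheta}-f_0\|_{L^2(P_X)}\lesssim\epsilon_n$. Combined with \eqref{assump:B3} for $\sigma^2$, this places such $\btheta$ in the KL neighbourhood. Under \eqref{eq:ber_sas_prior},
\[
\Pi(\bgamma=\gamma^\star)=\rho_n^{|\gamma^\star|}(1-\rho_n)^{T_n-|\gamma^\star|},
\]
and we may pad $\gamma^\star$ to any $s\le S_n$ active coordinates. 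From \eqref{eq:ber_sas_rho}, $\rho_n\le S_n T_n^{-1} n^{-c}\le 1/2$, so $(1-\rho_n)^{T_n}\ge e^{-2T_n\rho_n}\ge e^{-2S_n}$. Also $\rho_n\gtrsim T_n^{-1}$ gives $\rho_n^{|\gamma^\star|}\ge e^{-C S_n\log n}$. The slab factor is bounded using \eqref{assump:B1} by $(2\eta_n)^{S_n}e^{-c_1 S_n\log n}\ge e^{-C S_n\log n}$, exactly as before. Together these give (i).

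\emph{Sieve.} Take
\[
\cF_n=\{f_{\btheta}:\|\btheta\|_0\le \bar S_n,\ \|\btheta\|_\infty\le \bar B_n\},
\]
with $\bar S_n = K S_n$ for a large constant $K$ and $\bar B_n$ a polynomial in $n$. The sparse-KAN entropy bound used for \cref{thm:sas} gives $\log\cN\lesssim \bar S_n \log(n \bar B_n T_n)\lesssim n\epsilon_n^2$, so (ii) holds. For (iii), split the complement into a sparsity part and a magnitude part.

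\emph{Sparsity part (main obstacle).} This is the genuinely new piece, since here $\|\bgamma\|_0$ is random rather than fixed at $S_n$. Use the binomial tail bound
\[
\Pi(\|\bgamma\|_0>\bar S_n)\le \binom{T_n}{\bar S_n}\rho_n^{\bar S_n}\le \Big(\frac{eT_n\rho_n}{\bar S_n}\Big)^{\bar S_n}=\exp\Big(-\bar S_n\Big[\log\frac{S_n}{T_n\rho_n}+\log\frac{K}{e}\Big]\Big).
\]
By the second condition in \eqref{eq:ber_sas_rho} this is at most $e^{-c K S_n\log n}$, and choosing $K$ large beats $e^{-(C+4)n\epsilon_n^2}$. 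One must check that the constant $C$ from step (i) does not depend on $K$; it does not, since step (i) only uses $\gamma^\star$.

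\emph{Magnitude part.} On $\{\|\bgamma\|_0\le\bar S_n\}$, a union bound with \eqref{assump:B2} gives
\[
\Pi(\|\btheta\|_\infty>\bar B_n)\le T_n c_2\exp\big[-c_3(\bar B_n/\tau_n)^\alpha\big],
\]
which is negligible once $\bar B_n=\tau_n (n\epsilon_n^2)^{2/\alpha}$. Since $\tau_n\lesssim B_n^\star$ is polynomial in $n$, the entropy bound stays $\lesssim n\epsilon_n^2$. With (i)--(iii) in place, the conclusion \eqref{eq:convergence} follows verbatim from the final step of the proof of \cref{thm:sas}, including the handling of $|\sigma^2-\sigma_0^2|$ via the Hellinger-to-$L^2$ comparison under clipping and \eqref{assump:B3}.
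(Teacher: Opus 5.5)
Your proposal is correct and follows essentially the same route as the paper. It uses the same sieve with inflated sparsity $\overline{S}_n = K S_n$ plus magnitude truncation, and the same prior-mass bound on $\rho_n^{s}(1-\rho_n)^{T_n-s}$ via $\rho_n \gtrsim T_n^{-1}$ and $T_n\rho_n \le S_n n^{-c}$. It also uses the same binomial tail bound $(eT_n\rho_n/\overline{S}_n)^{\overline{S}_n}$, which you obtain from a union bound over supports instead of Chernoff, and a union bound with \eqref{assump:B2} for the magnitude part; the remaining differences (using the exact support of $\btheta^\star$ rather than padding to $S_n$, and the larger truncation $\overline{B}_n=\tau_n(n\epsilon_n^2)^{2/\alpha}$) are cosmetic.
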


\begin{proof}
    See \Cref{app:main-ber-proof}.
\end{proof}

\begin{remark}  \label{rem:ber_sas_rho}
Condition \cref{eq:ber_sas_rho} is satisfied, for instance, by inclusion probabilities $\rho_n$ such that, for some
    $0<c_\rho<1/(2\tbfs+1)$,
$
        N_n^{-(2+\beta)}
        \lesssim
        \rho_n
        \lesssim
        N_n^{-(1+\beta)} n^{-c_\rho}.
$
    On the other hand, $\rho_n\asymp T_n^{-1}$ is a particularly simple choice.
    In this case, $T_n\rho_n\asymp1$ and
$
        \log\frac{S_n}{T_n\rho_n}
        \asymp
        \log S_n
        \asymp
        \log N_n
        \asymp
        \log n,
$
    so \cref{eq:ber_sas_rho} is satisfied.
\end{remark}

\subsection{Adaptive spike-and-slab prior}
A key question in adaptive nonparametric estimation is whether a single prior can automatically select a model  suited to the unknown smoothness while still attaining the near-minimax contraction rate on each smoothness class.
The architecture \eqref{eq:sas_arch} depends on $\bfs$-and, through $\beta$, on $p$ as well - so the corresponding sparse Bayesian KAN requires oracle knowledge of the regularity of $f_0$.  We now relax this by placing a hyperprior on a single model-size parameter $N$  and controlling the remaining architectural quantities through conservative upper bounds that cover a prespecified smoothness envelope.

Let the subset of smoothness be denoted by $\mathfrak{S} \subset \bbR_{++}^d$, which is a smoothness envelope that we aim for adaptation.
And let
$
    \tilde s_{\min} := \inf_{\bfs\in\mathfrak{S}} ( \sum_{j=1}^d s_j^{-1} )^{-1} > \omega
$,
$
    s_{\min} := \inf_{\bfs\in\mathfrak{S}} \min_{1\le j\le d} s_j > 0
$
where $\omega = (1/p - 1/2)_+$, and also,
$
    \beta_{ad} := (1 + \kappa_{ad}) \max\left\{ (1/p-\tilde s_{\min})_+,\, s_{\min}^{-1} \right\}
$
where $\kappa_{ad} := 2 \omega / (\tilde s_{\min} - \omega)$ if $p < 2$, and $\kappa_{ad} > 0$ is fixed arbitrary if $p \geq 2$. Now for each $N \in \bbN$, which is a model size parameter, we define the adaptive sparse KANs architecture  by
\begin{equation*}
    L_0 = 3 + 2 \lceil \log_2 d \rceil, \quad D(N) = 2d N, \quad G(N) = 4 H(N) - 2m, \quad H(N) = \lceil 2B^\star (N) \rceil.
\end{equation*}
where $B^\star(N) := B_{ad} N^{\beta_{ad}}$ with $B_{ad} > 0$ chosen large enough. Also, we denote the sparsity by $S(N) := \lceil S_0 N \rceil$ for $S_0 > 0$. And let the total number of parameters be $T(N) := T(L_0, D(N), G(N))$.

Building on the previous settings, the adaptive spike-and-slab  prior is defined as: 
\begin{align}   \label{eq:adaptive_sas_prior}
    \begin{split}
        \pi_N (N) &\propto \exp (- \lambda_N N \log N)  \\
        \pi (\btheta \mid \bgamma, N) &= \prod_{t=1}^{T(N)} \Bigl[ \gamma_t \widetilde{\pi}_{SL}(\theta_t \mid N) + (1-\gamma_t)\delta_0(\theta_t) \Bigr]   \\
        \pi (\bgamma \mid N) &= \binom{T(N)}{S(N)}^{-1} \bone\{\bgamma\in\{0,1\}^{T(N)},\ \norm{\bgamma}_0=S(N)\}   \\
        \pi(d\sigma^2) &= \pi_{\sigma^2}(\sigma^2)\,d\sigma^2
    \end{split}
\end{align}
Assume that the following conditions hold. These are conditional versions of \eqref{assump:B1}-\eqref{assump:B2}.
\begin{enumerate}[
    label=\textbf{(C\arabic*)},
    ref=C\arabic*
]
    \item   \label{assump:C1} There exists a constant $c_1>0$ such that for all $N$,
$
        \log\inf_{\abs{u}\le B^\star(N)+1}\widetilde{\pi}_{SL}(u | N)\ge -c_1\log N
$, $\lambda_N > 0$.

    \item \label{assump:C2} There exist constants $c_2, c_3 > 0$ and a sequence $\tau_N\lesssim N^{\beta_{\mathrm{ad}}}$ such that for all $t > 0$,
$
        \int_{\abs{u}>t}\widetilde{\pi}_{SL}(u \mid N)\,du \le c_2\exp\! \left[-c_3\left(\frac{t}{\tau_N}\right)^\alpha\right]
$.

    \item \label{assump:C3} $\pi_{\sigma^2}$ satisfies \eqref{assump:B3}.
\end{enumerate}

\begin{theorem}[Adaptive spike-and-slab prior]\label{thm:main-adaptive}
Suppose that the true function satisfies, for some unknown $\bfs=(s_1,\dots,s_d)\in\mathfrak{S}$,
and assume \eqref{assump:A1}--\eqref{assump:A4} and let the prior in \cref{eq:adaptive_sas_prior} satisfies \eqref{assump:C1}--\eqref{assump:C3}. Then the induced posterior contracts at the same rate $\epsilon_n$ in \eqref{eq:rate}; in particular, \eqref{eq:convergence} holds for $\epsilon_n$.
\end{theorem}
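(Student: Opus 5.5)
We prove \cref{thm:main-adaptive} with the general contraction theorem for hierarchical/sieve priors \citep{ghosal2000convergence, ghosal2007convergence}. It suffices to exhibit sieves $\cF_n$ and constants $C, C' > 0$ such that three conditions hold.
\begin{align*}
\text{(i)}\quad & \Pi\bigl(B_{KL}(f_0,\sigma_0;\epsilon_n)\bigr)\ge e^{-C n\epsilon_n^2},\\
\text{(ii)}\quad & \log \Covering{\epsilon_n, \cF_n, \norm{\cdot}_\infty}\lesssim n\epsilon_n^2,\\
\text{(iii)}\quad & \Pi(\cF_n^c)\le e^{-(C+4)n\epsilon_n^2}.
\end{align*}
Here $B_{KL}$ is the usual Kullback--Leibler neighborhood of $(f_0,\sigma_0)$. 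Since the likelihood is Gaussian and the clip is $1$-Lipschitz, KL and variance neighborhoods are controlled by $\|\operatorname{clip}\circ f_{\btheta}-f_0\|_{L^2(P_X)}+|\sigma^2-\sigma_0^2|$. The variance part is handled by \eqref{assump:C3} exactly as in the proof of \cref{thm:sas}.

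\textbf{Prior mass.} Let $\bfs\in\mathfrak{S}$ be the true smoothness and set $N_n^\star=\lceil C_N n^{1/(2\tbfs+1)}\rceil$. The architecture at $N=N_n^\star$ is the one in \eqref{eq:sas_arch}, except that the weight bound is $B^\star(N)=B_{ad}N^{\beta_{ad}}$ in place of $B_0N^{\beta}$. Since $\tbfs\ge\tilde s_{\min}$ and $\underline{\bfs}\ge s_{\min}$, we have $\beta\le\beta_{ad}$ (using $\kappa_{ad}>0$ when $p\ge2$). So for $B_{ad}$ large, the oracle approximator $\btheta^\star$ behind \cref{thm:sas} lies in the adaptive architecture. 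It has $\|\btheta^\star\|_\infty\le B^\star(N_n^\star)$, $\|\btheta^\star\|_0\le S(N_n^\star)$ and approximation error $\lesssim n^{-\tbfs/(2\tbfs+1)}$. Enlarging the grid range $H$ beyond what the oracle needs is harmless: the fixed-spacing grid still contains the required knots and extra coefficients are set to zero.

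Then $\Pi(\cdot)\ge\pi_N(N_n^\star)\,\Pi(\cdot\mid N_n^\star)$. The conditional prior-mass bound from the proof of \cref{thm:sas}, with \eqref{assump:C1} replacing \eqref{assump:B1} and $\log N\asymp\log n$, gives $\Pi(\cdot\mid N_n^\star)\ge e^{-cN_n^\star\log n}$. It uses the same parameter-perturbation Lipschitz bound for KANs, with $\log(T(N)B^\star(N))\lesssim\log n$. Also $-\log\pi_N(N_n^\star)\lesssim\lambda_N N_n^\star\log N_n^\star\lesssim n\epsilon_n^2$, because $N_n^\star\log n\asymp n^{1/(2\tbfs+1)}\log n=n\epsilon_n^2$.

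\textbf{Sieve and entropy.} Take $\cF_n=\bigcup_{N\le \bar N_n}\{f_{\btheta}: \btheta \text{ in architecture } N,\ \|\btheta\|_0\le S(N),\ \|\btheta\|_\infty\le \bar B_n\}$. Here $\bar N_n=\lfloor K N_n^\star\rfloor$ for a large constant $K$, and $\bar B_n=n^{c}$ for a suitable constant $c$. The covering bound established for \cref{thm:sas} applies to each $N$: the log covering number is $\lesssim S(N)\log(T(N)\bar B_n/\epsilon_n)$, with the Lipschitz constant polynomial in $n$ because the depth $L_0$ is fixed. Summing over at most $\bar N_n$ values of $N$ adds only $\log\bar N_n$, so the total is $\lesssim K N_n^\star\log n\lesssim n\epsilon_n^2$.

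\textbf{Sieve complement.} We bound $\Pi(\cF_n^c)\le\pi_N(N>\bar N_n)+\sum_{N\le\bar N_n}\pi_N(N)\,\Pi(\|\btheta\|_\infty>\bar B_n\mid N)$. The first term is $\lesssim e^{-\lambda_N \bar N_n\log \bar N_n}$, which is $\le e^{-(C+4)n\epsilon_n^2}$ once $K$ is large relative to $C$. The second term is at most $S(N)\,c_2\exp[-c_3(\bar B_n/\tau_N)^\alpha]$ by \eqref{assump:C2}. Since $\tau_N\lesssim \bar N_n^{\beta_{ad}}$, choosing $c$ large makes it super-polynomially small.

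\textbf{Main obstacle.} The delicate step is making the constant $C$ from the prior-mass bound compatible with $K$ in the complement bound. $C$ depends on $\lambda_N$, $C_N$ and $c_1$, but not on $K$, so $K$ can be fixed afterwards. Alongside this, we must check uniformly over $N\le\bar N_n$ that the entropy and Lipschitz constants grow only polynomially in $n$. This uniformity holds because the depth stays fixed at $L_0$.
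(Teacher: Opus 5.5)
Your proposal is correct and follows essentially the same route as the paper. The paper likewise lower-bounds the prior mass by $\pi_N(N_n)\,\Pi(\cdot\mid N=N_n)$, using the oracle of \cref{lem:approx} together with \eqref{assump:C1} and \eqref{assump:C3}; it takes as sieve a union over $N\le \overline{N}_0 N_n$, whose entropy is $\lesssim N_n\log n$ because the depth $L_0$ is fixed; and it splits the complement bound into the tail of $\pi_N$ and the slab tails controlled by \eqref{assump:C2}.

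The differences are minor. You use a uniform truncation $\bar B_n=n^c$ instead of the paper's $N$-dependent $\overline{B}_n(N)=\overline{B}_0N^{\beta_{ad}}(N_n\log n)^{1/\alpha}$. This works, but ``super-polynomially small'' is not by itself enough: the slab-tail term is $\exp(-c'n^{\gamma})$ with $\gamma=\alpha(c-\beta_{ad})$, and this beats $e^{-(C+4)n\epsilon_n^2}$ once $c>\beta_{ad}+1/\alpha$. You also explicitly check that $\beta\le\beta_{ad}$ and that enlarging $H$ preserves the oracle, a step the paper leaves implicit when it applies \cref{lem:approx} at $N=N_n$.
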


\begin{proof}
See \Cref{app:main-adaptive-proof}.
\end{proof}

\subsection{Extension to compositional anisotropic Besov spaces}

We now extend the main results to a compositional anisotropic Besov spaces, following the function-space definition of \citet{suzuki_deep_2021,lee2025posterior}. In this setting, the regression function admits a layered compositional structure in which each layer depends only on a low-dimensional subspace of its inputs \citep{schmidt-hieber_nonparametric_2020}, so that the effective approximation complexity is determined by the \emph{layerwise smoothness and intrinsic dimension} rather than by the ambient dimension $d$. 

\begin{definition}[Compositional anisotropic Besov space]
\label{def:comp_besov}
Let $J \in \bbN$ be the compositional depth and let
$
    \mathbf{d} = (d^{(0)}, d^{(1)}, \ldots, d^{(J)}) \in \bbN^{J+1}, \;
    d^{(0)} = d, \; d^{(J)} = 1,
$
be a sequence of dimensions. Let $\mathbf{t} = (t^{(1)}, \ldots, t^{(J)})$ be a
sequence of effective dimensions with $1 \leq t^{(j)} \leq d^{(j-1)}$, and let
$\bfs^{(1)}, \ldots, \bfs^{(J)}$ be layerwise smoothness vectors with
$\bfs^{(j)} \in \bbR_{++}^{t^{(j)}}$. The compositional anisotropic Besov space
$\cB^{\mathbf{d},\mathbf{t},\bfs}_{p,q}([0,1]^d)$ is the class of functions of
the form
$
    f = f_J \circ f_{J-1} \circ \cdots \circ f_1 \;:\; [0,1]^d \to \bbR
$
such that, for $j = 1, \ldots, J-1$,
$
    f_j = (f_{j,1}, \ldots, f_{j,d^{(j)}})
    \;:\; [0,1]^{d^{(j-1)}} \to [0,1]^{d^{(j)}}
$
and $f_J : [0,1]^{d^{(J-1)}} \to \bbR$. Moreover, for every $j \in [J]$ and
$k \in [d^{(j)}]$, there exist a subset $I_{j,k} \subset [d^{(j-1)}]$ with
$|I_{j,k}| = t^{(j)}$ and a function
$\tilde{f}_{j,k} \in \cB^{\bfs^{(j)}}_{p,q}([0,1]^{t^{(j)}})$ such that
$
    f_{j,k}(\bx) = \tilde{f}_{j,k}(\bx_{I_{j,k}}),
$
where $\bx_I$ denotes the subvector of $\bx$ indexed by $I$.
\end{definition}

For each layer $j \in [J]$, define the layerwise intrinsic smoothness and minimum
smoothness by
$
    \tilde{s}^{(j)} :=
    ( \sum_{i=1}^{t^{(j)}} (s_i^{(j)} )^{-1} )^{-1}
$,
$
    \underline{\bfs}^{(j)} := \min_{1 \leq i \leq t^{(j)}} s_i^{(j)}
$, 
$
    \overline{\bfs}^{(j)} := \max_{1 \leq i \leq t^{(j)}} s_i^{(j)}
$.
The rate of posterior contraction is governed by an intrinsic smoothness that
accounts for the propagation of approximation errors across layers.
For each $j \in [J]$, define
$
    t^{*(j)} := \frac{\underline{\bfs}^{(j)}}{\tilde{s}^{(j)}}$
$, 
    \tilde{s}^{*(j)} := \tilde{s}^{(j)}
    \prod_{k=j+1}^{J}
    \{ ( \underline{\bfs}^{(k)} - \frac{t^{*(k)}}{p} ) \wedge 1
    \}
$,
and let
$
    j^* := \argmin_{j \in [J]} \tilde{s}^{*(j)}
$, 
$
    t^* := t^{*(j^*)}
$, 
$
    \tilde{s}^* := \tilde{s}^{*(j^*)}
$
denote the intrinsic layer, intrinsic dimension, and intrinsic smoothness,
respectively.

\paragraph{Assumptions.}
We impose the following assumption on the compositional structure.
\begin{enumerate}[
    label=\textbf{(D\arabic*)},
    ref=D\arabic*
]
\item \label{assump:D1}
The true function satisfies
$f_0 \in \cB^{\mathbf{d},\mathbf{t},\bfs}_{p,q}([0,1]^d) \cap \UB$.
Furthermore,
$
    \tilde{s}^{(1)} > (1/p - 1/2)_+
$,
$
    \tilde{s}^{(j)} > 1/p \quad \text{for } j = 2, \ldots, J.
$
\item \label{assump:D2}
For each $ j\in[J] $,
$
0<  \overline{\bfs}^{(j)} <\min\{m,m-1+1/p\}
$,
and the embedding conditions in \eqref{assump:D1} hold.
\end{enumerate}

Under \eqref{assump:D1}, \eqref{assump:D2}, \eqref{assump:A2}, and \eqref{assump:A3},
there exist constants $L_{\mathrm{cp}} \in \bbN$ and
$D_{\mathrm{cp}}, S_{\mathrm{cp}}, B_{\mathrm{cp}}, \beta_{\mathrm{cp}} > 0$
such that the compositional architecture sequence is given by
\begin{equation}    \label{eq:comp_arch}
\begin{aligned}
    N_n^* &:= \bigl\lceil n^{1/(2\tilde{s}^*+1)} \bigr\rceil,
    \quad
    D_n^{\mathrm{cp}} := D_{\mathrm{cp}} N_n^*,
    \quad
    G_n^{\mathrm{cp}} := 4 H_n^{\mathrm{cp}} - 2m,  \\
    B_n^{\mathrm{cp},*} &:= B_{\mathrm{cp}} (N_n^*)^{\beta_{\mathrm{cp}}},
    \quad
    H_n^{\mathrm{cp}} := \lceil 2 B_n^{\mathrm{cp},*} \rceil,
    \quad
    S_n^{\mathrm{cp}} := \bigl\lceil S_{\mathrm{cp}} N_n^* \bigr\rceil.
\end{aligned}
\end{equation}
We use the spike-and-slab prior \cref{eq:sas_prior} with the compositional
architecture sequence in \cref{eq:comp_arch}, where the slab density satisfies
\eqref{assump:B1}--\eqref{assump:B2} with $B_n^\star = B_n^{\mathrm{cp},*}$
and $\tau_n \lesssim (N_n^*)^{\beta_{\mathrm{cp}}}$, and the variance prior
satisfies \eqref{assump:B3}.

\begin{theorem}[Compositional anisotropic Besov spaces]
\label{thm:composite}

Suppose \eqref{assump:D1}, \eqref{assump:D2}, \eqref{assump:A2}, and \eqref{assump:A3}, and that the architecture of the sparse Bayesian KAN is specified as in \eqref{eq:comp_arch} and its prior in \eqref{eq:sas_prior} satisfies \eqref{assump:B1}--\eqref{assump:B3}.
Then, for
$\epsilon_n^* := n^{-\tilde{s}^*/(2\tilde{s}^*+1)} (\log n)^{1/2}$
and any sequence $M_n \to \infty$,
\[
    \Pi\!\left( (\btheta,\sigma) :
    \| \operatorname{clip} \circ f_{\btheta} - f_0\|_{L^2(P_X)} + |\sigma^2 - \sigma_0^2|
    > M_n \epsilon_n^* \;\middle|\; \cD_n \right)
    \to 0
\]
in $P_{f_0,\sigma_0}^{(n)}$-probability as $n \to \infty$.
\end{theorem}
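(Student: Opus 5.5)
We follow the same route as for \cref{thm:sas}: the general posterior contraction theorem of \citet{ghosal2000convergence, ghosal2007convergence} for i.i.d.\ random-design Gaussian regression with unknown variance, applied with rate $\epsilon_n^*$. Three ingredients are needed. (i) A prior-mass bound $\Pi(B_{KL}((f_0,\sigma_0^2),\epsilon_n^*))\ge e^{-Cn(\epsilon_n^*)^2}$. (ii) A sieve $\cF_n$ with $\log \cN(\epsilon_n^*,\cF_n,\|\cdot\|_\infty)\lesssim n(\epsilon_n^*)^2$. (iii) A complement bound $\Pi(\cF_n^c)\le e^{-(C+4)n(\epsilon_n^*)^2}$. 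Because of clipping and the bounded support of $\pi_{\sigma^2}$ in \eqref{assump:B3}, the Hellinger/KL quantities are equivalent to $\|\operatorname{clip}\circ f-f_0\|_{L^2(P_X)}+|\sigma^2-\sigma_0^2|$. The variance part is therefore handled exactly as in the proof of \cref{thm:sas}, and only the network-dependent pieces change.

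\textbf{Approximation step (the main obstacle).} We need a compositional oracle. There should exist $f^\star=f_{\btheta^\star}\in KAN(L_{\mathrm{cp}},D_n^{\mathrm{cp}},G_n^{\mathrm{cp}},H_n^{\mathrm{cp}})$ with $\|\btheta^\star\|_0\le S_n^{\mathrm{cp}}$, $\|\btheta^\star\|_\infty\le B_n^{\mathrm{cp},*}$ and $\|f^\star-f_0\|_{L^2(P_X)}\lesssim n^{-\tilde s^*/(2\tilde s^*+1)}$. We build it layer by layer.

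\emph{Layer approximations.} For each $j$ and $k$ we invoke the anisotropic approximation result underlying \cref{thm:sas} on $[0,1]^{t^{(j)}}$. The first layer uses the $L^2$ bound. Layers $j\ge2$ use an $L^\infty$ bound, available because $\tilde s^{(j)}>1/p$ by \eqref{assump:D1}. Each $\tilde f_{j,k}$ is approximated by a fixed-depth sparse KAN $g_{j,k}$ of size $N_j$ with error $N_j^{-\tilde s^{(j)}}$, in $L^\infty$ for $j\ge 2$ and in $L^2$ for $j=1$, up to the usual embedding loss.

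\emph{Gluing.} The component KANs are placed in parallel, since widths add and sparsities add. They are then stacked, which adds a constant $L_{\mathrm{cp}}=\sum_j L_0(t^{(j)})$ to the depth. Between layers we insert a clipping to $[0,1]$. This clipping is realized exactly by a spline edge: it is piecewise linear, and degree-$m$ B-splines on the grid over $[-H,H]$ with $H\ge 2B$ reproduce it up to an identity-type correction. This keeps the intermediate values in the domain of the next layer's input grid.

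\emph{Error propagation.} Layer $k$ is Hölder-type continuous with exponent $(\underline{\bfs}^{(k)}-t^{*(k)}/p)\wedge1$, by the Besov embedding into Hölder spaces. The error from layer $j$ therefore propagates to the output as $N_j^{-\tilde s^{(j)}\prod_{k>j}\{(\underline{\bfs}^{(k)}-t^{*(k)}/p)\wedge1\}}=N_j^{-\tilde s^{*(j)}}$, and the total error is a sum over $j$. Taking all $N_j\asymp N_n^*$ makes the worst term $(N_n^*)^{-\tilde s^*}$, while the total sparsity stays $\lesssim N_n^*$. The weight bound is the maximum of the per-layer bounds, which is polynomial in $N_n^*$; this defines $\beta_{\mathrm{cp}}$.

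The delicate point is the propagation argument. For $j=1$ it needs an $L^2$-to-output transfer, which we obtain by bounding with the Hölder modulus composed with Jensen, using the bounded density $p_X$ from \eqref{assump:A2}. It also needs the grid range $H$ to cover the outputs of every intermediate layer.

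\textbf{Prior mass, entropy and sieve.} These steps are reused essentially verbatim from \cref{thm:sas}, since they depend only on the fixed depth, the width, the grid size, the sparsity and the weight bound. First we bound the Lipschitz perturbation of a KAN in its parameters. With fixed depth $L_{\mathrm{cp}}$ and coefficients bounded by $B$, a sup-norm perturbation of size $\delta$ of the parameters changes $f_{\btheta}$ by at most $(C B D G)^{L_{\mathrm{cp}}}\delta$, which is polynomial in $n$.

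\emph{Prior mass.} The prior selects the oracle support with probability $\binom{T_n}{S_n}^{-1}\ge e^{-CS_n\log n}$. Each active slab coordinate falls in a polynomially small ball with probability $\ge n^{-C}$ by \eqref{assump:B1}. Together these give mass $e^{-CS_n^{\mathrm{cp}}\log n}=e^{-Cn(\epsilon_n^*)^2}$, because $N_n^*\log n\asymp n^{1/(2\tilde s^*+1)}\log n=n(\epsilon_n^*)^2$.

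\emph{Sieve and entropy.} The sieve is the set of $S_n^{\mathrm{cp}}$-sparse networks with $\|\btheta\|_\infty\le \bar B_n=n^{c}$. Its entropy is bounded by $S_n^{\mathrm{cp}}\log(T_n\bar B_n n)\lesssim n(\epsilon_n^*)^2$.

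\emph{Complement.} The complement mass is at most $T_n c_2\exp[-c_3(\bar B_n/\tau_n)^\alpha]$ by \eqref{assump:B2}, which is negligible once $c$ is large enough relative to $\beta_{\mathrm{cp}}$.

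Combining (i)–(iii) in the general theorem yields the stated contraction at rate $\epsilon_n^*$ for any $M_n\to\infty$.
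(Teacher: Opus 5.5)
\textbf{Where your proposal matches the paper.} Your overall route is the paper's. You use the three conditions of \cref{lem:sufficient}, and a compositional oracle as in \cref{lem:comp_approx}: $L^2$ approximation in the first layer, $L^\infty$ for $j\ge2$, telescoping with the H\"older exponents $\Gamma_j$, and Jensen for $j=1$. The prior-mass, entropy and complement bounds are copied from \cref{thm:sas}. Your truncation $\bar B_n=n^c$ in place of $\overline{B}_{\mathrm{cp}}(N_n^*)^{\beta_{\mathrm{cp}}+1/\alpha}(\log n)^{1/\alpha}$ is harmless. It keeps $\log\bar B_n$ and the log-Lipschitz constant at $O(\log n)$. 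It also gives $(\bar B_n/\tau_n)^\alpha\gtrsim n^{\alpha c-\alpha\beta_{\mathrm{cp}}/(2\tilde s^*+1)}$, which beats $n(\epsilon_n^*)^2$ once $c>(\beta_{\mathrm{cp}}+1/\alpha)/(2\tilde s^*+1)$.

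\textbf{The gap: exact clipping is impossible.} The step that fails is your gluing claim that clipping to $[0,1]$ is realized exactly by a spline edge.
\begin{itemize}
\item Whenever $J\ge2$, \eqref{assump:D1}--\eqref{assump:D2} force $m\ge2$, since $1/p<\tilde s^{(j)}\le\overline{\bfs}^{(j)}<m-1+1/p$ for some $j\ge 2$.
\item Degree-$m$ B-splines on simple knots are $C^{m-1}$ on all of $\mathbb{R}$, and $\mathrm{silu}$ is smooth. So every edge, and hence every KAN of any depth, is $C^1$.
\item A $C^1$ function cannot reproduce the kinks of $y\mapsto\min\{1,\max\{0,y\}\}$ at $0$ and $1$. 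Reproducing a piecewise-linear map would need knots of multiplicity $m$ there, which the fixed-knot grid excludes.
\item No ``identity-type correction'' rescues this. A $C^1$ map $\rho:\mathbb{R}\to[0,1]$ with $\rho=\mathrm{id}$ on $[0,1]$ would have $\rho'(1)=1$, so it would exceed $1$ just to the right of $1$.
\end{itemize}
You use the clipping precisely to keep $G_{j-1,N}(x)$ inside the cube where $g_{j,N}$ is controlled and $F_{0,j}$ is H\"older. The first layer is controlled only in $L^2$ and can make large excursions. So your propagation step is unsupported as written. The paper's \cref{lem:comp_approx} simply composes $g_{J,N}\circ\cdots\circ g_{1,N}$ with no range control, so it does not supply this step either. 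Your instinct that range control is needed is right; only the mechanism is wrong.

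\textbf{A repair within the sparsity budget.}
\begin{itemize}
\item After each inner layer (not the last), insert a fixed, compactly supported spline edge $\rho$ with $\rho=\mathrm{id}$ on $[0,1]$ and range in a fixed interval $Q\supset[0,1]$. This needs $O(1)$ nonzero coefficients of size $O(1)$. Because $\rho$ vanishes outside $[-H,H]$ exactly where the B-splines do, $H$ need not cover the raw intermediate outputs.
\item Replace each true layer $f_{0,k}$ by $\bar f_{0,k}:=\rho\circ Ef_{0,k}$, applied componentwise. Here $E$ is a Besov extension of the $\tilde f_{k,i}$ to $Q^{t^{(k)}}$. These maps agree with $f_{0,k}$ on the unit cube, are H\"older on $Q^{t^{(k)}}$, and map into $Q$.
\item For $k\ge2$, approximate the extensions in $L^\infty(Q^{t^{(k)}})$ and run the telescoping on $Q$.
\item In the first layer, use $|\rho(g_{1,N})-f_{0,1}|\le\mathrm{Lip}(\rho)\,|g_{1,N}-f_{0,1}|$ together with Jensen. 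This preserves the rate $(N_n^*)^{-\tilde s^*}$ and changes $L_{\mathrm{cp}}$, $S_{\mathrm{cp}}$ and $B_{\mathrm{cp}}$ only by constants.
\end{itemize}
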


\begin{proof}
    See \Cref{app:composite-proof}.
\end{proof}

\begin{remark}
The rate $\epsilon_n^* = n^{-\tilde{s}^*/(2\tilde{s}^*+1)}(\log n)^{1/2}$ depends
only on the intrinsic smoothness $\tilde{s}^*$, not on the ambient dimension $d$.
When $t^{(j)} \ll d^{(j-1)}$ for each layer, the curse of dimensionality is
substantially mitigated. Additive models $f(\bx) = \sum_{i=1}^d g_i(x_i),~g_i \in \cB^{s_i}_{p,q}([0,1])$ with $s_i > 1/p$ and
low-dimensional projection models $f(\bx) = g(\bA\bx + \bb),~ g \in \cB^{\bfs}_{p,q}([0,1])$ with $\tilde{s} > 1/p$ for
$\bA \in \bbR^{t \times d}$ arise as special cases with $t^{(j)} = 1$ and
$J = 2$, respectively \citep{lee2025posterior}.
\end{remark}

\section{Conclusion and discussion}
In this paper, we established posterior contraction rates for sparse Bayesian KANs over anisotropic Besov spaces.
We showed that sparse Bayesian KANs equipped with spike-and-slab priors achieve near-minimax posterior contraction rates, with the rate governed by the intrinsic anisotropic smoothness of the underlying function.
Moreover, by introducing a hyperprior on a single model-size parameter, we demonstrated that the posterior can adapt to unknown smoothness while still attaining the corresponding near-optimal rate.
We further extended our analysis to compositional Besov spaces, showing that sparse Bayesian KANs attain near-minimax optimal rates determined by the layerwise smoothness and effective dimensions of the underlying compositional structure. 

A key insight of our analysis is that, in contrast to standard ReLU-based architectures, the depth of KANs can be kept fixed, while approximation complexity is controlled through spline resolution, width, and sparsity. This highlights a fundamentally different mechanism for achieving statistical efficiency: rather than relying on increasingly deep architectures, KANs leverage flexible spline-based edge functions and sparse representations to capture anisotropic regularity. Our results show that this structural advantage translates into optimal statistical efficiency in a Bayesian framework.

There are several future directions to explore. First, while our analysis focuses on fixed-knot spline constructions, it would be of interest to study adaptive or data-driven knot placement, which could further enhance approximation efficiency. Second, it would be of interest to devlope a theory for  generative models in which the generator or the score function is modeled by the KAN architecture.  Third, although our theory captures statistical optimality, understanding the computational aspects of posterior inference for sparse KANs including scalable variational or MCMC methods, remains an important problem.

\newpage

% \section*{References}

% References follow the acknowledgments in the camera-ready paper. Use unnumbered first-level heading for
% the references. Any choice of citation style is acceptable as long as you are
% consistent. It is permissible to reduce the font size to \verb+small+ (9 point)
% when listing the references.
% Note that the Reference section does not count towards the page limit.
% \medskip

{
\small
\bibliographystyle{custom}
\bibliography{refs}
}

%%%%%%%%%%%%%%%%%%%%%%%%%%%%%%%%%%%%%%%%%%%%%%%%%%%%%%%%%%%%
\clearpage

\appendix

\section{Preliminaries}
We introduce the notation and definitions needed to understand the proofs in this appendix. Let $(L, \bD, \bG, H)$ be the structure of KANs as defined in \cref{eq:KANs_architecture} where $L \in \bbN$, $\bD \in \bbN^{L + 1}$, $\bG \in \bbN^{L}$, $H > 0$, and with parameters $\boldsymbol{\theta} = (\theta_{i,j,k}^{(l)})_{i,j,k,l}$.
\begin{equation*}
KAN_c(L, \bD, \bG, H ; m)
    := \left\{ f_{\boldsymbol{\theta}} \in KAN(L, \bD, \bG, H ; m)
        \mid
        \theta_{i,j,0}^{(l)} \equiv 0 \; \forall l, i, j
        \right\}
\end{equation*}
The class $KAN_c$ consists of KANs whose activation functions are constructed only from the spline components, without the silu activation. In this work, we analyze posterior contraction of the model under common width and grid size. That is, when the width and grid are given by $\bD = (d, D, D, \ldots, 1)$ and $\bG = (G_0, G, \ldots, G)$ where $G_0$ fixed, respectively, we write, without boldface notation,
\begin{align*}
\begin{split}
    KAN(L, D, G, H; m) &:= KAN(L, \bD, \bG, H ; m), \\
    KAN_c(L, D, G, H; m) &:= KAN_c (L, \bD, \bG, H ; m)
\end{split}
\end{align*}
for simplicity.

%%%%%%%%%%%%%%%%%%%%%%%%%%%%%%%%%%%%%%%%%%%%%%%%%%%%%%%%%%%%

\section{Auxiliary results}
\label{sec:auxiliary_results}

\subsection{Approximation theorem}
\label{subsec:approximation_theorem}

\begin{lemma}[Approximation by fixed-knot $KAN_c$]\label{lem:approx}
    Assume \eqref{assump:A1} and \eqref{assump:A4}. Then there exist
    $S_0,B_0>0$, independent of $N$, such that for sufficiently large $N\in\bbN$, there exists
    $f_N^\star$ satisfying
    \[
    f_N^\star
    \in
    KAN_c\bigl(L_0, D(N) , G(N), H(N);m\bigr),
    \qquad
    L_0 = 3 + 2\lceil \log_2 d\rceil.
    \]
    This function satisfies
    \begin{equation}\label{eq:approx-main}
        \norm{f_0-f_N^\star}_{L^2}
        \lesssim
        N^{-\tbfs}
    \end{equation}
    Moreover, there exists a parameter vector $\btheta_N^\star$ realizing $f_N^\star$ such that
    \[
    \norm{\btheta_N^\star}_\infty\le B^\star(N),
    \qquad
    \norm{\btheta_N^\star}_0\le S_0N
    \]
    Here,
    \[
        D(N) := 2d N,
        \quad
        G(N) := 4 H(N) - 2m,
        \quad
        H(N) := \lceil 2B^\star(N) \rceil,
        \quad
        B^\star(N) := B_0 N^\beta,
    \]
    and
    \begin{align}   \label{eq:beta}
    \begin{split}
        \beta
        &:= 
        (1+ \kappa) \max\{(1/p-\tbfs)_+,\,1/\underline{\bfs}\}  \\
        \kappa
        &:= \begin{cases}
            \frac{2 \omega}{\tbfs - \omega},    & p < 2 \\
            0,                                  & p \geq 2
        \end{cases}
    \end{split}
    \end{align}
    and $\omega := (1/p - 1/2)_+$.
\end{lemma}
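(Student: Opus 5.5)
The plan is to follow the standard route for anisotropic Besov approximation: first approximate $f_0$ by a sparse combination of tensor-product cardinal B-splines, then realize that combination exactly inside $KAN_c(L_0,D(N),G(N),H(N);m)$, keeping track of how many coefficients are nonzero and how large they are.

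\emph{Step 1 (best $N$-term approximation).} I would use the anisotropic quasi-interpolant / $N$-term approximation result for $\cB^{\bfs}_{p,q}$ in the form used by \citet{suzuki_deep_2021} and \citet{lee2025posterior}. It should give $f_N=\sum_{(k,\mathbf j)\in E_N}\alpha_{k,\mathbf j}\,M_{k,\mathbf j}$ with $|E_N|\lesssim N$ and
\[
M_{k,\mathbf j}(\bx)=\prod_{i=1}^d M_m\bigl(2^{\lfloor k\tbfs/s_i\rfloor}x_i-j_i\bigr),
\]
where $M_m$ is the degree-$m$ cardinal B-spline. The approximation error is $\|f_0-f_N\|_{L^2}\lesssim N^{-\tbfs}$, using $\tbfs>(1/p-1/2)_+$ and $\overline{\bfs}<\min\{m,m-1+1/p\}$ from \eqref{assump:A4}. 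For $p\ge2$ only the linear part is needed, with levels $k\le K$ and $2^K\asymp N$. For $p<2$ the adaptive part must be truncated at a maximal level $K^*$ with $2^{K^*}\asymp N^{1+\kappa}$, where $\kappa=2\omega/(\tbfs-\omega)$ is chosen so that the discarded tail is still $O(N^{-\tbfs})$. The output of this step is two bounds:
\[
\max_i 2^{\lfloor k\tbfs/s_i\rfloor}\lesssim N^{(1+\kappa)/\underline{\bfs}},
\qquad
|\alpha_{k,\mathbf j}|\lesssim N^{(1+\kappa)(1/p-\tbfs)_+}.
\]
Both are dominated by $B_0N^\beta$ with $\beta$ as in \eqref{eq:beta}.

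\emph{Step 2 (exact KAN realization; depth count $L_0=3+2\lceil\log_2 d\rceil$).} I would set every silu weight to zero and use one block of $2d$ hidden nodes per term of $E_N$, which gives width $2dN$.

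\begin{enumerate}
\item \emph{Layer $\Phi_0$.} The edge from $x_i$ outputs the affine function $u=2^{k_i}x_i-j_i$ on $[0,1]$. Degree-$m$ B-splines reproduce polynomials of degree at most $m$, so this is done on the fixed grid with $G_0+m=O(1)$ coefficients. Their size is $\lesssim 2^{k_i}+|j_i|\le B^\star(N)$.
\item \emph{Layer $\Phi_1$.} The knots live on $[-H,H]$ with spacing $2H/(G+2m)=1/2$ (since $G(N)=4H(N)-2m$). Every knot sits at a half-integer, so after a harmless rescaling $M_m(u)$ is a fixed finite combination of $O(1)$ consecutive basis functions $B_{k,m}$. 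The choice $H=\lceil 2B^\star\rceil$ ensures that every input range $[-j_i,2^{k_i}-j_i]$ lies inside $[\xi_0,\xi_G]$, so no boundary effects arise.
\item \emph{Product layers.} The $d$ univariate factors, each in $[0,1]$, are multiplied pairwise in a binary tree of depth $\lceil\log_2 d\rceil$ using $xy=\tfrac14\bigl((x+y)^2-(x-y)^2\bigr)$. Each multiplication uses two layers: the first forms $x\pm y$ through identity edges, realized by linear reproduction; the second applies the squaring, which is exact for $m\ge2$ on a bounded interval. The second node of each block carries $x-y$ or a passthrough. This accounts for $2\lceil\log_2 d\rceil$ layers.
\item \emph{Final layer.} The last layer multiplies each block by $\alpha_{k,\mathbf j}$ and sums the blocks.
\end{enumerate}

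Every edge needs only $O(1)$ nonzero spline coefficients, namely those whose supports meet the bounded range the edge actually sees. So $\|\btheta_N^\star\|_0\le S_0N$ and $\|\btheta_N^\star\|_\infty\le B^\star(N)$, with $S_0$ and $B_0$ depending only on $d,m,\bfs,p$.

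\emph{Expected main obstacle.} The delicate part is Step 1 for $p<2$. One needs the $N$-term bound with an explicit cap on the resolution level and on the coefficients, so that both the weight bound and the grid range grow only polynomially, as $N^\beta$. I would adapt the tail estimate of \citet{suzuki_deep_2021}: sum the $\ell^p$-controlled coefficient tails over levels above $K^*$ and use the embedding $\cB^{\bfs}_{p,q}\hookrightarrow\cB^{\bfs-\omega}_{2,q}$. This should produce exactly the exponent $(1+\kappa)$.

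A secondary check is in Step 2. One must verify that the intermediate values stay in ranges where the local spline representations are exact. In particular, the products remain in $[0,1]$ and the sums $x\pm y$ remain in $[-1,2]$, which lie well inside $[-H,H]$.
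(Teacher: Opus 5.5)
Your route is the paper's own. It uses the Suzuki--Nitanda $N$-term tensor B-spline expansion, an affine first layer, B-spline evaluation on the spacing-$1/2$ grid, $2\lceil\log_2 d\rceil$ squaring layers and a final linear layer. Your range checks for $x\pm y$ and your remark that each edge needs only $O(1)$ active coefficients are more careful than the paper's. The gap is the dilation bound at the end of Step 1. With your indexing, $k\le K^*$ and $2^{K^*}\asymp N^{1+\kappa}$ give
\[
\max_{i,\;k\le K^*}2^{\lfloor k\tbfs/s_i\rfloor}\asymp N^{(1+\kappa)\tbfs/\underline{\bfs}},
\]
and this is $\lesssim N^{(1+\kappa)/\underline{\bfs}}$ only when $\tbfs\le 1$. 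The estimate is not slack. On $[0,1]$ one has $\sum_l B_{l,m}=1$, so a first-layer edge with weights $\btheta$ satisfies $\sup_{[0,1]}|u|\le\|\btheta\|_\infty$. The map $u=2^{k_i}x_i-j_i$ sweeps an interval of length $2^{k_i}$, so its weights are at least $2^{k_i-1}$. Since the second-layer spacing is frozen at $1/2$, your construction has no other way to reach mesh $2^{-k_i}$.

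A concrete failure: $d=1$, $s=2$, $p=q=2$, $m=3$ satisfies \eqref{assump:A1} and \eqref{assump:A4} with $\beta=1/2$. But the level-$K$ expansion with $L^2$ error $N^{-2}$ needs $2^K\asymp N$, hence first-layer weights $\gtrsim N\gg B_0N^{1/2}$. More generally, in the isotropic case you need dilation $N^{1/d}$ while $\beta=1/s$ for $p\ge2$, so the argument breaks whenever $s>d$.

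The paper's proof has the same hole, hidden by its indexing $2^{\lfloor k s_i^{-1}\rfloor}$ and the assertion $N(K)\ge 2^K$. With that indexing, level $K$ carries $\asymp 2^{K/\tbfs}$ basis functions and has error $2^{-K}$. So $N\asymp 2^{K/\tbfs}$, and $N\ge 2^K$ again forces $\tbfs\le 1$.

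The repair is cheap. Replace $1/\underline{\bfs}$ in $\beta$ by $\tbfs/\underline{\bfs}$, or by $\max\{1,\tbfs\}/\underline{\bfs}$ if you want to keep the stated exponent when $\tbfs\le 1$. Alternatively, add the hypothesis $\tbfs\le1$. Downstream, $\beta$ enters only through the polynomial growth of $B^\star$, $H$, $G$ and $T$ (hence only through logarithms) and through the tuning of the slab scale $\tau_n$. The contraction theorems therefore survive with the corrected exponent.
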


\begin{proof}
The starting point of the proof is the cardinal B-spline approximation for anisotropic Besov functions in Lemma 2 of \citet{suzuki_deep_2021}. Taking $r=2$ in Lemma 2, we have
\[
\omega=(1/p-1/2)_+<\tbfs.
\]
Hence, for sufficiently large  $N\in\bbN$, there exist an index set $E_N\subset\{(k,j) : k \leq K^\ast,~ j \in \mathbb{Z}^d\}$ with $K^\ast = \lceil (1+\kappa) K\rceil$ and $K \leq \log_2 N$ such that
\begin{equation}\label{eq:app-approx-spline}
f_N^{\mathrm{sp}}(x)
:=
\sum_{(k,j)\in E_N}\alpha_{k,j}M_{k,j}^{d,m}(x),
\qquad
\abs{E_N}\le N,
\end{equation}
and
\begin{equation}\label{eq:app-approx-rate}
\norm{f_0-f_N^{\mathrm{sp}}}_{L^2}
\lesssim
N^{-\tbfs}
\end{equation}
hold. Here,
\[
M_{k,j}^{d,m}(x)
=
\prod_{i=1}^d
\BS_m\!\left(2^{\lfloor ks_i^{-1}\rfloor}x_i-j_i\right)
\]
is the product of coordinate-wise cardinal B-splines.

We now show that the spline approximant in \cref{eq:app-approx-spline} can be realized by a fixed-knot $KAN_c$ with the claimed architecture. Note that
\begin{equation*}
    z_{k,j,i} (x_i) = 2^{\lfloor ks_i^{-1}\rfloor}x_i-j_i
\end{equation*}
is an affine transformation of $x_i \in [0, 1]$ and this can be represented by the B-spline curve of degree $m \geq 1$ \citep{de2001practical}. We construct each edge in the first layer $\Phi^{\star (0)}$ as follows. Choose an injective indexing map $\nu_0: E_N \times [d] \rightarrow [D(N)]$. There exists a B-spline curve $\phi^{\star(0)}_{\nu_0(k,j,i), i}$ that represents an affine map $z_{k,j,i}$ such that
\begin{equation*}
 (\Phi^{\star (0)} (\bx) )_{\nu_0 (k,j,i)} = \phi^{\star(0)}_{\nu_0(k,j,i), i} (x_i) := \sum_{l=1}^{G_0 + m} \theta_{\nu_0(k,j,i), i, l}^{\star (0)} B_{l, m} (x_i; \bxi^{(0)} ) = z_{k,j,i} (x_i)
\end{equation*}
with some coefficients $\theta_{\nu_0(k,j,i), i, l}^{\star (0)}$. Note that $G_0$ is fixed and $[\xi^{(0)}_0, \xi_{G_0}^{(0)}] \supseteq [0, 1]$. From Lemma 2 of \citet{suzuki_deep_2021},
\begin{equation*}
    \max_{l} | \theta_{\nu_0(k,j,i), i, l}^{\star (0)} | \lesssim 2^{\lfloor ks_i^{-1}\rfloor}
    + | j_i | + 1
    \lesssim 2^{\lfloor ks_i^{-1}\rfloor}
    \lesssim N^{( 1 + \kappa ) / \underline{\bfs}} \lesssim N^{\beta}.
\end{equation*}
Therefore, we can let $H(N) \lesssim N^\beta$ for the subsequent layer. Also, $\Phi^{(0)} : \bbR^d \rightarrow \bbR^{D(N)}$ with $D(N) = 2d N$ because required  hidden nodes are $d | E_N | \leq 2dN$.

Next we construct the second layer $\Phi^{\star(1)} : \bbR^{D(N)} \rightarrow \bbR^{D(N)}$. Choose an injective index map $\nu_1 : E_N \times [d] \rightarrow [D (N)]$. Let $g_{k,j,i}(z) := \BS_m(z)$. Our aim is to implement $\psi_m$ in the second layer of $KAN_c$. Note that each cardinal B-spline 
\[
g_{k,j,i}(z_{k,j,i} (x_i)) = \BS_m (2^{\lfloor ks_i^{-1}\rfloor}x_i-j_i), \; z_{k,j,i} (x_i) \in [-H(N), H(N) ]
\]
can be represented by the single edge $\phi^{\star(1)}_{\nu_1(k,j,i), \nu_0(k,j,i)}$ in $\Phi^{\star (1)}$ by choosing the knot $\bxi^{(1)} \in [-H(N), H(N)]^{G_1 + 2m + 1}$ with spacing
\[
    \Delta = \frac{2H}{G_1 + 2m} = \frac{1}{2}.
\]
Thus, $\BS_m \in \text{span} \{ B_{l,m} (\cdot ; \bxi^{(1)}) \}_{l=1}^{G_1 + m}$. Therefore,
\begin{equation*}
    \left( \Phi^{\star (1)} \circ \Phi^{\star (0)} (\bx) \right)_{\nu_1(k,j,i)} = \phi^{\star(1)}_{\nu_1(k,j,i), \nu_0(k,j,i)} ( z_{k,j,i}(x_i)) = \psi_m(z_{k,j,i}(x_i)).
\end{equation*}
Therefore, the construction of $\Phi^{\star(1)}$ is done.

In each $(k, j)$-block, the outputs from the previous step are
\[
g_{k,j,i} \left( 2^{\lfloor ks_i^{-1}\rfloor}x_i-j_i \right) = \BS_m \left( 2^{\lfloor ks_i^{-1}\rfloor}x_i-j_i \right) \in [0, 1].
\]
Their product
\[
\prod_{i=1}^d g_{k,j,i}(z_{k,j,i}(x_i))= \prod_{i=1}^d \psi_m \left( 2^{\lfloor ks_i^{-1}\rfloor}x_i-j_i \right) =M_{k,j}^{d,m}(\bx)
\]
can be computed exactly by a binary-tree product module. Specifically, since $u\mapsto u$ and $u\mapsto u^2$ can be represented exactly by a fixed-knot $KAN_c$ edge, the identity
\[
xy=\frac{(x+y)^2-x^2-y^2}{2}
\]
allows us to implement the product of two variables exactly. Repeating this construction in a binary-tree fashion yields a module computing $\prod_{i=1}^d g_{k,j,i}(x)$ with depth $2\lceil\log_2 d\rceil$, width $O(d)$, and sparsity $O(d)$. Moreover, since the coefficients and the number of splines in this product module depend only on $d,m$ and not on $(k,j)$ or $N$, the same module can be replicated in parallel for all active indices. The related constructions in \citet{schmidt-hieber_nonparametric_2020,kratsios2026approximation} support that such a product-module design is standard.

Finally, taking a linear combination of the outputs of the above blocks with coefficients $\alpha_{k,j}$ in the final layer computes exactly
\[
\sum_{(k,j)\in E_N}\alpha_{k,j}M_{k,j}^{d,m}(x)=f_N^{\mathrm{sp}}(x).
\]

It remains to verify the required orders of the hyperparameters. In the spline construction yielding \cref{eq:app-approx-spline}, the resolution index lies in the range $k\le K^\ast=\lceil(1+\kappa)K\rceil$ for some $K$, and hence
\[
\max_{(k,j)\in E_N,\ 1\le i\le d} 2^{\lfloor ks_i^{-1}\rfloor}
\le
2^{K^\ast/\underline{\bfs}}.
\]
Moreover, since $N(K)\ge 2^K$ in the construction,
\[
2^{K^\ast/\underline{s}}
\lesssim
N^{(1+\kappa)/\underline{\bfs}}.
\]
On the other hand, in the same spline expansion, the coefficients are also controlled as
\[
\max_{(k,j)\in E_N}\abs{\alpha_{k,j}}
\lesssim
N^{(1+\kappa)(1/p-\tbfs)_+}.
\]
Therefore, by the definition of \cref{eq:beta}, choosing $B_0>0$ sufficiently large allows both terms above to be bounded by
\[
B^\star(N):=B_0N^\beta.
\]
Accordingly, if we set
\[
H(N):= \lceil 2B^\star(N) \rceil,
\qquad
G(N):= 4H(N) - 2m ,
\]
then all scalings and translations required in Step 1 can be contained within the fixed-knot KAN.

The width and sparsity are controlled linearly because $\abs{E_N}\le N$. Indeed, each active index requires only one block, and each block uses $O(d)$ nodes and $O(d)$ nonzero parameters. Hence, there exists a suitable constant $S_0>0$ such that the total sparsity is bounded by $S_0N$. The widest layer is a parallel layer containing about $2d$ nodes per block, so the total width can be taken to be $2dN$. Combining the input propagation layer, the spline selection layer, the product module, and the output layer gives the depth
\[
L_0=1 + 1+2\lceil\log_2 d\rceil + 1
\]

Consequently, there exists a function $f_N^\star$ such that
\begin{align*}
\begin{split}
    f_N^\star
    \in
    KAN_c\bigl(L_0, 2dN, 4H(N) - 2m, H(N);m\bigr)
\end{split}
\end{align*}
and $\| \btheta^\star_N \|_\infty \leq B^\star (N)$, $\| \btheta^\star_N \|_0 \leq S_0 N$. From \cref{eq:app-approx-rate}, we obtain
\[
\norm{f_0-f_N^\star}_{L^2}
\lesssim
N^{-\tbfs}
\]
which is \cref{eq:approx-main}.
\end{proof}

\subsection{Compositional approximation}
\label{subsec:comp_approx}

\begin{lemma}[Layerwise approximation by fixed-knot $KAN_c$]
\label{lem:comp_layer_approx}
Let $t \in \bbN$, $\bfs \in \bbR_{++}^t$, and $r \in \{2, \infty\}$. Define
\[
    \tilde{s} :=
    \left(\sum_{i=1}^{t} s_i^{-1}\right)^{-1},
    \qquad
    \underline{\bfs} := \min_{1 \leq i \leq t} s_i,
    \qquad
    \overline{\bfs} := \max_{1 \leq i \leq t} s_i,
    \qquad
    \omega_r := (1/p - 1/r)_+.
\]
Assume $\omega_r < \tilde{s}$ and $0 < \overline{\bfs} < \min\{m,\, m - 1 + 1/p\}$.
Set $\kappa_r := 2\omega_r / (\tilde{s} - \omega_r)$ if $\omega_r > 0$, and fix
any $\kappa_r > 0$ if $\omega_r = 0$. Define
\[
    \beta_r :=
    (1 + \kappa_r) \max\bigl\{(1/p - \tilde{s})_+,\, \underline{\bfs}^{-1}\bigr\}.
\]
Then there exist constants $L_r \in \bbN$ and $D_r, S_r, B_r > 0$, depending only
on $t, m, p, \bfs$, and $r$, such that the following holds. For every $N \in \bbN$
and every bounded function $g_0$ with
$g_0|_{[0,1]^t} \in \cB^{\bfs}_{p,q}([0,1]^t)$, there exists
\[
    g_N \in KAN_c\bigl(L_r,\,  D_r N ,\,  4H_r^*(N) - 2m ,\, H_r^*(N);\, m\bigr)
\]
with realizing parameter vector $\btheta_N$ satisfying
$\|\btheta_N\|_\infty \leq B_r^*(N)$ and $\|\btheta_N\|_0 \leq S_r N$, such that
\[
    \|g_0 - g_N\|_{L^r([0,1]^t)} \lesssim N^{-\tilde{s}}.
\]
Here $B_r^*(N) := B_r N^{\beta_r}$ and $H_r^*(N) := \lceil 2B_r^*(N) \rceil$.
\end{lemma}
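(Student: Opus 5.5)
The plan is to repeat the construction in the proof of \Cref{lem:approx}, with two changes. The ambient dimension $d$ becomes the local dimension $t$, and the $L^2$ error becomes an $L^r$ error for $r\in\{2,\infty\}$.

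\textbf{Spline approximant.} Start from Lemma 2 of \citet{suzuki_deep_2021} with general $r$, applied to $g_0|_{[0,1]^t}\in\cB^{\bfs}_{p,q}([0,1]^t)$. The standing assumption $\omega_r<\tilde s$ is exactly its hypothesis, and $\overline{\bfs}<\min\{m,m-1+1/p\}$ covers the spline-degree requirement. This yields an index set $E_N\subset\{(k,j):k\le K^\ast,\ j\in\mathbb Z^t\}$ with $K^\ast=\lceil(1+\kappa_r)K\rceil$, $2^K\lesssim N$ and $\abs{E_N}\le N$, and a spline approximant
\[
g_N^{\mathrm{sp}}=\sum_{(k,j)\in E_N}\alpha_{k,j}M^{t,m}_{k,j},\qquad \norm{g_0-g_N^{\mathrm{sp}}}_{L^r([0,1]^t)}\lesssim N^{-\tilde s}.
\]
The same lemma controls the sizes:
\[
\max\abs{\alpha_{k,j}}\lesssim N^{(1+\kappa_r)(1/p-\tilde s)_+},\qquad 2^{\lfloor k s_i^{-1}\rfloor}\lesssim N^{(1+\kappa_r)/\underline{\bfs}}.
\]
When $\omega_r=0$, for example $r=2$ with $p\ge2$, I use the fixed $\kappa_r>0$. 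This only enlarges the admissible resolution range, so the bound still holds. For $r=\infty$ I expect the nonlinear ($N$-term) version of the lemma to give the sup-norm rate with the coefficient bound stated above. I will check this carefully, since it is the only point where the analytic input changes.

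\textbf{Exact realization by $KAN_c$.} I will represent $g_N^{\mathrm{sp}}$ exactly with the same four stages as before, with $t$ in place of $d$.
\begin{enumerate}
\item First layer: the affine maps $x_i\mapsto 2^{\lfloor ks_i^{-1}\rfloor}x_i-j_i$ are written as degree-$m$ B-spline curves on the fixed grid $G_0$, using at most $t\abs{E_N}$ nodes.
\item Second layer: the cardinal spline $\BS_m$ is reproduced by a single edge on a grid over $[-H_r^*(N),H_r^*(N)]$ with spacing $1/2$. This is where $G=4H_r^*(N)-2m$ enters.
\item Product module: a binary tree of depth $2\lceil\log_2 t\rceil$ built from the exact identity $xy=((x+y)^2-x^2-y^2)/2$. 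Both $u\mapsto u$ and $u\mapsto u^2$ are polynomials of degree at most $m$, so they are represented exactly on bounded ranges.
\item Output layer: the coefficients $\alpha_{k,j}$ form the final linear combination.
\end{enumerate}
This gives $L_r=3+2\lceil\log_2 t\rceil$, width $D_rN$ with $D_r=2t$ (up to a constant), and sparsity $S_rN$. The per-block counts depend only on $t$ and $m$, and $B_r$ is chosen large enough to dominate both the coefficient bound and the scaling bound. Since the realization is exact, the $L^r$ error equals the spline error. The conclusion is uniform over $N$ once $N$ is large, and small $N$ is absorbed into the constant in $\lesssim$.

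\textbf{Main obstacle: keeping the product module inside the fixed grid ranges.} The intermediate quantities in the product tree are $\BS_m$ values in $[0,1]$, together with sums and squares of such values. These stay in $[-4,4]$, which lies inside $[-H,H]$ once $H\ge4$. So the identity and square edges, which are degree-$\le m$ polynomials on $[-H,H]$ and hence in the spline span, are exact there with $N$-independent coefficients. The coefficients must also be checked when written in the B-spline basis on a grid of half-width $H\asymp N^{\beta_r}$. For $u^2$ on $[-H,H]$ the coefficients grow like $H^2$, which could exceed $B_r^*(N)$. To avoid this, I will take the fixed-grid coefficients only on the active region and set the remaining ones to zero. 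Because B-splines have local support, only $O(1)$ coefficients are then nonzero, each of bounded size. The identity edge that passes values forward is handled the same way. With these choices the sparsity and the sup-norm of the parameters stay within $S_rN$ and $B_r^*(N)$.
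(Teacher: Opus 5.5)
Your proposal is correct and takes essentially the same route as the paper. The paper re-runs the construction of \cref{lem:approx} with $t$ in place of $d$, and replaces the $L^2$ version of Lemma 2 of \citet{suzuki_deep_2021} by its $L^r$ (for $r=\infty$, sup-norm) version, noting that the exact $KAN_c$ realization depends only on $E_N$ and on the coefficient and dilation bounds. Your local-support truncation of the identity and square edges in the product module is a useful added detail: it justifies the paper's unexplained claim that these edges need only $O(1)$ nonzero coefficients of $N$-independent size, which would fail for the global B-spline expansion of $u^2$ on $[-H_r^*(N),H_r^*(N)]$.
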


\begin{proof}
The case $r = 2$ is precisely \cref{lem:approx} applied to the $t$-variate
function $g_0$. For $r = \infty$, one replaces the $L^2$ spline approximation
result of Lemma 2 of \citet{suzuki_deep_2021} by the corresponding $L^\infty$
version; the construction of the fixed-knot $KAN_c$ realizer through Steps 1--3
of \cref{lem:approx} remains identical, as those steps depend only on the index
set $E_N$ and the coefficient bounds, not on the norm index. The depth $L_r$,
width bound $D_r N$, and sparsity bound $S_r N$ follow from the same binary-tree
product module argument.
\end{proof}

\begin{lemma}[Compositional approximation by fixed-knot $KAN_c$]
\label{lem:comp_approx}
Assume \eqref{assump:D1} and \eqref{assump:D2}.
Then there exist constants $L_{\mathrm{cp}} \in \bbN$ and
$D_{\mathrm{cp}}, S_{\mathrm{cp}}, B_{\mathrm{cp}}, \beta_{\mathrm{cp}} > 0$
such that, for every $N \in \bbN$, there exists
\[
    f_N^{\mathrm{cp},*}
    \in KAN_c\bigl(
        L_{\mathrm{cp}},\,
        D_{\mathrm{cp}} N,\,
        4H_{\mathrm{cp}}^*(N) - 2m,\,
        H_{\mathrm{cp}}^*(N);\,
        m
    \bigr)
\]
with realizing parameter vector $\btheta_N^{\mathrm{cp}}$ satisfying
\[
    \|\btheta_N^{\mathrm{cp}}\|_\infty \leq B_{\mathrm{cp}}^*(N),
    \qquad
    \|\btheta_N^{\mathrm{cp}}\|_0 \leq S_{\mathrm{cp}} N,
\]
and such that
\begin{equation}    \label{eq:comp_approx_rate}
    \|f_0 - f_N^{\mathrm{cp},*}\|_{L^2(P_X)} \lesssim N^{-\tilde{s}^*}.
\end{equation}
Here $B_{\mathrm{cp}}^*(N) := B_{\mathrm{cp}} N^{\beta_{\mathrm{cp}}}$ and
$H_{\mathrm{cp}}^*(N) := \lceil 2B_{\mathrm{cp}}^*(N) \rceil$.
\end{lemma}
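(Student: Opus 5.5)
The plan is to approximate each coordinate function $f_{j,k}$ separately using \cref{lem:comp_layer_approx}, stack the resulting subnetworks in parallel to form one block per compositional layer $j$, and then compose the blocks serially. This is the KAN analogue of the compositional argument in \citet{suzuki_deep_2021,lee2025posterior}.

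\emph{Layer-by-layer approximation.} For each $j\in[J]$ and $k\in[d^{(j)}]$, apply \cref{lem:comp_layer_approx} to $\tilde f_{j,k}\in\cB^{\bfs^{(j)}}_{p,q}([0,1]^{t^{(j)}})$. For $j=1$ take $r=2$, which is allowed since $\tilde s^{(1)}>(1/p-1/2)_+$. For $j\ge2$ take $r=\infty$, which is allowed since $\tilde s^{(j)}>1/p$.
\begin{itemize}
\item Use width parameter $N_j := \lceil N^{\tilde s^*/\tilde s^{*(j)}}\rceil\le N$, so each block has size $O(N)$.
\item The input $\bx_{I_{j,k}}$ is selected by setting the first-layer edges from coordinates outside $I_{j,k}$ to zero; this costs nothing in sparsity.
\item Stacking the $d^{(j)}$ subnetworks in parallel gives a $KAN_c$ block $\hat f_j$ with depth $L_r$, width $O(N)$, sparsity $O(N)$, and weights bounded by $O(N^{\beta_r})$.
\end{itemize}

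\emph{Serial composition.} The output of $\hat f_{j-1}$ feeds the input of $\hat f_j$. The first layer of $\hat f_j$ acts on $[0,1]$ through a fixed grid $G_0$, whereas interior layers use the grid $[-H,H]$ with spacing $1/2$. I handle this in two steps.
\begin{itemize}
\item Clip each intermediate output to $[0,1]$. Since the $f_{j,k}$ take values in $[0,1]$, clipping does not increase the error. Clipping is piecewise linear, so for $m\ge 2$ it is realized exactly by one spline edge on a grid whose knots contain $0$ and $1$.
\item Realize the affine rescaling $z=2^{\lfloor k s_i^{-1}\rfloor}x-j_i$ on the fixed interior grid $[-H,H]$ (this contains $[0,1]$, and the spacing is constant), exactly as in Step~1 of \cref{lem:approx}.
\end{itemize}
After padding all blocks to common width $D_{\mathrm{cp}}N$, common grid $4H^*_{\mathrm{cp}}(N)-2m$, and $\beta_{\mathrm{cp}}:=\max_j\beta_\infty^{(j)}\vee\beta_2^{(1)}$, the composition lies in the stated class. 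It has depth $L_{\mathrm{cp}}=\sum_j(L_r^{(j)}+1)$ and sparsity $S_{\mathrm{cp}}N$.

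\emph{Error propagation (the main obstacle).} Set $F_j=f_j\circ\cdots\circ f_1$ and $\hat F_j$ analogously. Then
\[
\|f_J\circ F_{J-1}-\hat f_J\circ \hat F_{J-1}\| \le \|f_J\circ F_{J-1}-f_J\circ\hat F_{J-1}\| + \|f_J-\hat f_J\|_\infty .
\]
The second term is $\lesssim N_J^{-\tilde s^{(J)}}$ by the $L^\infty$ approximation.

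For the first term, use that the embedding $\cB^{\bfs^{(k)}}_{p,q}\hookrightarrow$ anisotropic Hölder holds with exponent $\underline{\bfs}^{(k)}-t^{*(k)}/p$ along every direction, because $\tilde s^{(k)}>1/p$. Capping the exponent at $1$ (Lipschitz), the term is bounded by a constant times $\|F_{J-1}-\hat F_{J-1}\|^{(\underline{\bfs}^{(J)}-t^{*(J)}/p)\wedge1}$. Iterating this inequality, together with $L^2(P_X)\le C_X^{1/2}L^2$ for the innermost layer, gives
\[
\|f_0-\hat F_J\|_{L^2(P_X)}\lesssim\sum_j N_j^{-\tilde s^{*(j)}}\lesssim N^{-\tilde s^*}.
\]
The delicate point is that the innermost error is only in $L^2$ while the Hölder composition bound needs pointwise control. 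I would first try applying Jensen's inequality to the concave power $u\mapsto u^{\gamma}$ with $\gamma\le1$. If that fails, I would instead use the $L^\infty$ approximation at layer $1$ as well, at the cost of invoking the stronger condition $\tilde s^{(1)}>1/p$ only in that step.
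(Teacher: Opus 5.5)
Apart from one step, your plan is the paper's proof. Both apply \cref{lem:comp_layer_approx} componentwise with $r=2$ in layer $1$ and $r=\infty$ afterwards, zero the first-layer edges outside $I_{j,k}$, stack in parallel, compose serially, and propagate errors through $\Gamma_j=\prod_{k>j}\{(\underline{\bfs}^{(k)}-t^{*(k)}/p)\wedge 1\}$. Your layer-dependent $N_j$ is a harmless variant; the paper uses $N$ in every layer and bounds $\sum_j N^{-\tilde s^{*(j)}}\lesssim N^{-\tilde s^*}$.

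The Jensen step you were unsure about works, and it is what the paper does. Run the recursion pointwise in $x$. This gives $|f_0(x)-\hat F_J(x)|\lesssim e_1(x)^{\Gamma_1}+\sum_{j\ge 2}\epsilon_j^{\Gamma_j}$, where $e_1=\|F_1-\hat F_1\|_{\ell^\infty}$ and $\epsilon_j$ are the sup-norm layer errors. Since $P_X$ is a probability measure and $\Gamma_1\le 1$, Jensen gives $\int e_1^{2\Gamma_1}\,dP_X\le(\int e_1^2\,dP_X)^{\Gamma_1}$. So your fallback, which would strengthen (D1) to $\tilde s^{(1)}>1/p$, is unnecessary.

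\textbf{The gap is the clipping step.} In these fixed-knot KANs the knots are equally spaced with no repetitions. So every edge is a degree-$m$ spline of class $C^{m-1}$, and for $m\ge 2$ every edge, hence every network in the class, is $C^1$. Such an edge cannot reproduce the kinks of $\operatorname{clip}$ at $0$ and $1$. Your claim would hold for $m=1$ or with $m$-fold knots, but neither is allowed. More generally, no $C^1$ map can equal the identity on $[0,1]$ and take values in $[0,1]$, so some slack is unavoidable.

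The clip carries real weight in your argument. It is what places $\hat F_{j-1}(x)$ in $[0,1]^{d^{(j-1)}}$, which you need for three things:
\begin{itemize}
\item so that $f_j(\hat F_{j-1}(x))$ is defined and its H\"older bound applies;
\item so that the $L^\infty([0,1]^{t^{(j)}})$ error of $\hat f_j$ can be used at that point;
\item so that the next affine edge can be represented by only the $O(m)$ B-splines meeting a bounded input range.
\end{itemize}
The third point matters because a global representation of the affine map on $[-H,H]$ needs $\asymp H$ coefficients of size $\asymp 2^{\lfloor k s_i^{-1}\rfloor}H>B^*_{\mathrm{cp}}(N)$. That breaks both the sparsity budget and the weight budget. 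Since layer $1$ is controlled only in $L^2$, $\hat F_1$ may leave the cube, even by a polynomially large amount, on a set of positive $P_X$-measure. The paper's own proof composes $g_{J,N}\circ\cdots\circ g_{1,N}$ without addressing this at all. So you found a real issue; only your fix is invalid.

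\textbf{A workable repair.} Replace $\operatorname{clip}$ by the local identity spline $\chi(u)=\sum_l \xi_l^* B_{l,m}(u)$. The sum runs over the $O(m)$ B-splines whose supports meet $[0,1]$, and $\xi_l^*$ are their Greville abscissae. By Marsden's identity, $\chi(u)=u$ on $[0,1]$. By nonnegativity of the B-splines and partition of unity, $\chi(\bbR)\subset[-c,1+c]$ for some $c=c(m)$. Since $\chi$ is Lipschitz, $|\chi(u)-y|\le C|u-y|$ for all $u\in\bbR$ and $y\in[0,1]$, so it inflates the error by at most a constant factor.

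The price is domain slack. Extend each $\tilde f_{j,k}$, $j\ge 2$, to $[-c,1+c]^{t^{(j)}}$ with a bounded extension operator for $\cB^{\bfs^{(j)}}_{p,q}$. This keeps the H\"older embedding and the $L^\infty$ approximation rate, after an affine change of variables that you merge into the next block's first affine edge. Then run your pointwise recursion with the extended functions. With this change your argument closes, and it is more complete than the paper's.
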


\begin{proof}
For each layer $j \in [J]$, set $r^{(1)} := 2$ and $r^{(j)} := \infty$ for
$j = 2, \ldots, J$, and define
\[
    \beta_j :=
    (1 + \kappa_j)
    \max\bigl\{(1/p - \tilde{s}^{(j)})_+,\, (\underline{\bfs}^{(j)})^{-1}\bigr\}
\]
where $\omega_j := (1/p - 1/r^{(j)})_+$, $\kappa_j :=
    \frac{2\omega_j}{\tilde{s}^{(j)} - \omega_j} \;\; (\omega_j > 0)$.
First, we proceed with layerwise component approximation. Applying \cref{lem:comp_layer_approx} to each
$\tilde{f}_{j,k} \in \cB^{\bfs^{(j)}}_{p,q}([0,1]^{t^{(j)}})$, for every
$j \in [J]$ and $k \in [d^{(j)}]$ there exist constants
$L_j, D_j, S_j, B_j > 0$ and a function
\[
    g_{j,k,N} \in KAN_c\bigl(L_j,\, D_j N,\, 4H_j^*(N) - 2m,\, H_j^*(N);\, m\bigr)
\]
with $\|\btheta_{j,k,N}\|_\infty \leq B_j^*(N) := B_j N^{\beta_j}$,
$\|\btheta_{j,k,N}\|_0 \leq S_j N$, and
\begin{equation}    \label{eq:comp_layer_err}
    \|\tilde{f}_{j,k} - g_{j,k,N}\|_{L^{r^{(j)}}([0,1]^{t^{(j)}})}
    \lesssim N^{-\tilde{s}^{(j)}}.
\end{equation}
Each $g_{j,k,N}$ is extended to a $d^{(j-1)}$-input function by setting to zero
all edge coefficients connected to coordinates outside $I_{j,k}$. Padding all
blocks to the common depth $L_{\max} := \max_{j} L_j$ using identity-map
$KAN_c$ blocks, the $d^{(j)}$ parallel components of layer $j$ form
\[
    g_{j,N} :=
    (g_{j,1,N}, \ldots, g_{j,d^{(j)},N})
    : \bbR^{d^{(j-1)}} \to \bbR^{d^{(j)}}.
\]

Secondly, we construct a compositional approximator network. Define the composed approximator by
\[
    f_N^{\mathrm{cp},*} := g_{J,N} \circ \cdots \circ g_{1,N}.
\]
Since $J$ and the dimensions $d^{(j)}, t^{(j)}$ are all fixed, one can choose
constants $L_{\mathrm{cp}}, D_{\mathrm{cp}}, S_{\mathrm{cp}}, B_{\mathrm{cp}} > 0$
and $\beta_{\mathrm{cp}} := \max_{j \in [J]} \beta_j$ such that
$f_N^{\mathrm{cp},*}$ belongs to the class stated in the lemma.

Finally, we perform error propagation. Let $G_{0,N} := \mathrm{id}_{[0,1]^d}$ and
$G_{j,N} := g_{j,N} \circ \cdots \circ g_{1,N}$ for $j \geq 1$.
Define $F_{0,j} := f_{0,J} \circ \cdots \circ f_{0,j+1}$ with $F_{0,J} :=
\mathrm{id}$. By the telescoping identity
\[
    f_0 - f_N^{\mathrm{cp},*}
    = \sum_{j=1}^J \Bigl[
        F_{0,j} \circ f_{0,j} \circ G_{j-1,N}
        - F_{0,j} \circ g_{j,N} \circ G_{j-1,N}
    \Bigr].
\]
The anisotropic Besov--H\"older embedding implies that $F_{0,j}$ is
$\Gamma_j$-H\"older with
\[
    \Gamma_j :=
    \prod_{k=j+1}^{J}
    \Bigl\{ \bigl(\underline{\bfs}^{(k)} - t^{*(k)}/p\bigr) \wedge 1 \Bigr\}.
\]
For $j \geq 2$, the error propagates in $L^\infty$:
\[
    \bigl\|
        F_{0,j} \circ f_{0,j} \circ G_{j-1,N}
        - F_{0,j} \circ g_{j,N} \circ G_{j-1,N}
    \bigr\|_{L^\infty}
    \lesssim N^{-\tilde{s}^{(j)} \Gamma_j}
    = N^{-\tilde{s}^{*(j)}}.
\]
For $j = 1$, applying Jensen's inequality together with the $L^2$ approximation
bound \cref{eq:comp_layer_err} gives
\[
    \bigl\|
        F_{0,1} \circ f_{0,1} - F_{0,1} \circ g_{1,N}
    \bigr\|_{L^2(P_X)}
    \lesssim
    \|f_{0,1} - g_{1,N}\|_{L^2(P_X)}^{\Gamma_1}
    \lesssim N^{-\tilde{s}^{(1)} \Gamma_1}
    = N^{-\tilde{s}^{*(1)}}.
\]
Summing over $j$ and using the definition of $\tilde{s}^*$ yields
\cref{eq:comp_approx_rate}.
\end{proof}

%%%%%%%%%%%%%%%%%%%%%%%%%%%%%%%%%%%%%%%%%%%%%%%%%%%%%%%%%%%%

\subsection{Model complexity}
\label{subsubsec:model_complexity}

%%%%%%%%%%%%%%%%%%%%%%%%%%%%%%%%%%%%%%%%%%%%%%%%%%%%%%%%%%%%

\subsubsection{Basic properties of B-splines}
\label{subsubsec:bspline_properties}

We describe the properties of B-splines used in this section.

Let the knot vector be $\boldsymbol{\xi} = (\xi_{-m}, \cdots, \xi_{G + m} ) \in [a, b]^{G + 2m + 1}$, where $a < b$. Assume that its coordinates are strictly increasing without multiplicity; that is, $\xi_k < \xi_{k + 1}$ and $\Delta = \xi_{k + 1} - \xi_k$ for all $k$. The knots of the collection of B-splines $\{ B_{k,m} \}_{k=1}^{G + m}$ corresponding to $\bxi$ are assumed to be strictly increasing without multiplicity. They satisfy the nonnegativity property:
\begin{equation*}
    B_{k,m} (x) \geq 0 \; \text{ for all } x, \quad B_{k,m}(x) = 0 \text{ if } x \notin [\xi_{k - m - 1}, \xi_{k}], \quad ( k = 1, \ldots, G + m )
\end{equation*}
They also satisfy the partition of unity property:
\begin{align}    \label{eq:partitionOfUnity}
\begin{split}
    &\sum_{k = 1}^{G + m} B_{k,m} (x) = 1, \quad x \in [\xi_0, \xi_G]   \\
    0 \leq & \sum_{k = 1}^{G + m} B_{k,m} (x) < 1, \quad o.w.
\end{split}
\end{align}
A B-spline curve is a function $s_m :\bbR \rightarrow \bbR$ given by a linear combination of the basis functions $\{B_k\}_{k=1}^{G + m}$. Denote the coefficients attached to each basis function by $\btheta =  (w_1, \ldots, w_{G + m}) \in \bbR^{G + m}$. That is, define
\begin{equation*}
    s_m(x) := \sum_{k = 1}^{G + m} w_k B_{k,m} (x)
\end{equation*}
Since the shape of each curve depends on $(\btheta, \bxi)$, we write $s_m(x; \btheta, \bxi)$ when this dependence needs to be emphasized. We further introduce a property of the B-spline curve $s_m$. The derivative $s_m^\prime$ of a B-spline curve is again a B-spline curve of degree $m - 1$. Specifically, according to \citet{de2001practical},
\begin{align}   \label{eq:bSpline_deriv}
    \begin{split}
        s_m^\prime (x) &= \sum_{k=1}^{G + m - 1} w_k^\prime B_{k, m - 1} (x) \\
        w_k^\prime &:= \frac{m ( w_{k + 1} - w_k )}{\xi_{k - 1} - \xi_{k - m}}, \quad k = 1, \ldots, G + m - 1.
    \end{split}
\end{align}
Moreover, if we write $\btheta^\prime = (w_1^\prime, \ldots, w_{G + m - 1}^\prime)$, then by the property \cref{eq:partitionOfUnity}, we obtain
\begin{equation}    \label{eq:bSpline_deriv_bound}
    | s_m^\prime(x) | \leq \| \btheta^\prime \|_\infty, \quad \forall x.
\end{equation}
Finally, When the spline part is emphasized in the edge function $\phi_{i,j}^{(l)}$ of the layer $\Phi^{(l)}$, we write 
\[
\phi_{i,j}^{(l)} (x) = s_m (x; \btheta_{i,j,-0}^{(l)} ) +  \theta_{i,j,0}^{(l) } \, \text{silu}(x)
\]
where $\btheta_{i,j}^{(l)} = (\theta_{i,j, 0}^{(l)}, \btheta_{i,j,-0}^{(l)} ) \in \bbR^{G_l + m + 1}$.

%%%%%%%%%%%%%%%%%%%%%%%%%%%%%%%%%%%%%%%%%%%%%%%%%%%%%%%%%%%%

\subsubsection{Layerwise bounds for KANs}
\label{subsubsec:layerwise_bounds}

\begin{lemma}   \label{lem:KANs_lipschitz_bound}
      Let $L, m \in \bbN$, $\bD \in \bbN^{L + 1}$, $\bG \in \bbN^L$, and $H > 0$. For any $f_{\btheta} \in KAN(L, \bD, \bG, H; m)$, write $f_{\btheta} = \Phi_{L - 1} \circ \cdots \circ \Phi_0$.
    Let its parameters be denoted by $\btheta = (\bW^{(l)}) = (\theta_{i,j,k}^{(l)})$.
    Also, let the minimum spacing of the knots in layer $l$ of $f_{\btheta}$ be
    \begin{equation*}
        \Delta^{(l)} = \min_{k} | \xi_{k - m - 1}^{(l)} - \xi_{k - m}^{(l)}|, \quad l = 1, \ldots, L - 1
    \end{equation*}
    Then the following holds. 
    For each $l = 1, \ldots, L - 1$, there exists a constant $C_l > 0$ such that
    \begin{equation*}
        | (\Phi_{L - 1} \circ \cdots \circ \Phi_{l}) (\bx) - (\Phi_{L - 1} \circ \cdots \circ \Phi_{l} ) (\bx^\ast) | \leq C_l \cdot \| \bx - \bx^\ast \|_\infty, \quad \text{for all } \bx, \bx^\ast \in \bbR^{D_l}.
    \end{equation*}
    Here,
    \begin{equation*}
        C_l := \prod_{k=l}^{L - 1} \left( \frac{2 m}{\Delta^{(k)}}  + 1 \right) \| \bW^{(k)} \|_\infty D_{k} 
    \end{equation*}
\end{lemma}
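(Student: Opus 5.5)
The plan is to prove a one-layer Lipschitz bound in the sup norm and then compose the layers by induction. Concretely, for each $k = l, \ldots, L-1$ I will show that for all $\bx, \bx^\ast \in \bbR^{D_k}$,
\[
\| \Phi_k(\bx) - \Phi_k(\bx^\ast) \|_\infty \leq \Bigl( \frac{2m}{\Delta^{(k)}} + 1 \Bigr) \| \bW^{(k)} \|_\infty D_k \, \| \bx - \bx^\ast \|_\infty .
\]
Then, writing $\Phi_{L-1} \circ \cdots \circ \Phi_l = \Phi_{L-1} \circ (\Phi_{L-2} \circ \cdots \circ \Phi_l)$, the bound $C_l$ follows by multiplying the per-layer constants. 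The last layer has scalar output, so $\|\cdot\|_\infty$ there is just $|\cdot|$.

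\textbf{Step 1 (edge functions).} Fix an edge $\phi^{(k)}_{i,j}(u) = s_m(u; \btheta^{(k)}_{i,j,-0}) + \theta^{(k)}_{i,j,0}\,\mathrm{silu}(u)$. It is $C^1$ because $m \geq 2$ and the knots are simple, so the mean value theorem gives
\[
|\phi^{(k)}_{i,j}(u) - \phi^{(k)}_{i,j}(u^\ast)| \leq \sup_v |(\phi^{(k)}_{i,j})'(v)| \, |u - u^\ast| .
\]
For the spline part I will use \cref{eq:bSpline_deriv} together with \cref{eq:bSpline_deriv_bound}. Each derivative coefficient satisfies $|w_k'| \leq m\,|w_{k+1} - w_k| / (\xi_{k-1} - \xi_{k-m})$. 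The numerator is at most $2\|\bW^{(k)}\|_\infty$, and the denominator spans at least one knot interval, so it is at least $\Delta^{(k)}$. Hence $|s_m'| \leq 2m \|\bW^{(k)}\|_\infty / \Delta^{(k)}$. For the silu part, $|\theta_{i,j,0}\,\mathrm{silu}'| \leq \|\bW^{(k)}\|_\infty \sup|\mathrm{silu}'|$.

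\textbf{Step 2 (layers).} Each output coordinate of $\Phi_k$ is a sum of $D_k$ edge functions, each applied to one input coordinate. Therefore
\[
|\Phi_k(\bx)_i - \Phi_k(\bx^\ast)_i| \leq \sum_{j=1}^{D_k} \mathrm{Lip}(\phi^{(k)}_{i,j}) \, |x_j - x_j^\ast| \leq D_k \max_j \mathrm{Lip}(\phi^{(k)}_{i,j}) \, \| \bx - \bx^\ast \|_\infty .
\]
Taking the maximum over $i$ gives the one-layer bound. The induction over $k$ from $L-1$ down to $l$ is then immediate.

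\textbf{Main obstacle: the additive constant "$+1$".} It requires $|\mathrm{silu}'| \leq 1$, but in fact $\sup_v |\mathrm{silu}'(v)| \approx 1.0998$. My first attempt is to look for slack elsewhere. When $m \geq 3$, the spline estimate has room to spare: the denominator is $(m-1)\Delta^{(k)}$ for equally spaced knots, which gives $2m/((m-1)\Delta^{(k)}) \leq 2m/\Delta^{(k)} - 0.1$ in the relevant regime. When $m = 2$ I would try to exploit that the maximum of $|\mathrm{silu}'|$ near $v \approx 2.4$ need not coincide with the maximal spline slope, though this is delicate.

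If no such slack can be found, I would state the lemma with $1$ replaced by $c_{\mathrm{silu}} := \sup|\mathrm{silu}'| < 1.1$. This is harmless for every later use, which only needs $C_l$ up to absolute constants, and I would note it explicitly. The only other point to check is that the knot-difference denominator is never degenerate (at least $\Delta^{(k)}$), which holds because the knots are simple and $m \geq 2$.
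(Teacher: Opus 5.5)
Your argument is the same as the paper's. Both bound each edge by the mean value theorem using \cref{eq:bSpline_deriv} and \cref{eq:bSpline_deriv_bound}, pick up a factor $D_k$ from the $D_k$ edges feeding each output, and induct over layers. The paper's proof bounds the silu term by $|\theta^{(L-1)}_{1,j,0}|\,|x_j-x_j^\ast|$ without comment. It therefore relies on exactly the false claim you flagged, that $\mathrm{silu}$ is $1$-Lipschitz.

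\textbf{Slack search.} Your first attempt, looking for slack to absorb the silu excess, cannot succeed in the stated generality, because the lemma as written is false. Take $L=2$ and $D_1=1$, set every spline coefficient of the single layer-$1$ edge to $0$, and set $\theta^{(1)}_{1,1,0}=1$. Then $\Phi_1=\mathrm{silu}$, $\|\bW^{(1)}\|_\infty=1$, and $C_1=2m/\Delta^{(1)}+1$. The true Lipschitz constant of $\Phi_1$ is $c_{\mathrm{silu}}\approx 1.0998$. Hence the claimed inequality fails for $x,x^\ast$ near $2.4$ as soon as
\[
\Delta^{(1)}=2H/(G_1+2m)>2m/(c_{\mathrm{silu}}-1)\approx 20m,
\]
and nothing in the hypotheses excludes this.

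\textbf{The bound under a spacing restriction.} So slack exists only if you restrict the spacing, and then $m=2$ is no harder than $m\ge 3$. In the de Boor formula, the denominator attached to $w_{k+1}-w_k$ should be $\xi_k-\xi_{k-m}$, which spans $m$ knot intervals. The index in \cref{eq:bSpline_deriv} is off by one; for $m=1$ its denominator would even vanish. With the correct denominator you get $|s_m'|\le 2\|\bW^{(k)}\|_\infty/\Delta^{(k)}$, so each edge is $(2/\Delta^{(k)}+c_{\mathrm{silu}})\|\bW^{(k)}\|_\infty$-Lipschitz. This is at most $(2m/\Delta^{(k)}+1)\|\bW^{(k)}\|_\infty$ whenever $\Delta^{(k)}\le 2(m-1)/(c_{\mathrm{silu}}-1)\approx 20(m-1)$. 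That covers the paper's own architecture, where $\Delta^{(k)}=1/2$.

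\textbf{Your fallback.} In general, your fallback of replacing $1$ by $c_{\mathrm{silu}}$ gives the correct statement, and it costs nothing downstream. \cref{lem:KANs_lipschitz} only needs $2m/\Delta^{(k)}+c_{\mathrm{silu}}\le m\bigl((G_k+2m)/H+1\bigr)$. This holds because $m\ge 2>c_{\mathrm{silu}}$ under \eqref{assump:A4}.

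\textbf{Minor point.} You also do not need $C^1$ regularity. For $m=1$ the spline is piecewise linear, and the same slope bound gives its Lipschitz constant.
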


\begin{proof}
    % We prove (i).
    We prove the claim by induction on the number of composed functions, $h = L - l$. First, suppose that $h = 1$. Fix $\bx, \bx^\ast \in \bbR^{D_{L - 1}}$. By the derivative properties \cref{eq:bSpline_deriv} and \cref{eq:bSpline_deriv_bound} of the B-spline curve $s_m$, together with the mean value theorem,
    \begin{align}   \label{eq:KANs_Lipschitz1}
    \begin{split}
        &| \phi_{1,j}^{(L - 1)} (x_j) - \phi_{1,j}^{(L - 1)} (x_j^\ast) |    \\
        &\leq | s_m (x_j; \btheta_{1,j,-0}^{(L - 1)} ) - s_m (x_j^\ast; \btheta_{1,j, -0}^{(L - 1)} ) | + | \theta_{1,j,0}^{(L - 1)} | \cdot | x_j - x_j^\ast | \\
        &\leq \left( \max_{k = 1, \ldots, G_{L - 1} + m - 1} \left| \frac{m \cdot (\theta_{1,j,k + 1}^{(L - 1)} - \theta_{1,j,k}^{(L - 1)})}{\xi_{k - 1}^{(L - 1)} - \xi_{k - m}^{(L - 1)}} \right| + | \theta_{1,j,0}^{(L - 1)} | \right) \cdot |x_j - x_j^\ast|  \\
        &\leq \big( 2 m ( \Delta^{(L - 1)} )^{-1} + 1 \big) \cdot \| \bW^{(L - 1)} \|_\infty \cdot | x_j - x_j^\ast |  \\
    \end{split}
    \end{align}
    for $j = 1, \ldots, D_{L - 1}$. In the second inequality, we used the properties described above. Rearranging this yields
    \begin{equation}    \label{eq:KANs_Lipschitz2}
        | \Phi_{L - 1} (\bx) - \Phi_{L - 1} (\bx^\ast) | \leq  \big( 2 m ( \Delta^{(L - 1)} )^{-1} + 1 \big) \| \bW^{(L - 1)} \|_\infty D_{L - 1} \cdot \| \bx - \bx^\ast \|_\infty
    \end{equation}
    Now suppose that the conclusion holds for some $h \geq 1$. We show that the inequality also holds for $h + 1$. Let $\bx, \bx^\ast \in \bbR^{D_{L - h - 1}}$. By the induction hypothesis, we have
    \begin{align}
    \begin{split}   \label{eq:KANs_Lipschitz3}
        &| \Phi_{L - 1} \circ \cdots \circ \Phi_{L - h - 1} (\bx) - \Phi_{L - 1} \circ \cdots \circ \Phi_{L - h - 1} (\bx^\ast) |   \\
        &\leq \left( \prod_{k = L - h}^{L - 1} \big( 2 m ( \Delta^{(k)} )^{-1} + 1 \big) \cdot \| \bW^{(k)} \|_\infty \cdot D_{k} \right)    \cdot \| \Phi_{L - h - 1}(\bx) - \Phi_{L - h - 1} (\bx^\ast) \|_\infty.  \\
    \end{split}
    \end{align}
    Moreover, for $i=1, \ldots, D_{L - h}$, by the same argument as in \cref{eq:KANs_Lipschitz1}, we have
    \begin{equation*}
        | \phi_{i,j}^{(L - h - 1)} (x_j) - \phi_{i,j}^{(L - h - 1)} (x_j^\ast) | \leq \big( 2 m (\Delta^{(L - h - 1)})^{-1} + 1 \big) \| \bW^{(L - h - 1)} \|_\infty \cdot |x_j - x_j^\ast|
    \end{equation*}
    for $j = 1, \ldots, D_{L - h - 1}$. Applying this for all $i$, as in \cref{eq:KANs_Lipschitz2}, gives
    \begin{equation*}
        \| \Phi_{L - h - 1} (\bx) - \Phi_{L - h - 1} (\bx^\ast) \|_\infty \leq  \big( 2 m ( \Delta^{(L - h - 1)} )^{-1} ) + 1 \big) \cdot \| \bW^{(L - h - 1)} \|_\infty \cdot D_{L - h - 1} \cdot \| \bx - \bx^\ast \|_\infty.
    \end{equation*}
    Substituting this into \cref{eq:KANs_Lipschitz3} completes the proof.
\end{proof}

\begin{lemma}   \label{lem:activation_bound}
    Let $L, m \in \bbN$, $\bD \in \bbN^{L + 1}$, $\bG \in \bbN^L$, and $H > 0$.
    Let $f_{\btheta} \in KAN(L, \bD, \bG, H ; m)$, $f_{\btheta} = \Phi_{L - 1} \circ \cdots \circ \Phi_0$, and write $\btheta = (\bW^{(l)})_l = (\theta_{i,j,k}^{(l)})_{i,j,k,l}$.
    Assume that $|a_0| \vee |b_0| \geq 1$. Then the following holds:
    % \begin{enumerate}[label=(\roman*)]
    % \item 
    For each $l = 0, \ldots, L - 1$,
    \begin{equation*}
        \| \Phi_l \circ \cdots \circ \Phi_0 (\bx) \|_\infty \leq 2 \cdot (|a_0| \vee |b_0| ) \cdot \prod_{k=0}^l D_k \cdot \big( \sum_{h=0}^l \prod_{k = l - h}^l \| \bW^{(k)} \|_\infty \big)  \quad \text{for all } \bx \in [a_0, b_0]^{D_0}.
    \end{equation*}
\end{lemma}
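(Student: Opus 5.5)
We argue by induction on $l$, tracking the sup-norm of the layer outputs $\bz^{(l)} := \Phi_l\circ\cdots\circ\Phi_0(\bx)$ for $\bx\in[a_0,b_0]^{D_0}$. The basic one-layer estimate is the following. For any edge $\phi^{(l)}_{i,j}$ and any $u\in\bbR$, the B-spline part satisfies
\[
\abs{s_m(u;\btheta^{(l)}_{i,j,-0})}\le \norm{\bW^{(l)}}_\infty\sum_{k}B_{k,m}(u)\le \norm{\bW^{(l)}}_\infty .
\]
This uses nonnegativity and the partition of unity bound \cref{eq:partitionOfUnity}, which gives $\sum_k B_{k,m}\le 1$ everywhere, including outside $[\xi_0,\xi_G]$. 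The silu part satisfies $\abs{\mathrm{silu}(u)}\le\abs{u}$. Hence
\[
\abs{\phi^{(l)}_{i,j}(u)}\le \norm{\bW^{(l)}}_\infty(1+\abs{u}),
\]
and summing over $j\in[D_l]$ gives
\[
\norm{\Phi_l(\bz)}_\infty\le D_l\norm{\bW^{(l)}}_\infty\bigl(1+\norm{\bz}_\infty\bigr).
\]

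\emph{Base case $l=0$.} Here $\norm{\bx}_\infty\le A:=\abs{a_0}\vee\abs{b_0}$, and $A\ge1$ gives $1+A\le 2A$. Therefore
\[
\norm{\bz^{(0)}}_\infty\le 2A\,D_0\norm{\bW^{(0)}}_\infty,
\]
which is the claimed bound for $l=0$, since the sum over $h$ then has the single term $h=0$.

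\emph{Inductive step.} Assume the bound for $l-1$, written as $\norm{\bz^{(l-1)}}_\infty\le R_{l-1}$ with $R_{l-1}$ the right-hand side of the claim at index $l-1$. The one-layer estimate gives
\[
\norm{\bz^{(l)}}_\infty\le D_l\norm{\bW^{(l)}}_\infty\bigl(1+R_{l-1}\bigr).
\]
We absorb the constant $1$ into the target expression, using $2A\ge 2$ and $\prod_{k=0}^{l-1}D_k\ge1$:
\[
D_l\norm{\bW^{(l)}}_\infty\le 2A\prod_{k=0}^{l}D_k\,\norm{\bW^{(l)}}_\infty .
\]
This is exactly the $h=0$ term of the target. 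For the other part,
\[
D_l\norm{\bW^{(l)}}_\infty R_{l-1}=2A\prod_{k=0}^{l}D_k\sum_{h=0}^{l-1}\norm{\bW^{(l)}}_\infty\prod_{k=l-1-h}^{l-1}\norm{\bW^{(k)}}_\infty .
\]
Reindexing $h\mapsto h+1$ turns the inner product into $\prod_{k=l-h'}^{l}\norm{\bW^{(k)}}_\infty$ with $h'=1,\dots,l$. Adding the two parts produces $\sum_{h=0}^{l}\prod_{k=l-h}^{l}\norm{\bW^{(k)}}_\infty$, which closes the induction.

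The argument is mostly bookkeeping. The main point needing care is that the inner layers' inputs may leave the knot range $[-H,H]$. The uniform bound $\sum_k B_{k,m}\le1$ on all of $\bbR$, together with the linear growth of silu, removes any need to control this. The other detail is the reindexing of the product sums, which must be done so that the $h=0$ term absorbs the additive constant coming from the silu estimate $1+\abs{u}$.
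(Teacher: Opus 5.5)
Your proof is correct and follows essentially the same route as the paper: the one-layer bound $|\phi^{(l)}_{i,j}(u)|\le \|\bW^{(l)}\|_\infty(1+|u|)$ from nonnegativity, the partition-of-unity inequality and $|\mathrm{silu}(u)|\le|u|$, then the base case via $1+A\le 2A$, and an induction in which the additive constant is absorbed as the $h=0$ term using $2A\prod_{k}D_k\ge 1$. Your explicit remark that $\sum_k B_{k,m}\le 1$ holds on all of $\bbR$, so that inner-layer inputs leaving $[-H,H]$ cause no trouble, makes explicit a point the paper uses only implicitly.
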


\begin{proof}
% We prove (i).
Fix $\bx = (x_1, \ldots, x_{D_0}) \in [a_0, b_0]^{D_0}$. When $l = 0$, for $i = 1, \ldots, D_1$,
\begin{align*}
\begin{split}
    | \phi_{i,j}^{(0)} (x_j) | &\leq | s_m (x_j ; \btheta_{i,j,-0}^{(0)} , \bxi^{(0)}) | + | \theta_{i,j,0}^{(0)} \cdot \operatorname{silu} (x_j) | \\
    &\leq \| \bW^{(0)} \|_\infty + (|a_0| \vee |b_0| ) \cdot \| \bW^{(0)} \|_\infty    \\
    &\leq 2 \cdot (|a_0| \vee |b_0| ) \cdot \| \bW^{(0)} \|_\infty \quad (j = 1, \ldots, D_0)
\end{split}
\end{align*}
The second inequality follows from the partition of unity property \cref{eq:partitionOfUnity} of B-splines and the bound $|\operatorname{silu} (z) | = | z / (1 + e^{-z}) | \leq | z | \leq |a_0| \vee |b_0|$ for all $a_0 \leq z \leq b_0$. Hence,
\begin{equation*}
    \| \Phi_0 (\bx) \|_\infty \leq 2 \cdot (|a_0| \vee |b_0|  ) \cdot D_0 \cdot \| \bW^{(0)} \|_\infty \quad (i = 1, \ldots, D_1)
\end{equation*}
and the conclusion follows.

Next, suppose that the conclusion holds for some $l\geq 1$. We show that it also holds for $l + 1$. Denote the output of the previous layer $\Phi_{l}: \bbR^{D_l} \rightarrow \bbR^{D_{l + 1}}$ by $z_j := [\Phi_{l} \circ \cdots \circ \Phi_0 (\bx) ]_j$, $j = 1, \ldots, D_{l + 1}$. Then, by an argument similar to the case $l=0$ and by the induction hypothesis,
\begin{align*}
\begin{split}
    | \phi_{i,j}^{(l + 1)} (z_j) | &\leq \| \bW^{(l + 1)} \|_\infty + \| \bW^{(l + 1)} \|_\infty \cdot \sup_{\bx} | z_j |   \\
    &\leq 2 \cdot (|a_0| \vee |b_0|  ) \cdot \prod_{k=0}^{l} D_k \cdot \bigg[ \| \bW^{(l + 1)} \|_\infty + \| \bW^{(l + 1)} \|_\infty \cdot \big( \sum_{h=0}^{l} \prod_{k = l - h}^{l} \| \bW^{(k)} \|_\infty \big) \bigg],   \\
    &\leq  2 \cdot (|a_0| \vee |b_0|  ) \cdot \prod_{k=0}^{l} D_k \cdot \big( \sum_{h=0}^{l + 1} \prod_{k = l + 1 - h}^{l + 1} \| \bW^{(k)} \|_\infty \big)
\end{split}
\end{align*}
for $j = 1, \ldots, D_{l + 1}$. In the second inequality, we used $2 \cdot ( |a_0| \vee |b_0| ) \cdot \prod_{k=0}^{l} D_k \geq 1$. Finally, since $| [ \Phi_{l + 1} \circ \cdots \circ \Phi_0 (\bx) ]_i | \leq D_{l + 1} \cdot \max _j | \phi_{i,j}^{(l + 1)} (z_j) |$, substituting the above bound and rearranging completes the proof.
\end{proof}

\begin{lemma}   \label{lem:KANs_layerwise_error}
    Let $L, m \in \bbN$, $\bD \in \bbN^{L + 1}$, $\bG \in \bbN^L$, and $H > 0$. Let $f$, $f^\ast \in KAN(L, \bD, \bG, H ; m)$ be denoted by $f = \Phi_{L - 1} \circ \cdots \circ \Phi_0$ and $f^\ast = \Phi^\ast_{L - 1} \circ \cdots \circ \Phi_0^\ast$, and denote each edge by $( \phi_{i,j}^{(l)} )$ and $(\phi_{i,j}^{\ast (l)})$.
    Their respective parameters are denoted by $\boldsymbol{\theta} := ( \bW^{(l)} ) = (\theta_{i,j,k}^{(l)})$ and $\boldsymbol{\theta}^\ast := ( \bW^{\ast (l)} ) = (\theta_{i,j,k}^{\ast (l)})$.
    Further assume that $|a_0| \vee |b_0| \geq 1$.
    Then the following holds:
    
    Given $\epsilon > 0$, if $\| \boldsymbol{\theta} - \boldsymbol{\theta}^\ast \|_\infty < \epsilon $, then the following holds.
    For each $l = 1, \ldots, L - 1$, there exists a constant $C_l^\ast > 0$ such that
    \begin{equation*}
        \| \Phi_l \circ ( \Phi_{l-1}^\ast \circ \dots \Phi_0^\ast) (\bx) - \Phi_l^\ast \circ ( \Phi_{l - 1}^\ast \circ \dots \Phi_0^\ast ) (\bx) \|_\infty  \leq \epsilon \cdot C_l^\ast \quad \text{for all } \bx \in [a_0, b_0]^{D_0}
    \end{equation*}
    Here,
    \begin{equation*}
        C_l^\ast := 2 \cdot ( | a_0 | \vee | b_0 | ) \cdot \prod_{k=0}^{l} D_k \cdot \left( 1 + \sum_{h=0}^{l - 1} \prod_{k = l - 1 - h}^{l - 1} \| \bW^{\ast (k)} \|_\infty \right)
    \end{equation*}
    Moreover, if we define the case $l = 0$, then
    \begin{equation*}
        \| \Phi_0 (\bx) - \Phi_0^\ast (\bx) \|_\infty \leq \epsilon \cdot C_0^\ast \quad \text{for all } \bx \in [a_0, b_0]^{D_0}
    \end{equation*}
    where the constant is
    \begin{equation*}
        C_0^\ast := 2 \cdot (| a_0 | \vee | b_0 | ) \cdot D_0
    \end{equation*}
\end{lemma}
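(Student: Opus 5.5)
The plan is to fix the common input $\bz := (\Phi_{l-1}^\ast\circ\cdots\circ\Phi_0^\ast)(\bx)\in\bbR^{D_l}$ and compare $\Phi_l(\bz)$ with $\Phi_l^\ast(\bz)$ coordinatewise. Because both layers share the same knots $\bxi^{(l)}$, each edge difference is linear in the parameter difference:
\[
\phi_{i,j}^{(l)}(z_j)-\phi_{i,j}^{\ast(l)}(z_j)
= \sum_{k=1}^{G_l+m}\bigl(\theta_{i,j,k}^{(l)}-\theta_{i,j,k}^{\ast(l)}\bigr)B_{k,m}(z_j;\bxi^{(l)})
+\bigl(\theta_{i,j,0}^{(l)}-\theta_{i,j,0}^{\ast(l)}\bigr)\operatorname{silu}(z_j).
\]
The spline part will be handled by nonnegativity together with the partition of unity bound \cref{eq:partitionOfUnity}, which give $\sum_k B_{k,m}(z)\le 1$ for every real $z$, not only on $[\xi_0,\xi_G]$. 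The silu part will be handled by $|\operatorname{silu}(z)|\le|z|$. Together these give the edgewise estimate $|\phi_{i,j}^{(l)}(z_j)-\phi_{i,j}^{\ast(l)}(z_j)|\le\epsilon\,(1+|z_j|)$. Summing over $j\in[D_l]$ then yields
\[
\|\Phi_l(\bz)-\Phi_l^\ast(\bz)\|_\infty\le\epsilon\,D_l\,(1+\|\bz\|_\infty).
\]

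For $l=0$ we have $\bz=\bx\in[a_0,b_0]^{D_0}$, so $|z_j|\le|a_0|\vee|b_0|$. Using $1\le|a_0|\vee|b_0|$, this gives $1+|z_j|\le 2(|a_0|\vee|b_0|)$, and hence the bound $\epsilon\cdot 2(|a_0|\vee|b_0|)D_0=\epsilon C_0^\ast$.

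For $l\ge1$, I will invoke \cref{lem:activation_bound} applied to the starred network truncated at layer $l-1$:
\[
\|\bz\|_\infty\le 2(|a_0|\vee|b_0|)\prod_{k=0}^{l-1}D_k\sum_{h=0}^{l-1}\prod_{k=l-1-h}^{l-1}\|\bW^{\ast(k)}\|_\infty .
\]
Since $2(|a_0|\vee|b_0|)\prod_{k=0}^{l-1}D_k\ge1$, I can absorb the constant $1$:
\[
1+\|\bz\|_\infty\le 2(|a_0|\vee|b_0|)\prod_{k=0}^{l-1}D_k\Bigl(1+\sum_{h=0}^{l-1}\prod_{k=l-1-h}^{l-1}\|\bW^{\ast(k)}\|_\infty\Bigr).
\]
Multiplying by $\epsilon D_l$ produces exactly $\epsilon C_l^\ast$, including the factor $\prod_{k=0}^{l}D_k$.

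No step is a real obstacle. The only care needed is to note that the partition-of-unity bound holds for all real arguments, since $\bz$ may fall outside $[-H,H]$. I must also use the starred parameters in the activation bound, because the input $\bz$ is produced by $\Phi^\ast$.
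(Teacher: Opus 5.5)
Your proposal is correct and follows the paper's proof essentially step for step. It uses the edgewise bound $\epsilon(1+|z_j|)$ from the partition of unity and $|\operatorname{silu}(z)|\le|z|$, sums over $j\in[D_l]$ to pick up the factor $D_l$, and treats $l=0$ directly using $|a_0|\vee|b_0|\ge 1$. For $l\ge 1$, it applies \cref{lem:activation_bound} to the starred network and absorbs the constant $1$ via $2(|a_0|\vee|b_0|)\prod_{k=0}^{l-1}D_k\ge 1$, exactly as the paper does (and with the edge difference written correctly where the paper's $l=0$ line is slightly sloppy).
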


\begin{proof}
    Let $\epsilon > 0$ be given, and suppose that $\| \boldsymbol{\theta} - \boldsymbol{\theta^\ast} \|_\infty < \epsilon$. Also fix $\bx = (x_1, \ldots, x_{D_0}) \in [a_0, b_0]^{D_0}$.
    First, we handle the case $l = 0$ separately. By the property \cref{eq:partitionOfUnity} and the bound $| \operatorname{silu}(x) | \leq |x| \leq |a_0| \vee |b_0|$, $\forall x \in [a_0, b_0]$, for $i = 1, \ldots, D_1$,
    \begin{align}
    \begin{split}   \label{eq:layerwise_error1}
        | \phi_{i,j}^{(0)} (x_j) - \phi_{i,j}^{\ast (0)} (x_j)  | &\leq \| \bW^{(0)} - \bW^{\ast (0)} \|_\infty ( 1 + | x_j | )   \\
        &\leq \epsilon \cdot 2 \cdot ( | a_0 | \vee | b_0 | )
    \end{split}
    \end{align}
    for every $j = 1, \ldots, D_0$. Therefore, substituting this bound into $\| \Phi_0 (\bx) - \Phi^\ast_0 (\bx) \|_\infty \leq D_0 \max_{i,j} | \phi_{i,j}^{(0)} (x_j) |$ completes the proof for this case.

    Now let $l = 1, \ldots, L - 1$ be arbitrary, and we prove the stated conclusion. Denote the output of the previous layer $\Phi_{l - 1}^\ast : \bbR^{D_{l - 1}} \rightarrow \bbR^{D_l}$ by $\bz^\ast = (z_1^\ast, \ldots, z_{D_l}^\ast) \in \bbR^{D_l}$ s.t. $z_j^\ast := [\Phi_{l - 1}^\ast \circ \cdots \circ \Phi_0^\ast (\bx) ]_j$, $j = 1, \ldots, D_{l} $. Then, by an argument similar to \cref{eq:layerwise_error1}, for $i = 1, \ldots, D_{l + 1}$ and $j = 1, \ldots, D_{l}$,
    \begin{align*}
        | ( \phi_{i,j}^{(l)} - \phi_{i,j}^{\ast (l)} ) (z_j^\ast) | &\leq \| \bW^{(l)} - \bW^{\ast (l)} \|_\infty \cdot ( 1 + \sup_{\bx} | z_j^\ast | ) \\
        &\leq \epsilon \cdot (1 + \sup_\bx |z_j^\ast| )
    \end{align*}
    Applying Lemma~\ref{lem:activation_bound}, we have
    \begin{equation*}
        |z_j^\ast | \leq 2 (|a_0| \vee |b_0|) \cdot \prod_{k = 0}^{l - 1} D_k \cdot \left( \sum_{h=0}^{l - 1} \prod_{k = l - 1 - h}^{l - 1} \| \bW^{\ast(k)} \|_\infty \right)
    \end{equation*}
    Combining this with $1 \leq 2(|a_0| \vee |b_0|) \cdot \prod_{k=0}^{l - 1} D_k$ gives
    \begin{equation*}
        \| \Phi_l(\bz^\ast) - \Phi_l^\ast(\bz^\ast) \|_\infty \leq \epsilon \cdot 2 (|a_0| \vee |b_0|) \cdot \prod_{k = 0}^{l} D_k \cdot \left( 1 + \sum_{h=0}^{l - 1} \prod_{k = l - 1 - h}^{l - 1} \| \bW^{\ast(k)} \|_\infty \right)
    \end{equation*}
    which completes the proof.
\end{proof}

%%%%%%%%%%%%%%%%%%%%%%%%%%%%%%%%%%%%%%%%%%%%%%%%%%%%%%%%%%%%

\subsubsection{Parameter-to-function Lipschitz inequality}
\label{subsubsec:param_to_function_lipschitz}

\begin{lemma}   \label{lem:KANs_lipschitz}
Let $m \in \bbN$ be given, and let $L \in \bbN$, $\bD \in \bbN^{L + 1}$, $\bG \in \bbN^L$, $H > 0$, and $B > 0$. For given functions $f, f^\ast \in KAN (L, \bD, \bG, H ; m)$, write $f = \Phi_{L - 1} \circ \cdots \circ \Phi_0$ and $f^\ast = \Phi_{L - 1}^\ast \circ \cdots \circ \Phi_0^\ast$. Suppose also that their respective parameters $\boldsymbol{\theta} = (\bW^{(l)})_{l=0}^{L - 1}$ and $\boldsymbol{\theta}^\ast = (\bW^{\ast (l)})_{l=0}^{L - 1}$ satisfy
\begin{equation*}
    \| \btheta \|_\infty \leq B, \; \| \btheta^\ast \|_\infty \leq B
\end{equation*}
Then the following holds:

For any $\epsilon > 0$, if $\| \boldsymbol{\theta} - \boldsymbol{\theta}^\ast \|_\infty < \epsilon$, then there exists a constant $K (L, \bD, \bG, H, B ; a_0, b_0, m) $ such that
    \begin{equation*}
        \| f - f^\ast \|_{L^\infty} \leq \epsilon K (L, \bD, \bG, H, B ; a_0, b_0, m)
    \end{equation*}
    Here,
    \begin{align*}
         &K (L, \bD, \bG, H, B \, ; \, a_0, b_0, m) \\
         &:= 2 (|a_0| \vee |b_0|) L m^{L - 1} \left[ \prod_{l=1}^{L - 1} \left( \frac{G_l + 2m}{H} + 1 \right) \right] \cdot \left( \prod_{l=0}^{L - 1} D_l \right) \cdot \left( \sum_{l=0}^{L - 1} B^l \right)
    \end{align*}
\end{lemma}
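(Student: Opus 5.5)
The plan is a hybrid (telescoping) argument that combines \cref{lem:KANs_lipschitz_bound} and \cref{lem:KANs_layerwise_error}. For $l=0,\ldots,L$ define the hybrid networks
\[
h_l := \Phi_{L-1}\circ\cdots\circ\Phi_l\circ\Phi^\ast_{l-1}\circ\cdots\circ\Phi^\ast_0 ,
\]
so that $h_0=f$ and $h_L=f^\ast$. Then, for $\bx\in[a_0,b_0]^{D_0}$,
\[
\abs{f(\bx)-f^\ast(\bx)} \le \sum_{l=0}^{L-1} \abs{h_l(\bx)-h_{l+1}(\bx)} .
\]
The $l$-th term compares $\Phi_{L-1}\circ\cdots\circ\Phi_{l+1}$ evaluated at two inputs, namely $\Phi_l(\bz^\ast)$ and $\Phi^\ast_l(\bz^\ast)$, where $\bz^\ast=\Phi^\ast_{l-1}\circ\cdots\circ\Phi^\ast_0(\bx)$.

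Step 1 handles each term for $l\le L-2$. Apply \cref{lem:KANs_lipschitz_bound} with starting index $l+1$ to get the Lipschitz factor $C_{l+1}=\prod_{k=l+1}^{L-1}(2m/\Delta^{(k)}+1)\|\bW^{(k)}\|_\infty D_k$. Then bound the inner discrepancy by \cref{lem:KANs_layerwise_error}, which gives $\|\Phi_l(\bz^\ast)-\Phi^\ast_l(\bz^\ast)\|_\infty\le \epsilon C_l^\ast$. For $l=L-1$ only the layerwise bound is needed. Since the knots in hidden layers are equally spaced on $[-H,H]$ with $G_k+2m$ intervals, we have $\Delta^{(k)}=2H/(G_k+2m)$, so
\[
\frac{2m}{\Delta^{(k)}}+1 = \frac{m(G_k+2m)}{H}+1 \le m\left(\frac{G_k+2m}{H}+1\right).
\]
This is the source of the factors $m^{L-1}$ and $\prod_l((G_l+2m)/H+1)$ in the stated constant.

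Step 2 is bookkeeping of the weight and width factors. Using $\|\bW^{(k)}\|_\infty,\|\bW^{\ast(k)}\|_\infty\le B$, the Lipschitz factor contributes at most $B^{L-1-l}\prod_{k=l+1}^{L-1}D_k$. The layerwise constant satisfies
\[
C_l^\ast\le 2(|a_0|\vee|b_0|)\prod_{k=0}^l D_k\,\Bigl(1+\sum_{h=1}^{l}B^h\Bigr)=2(|a_0|\vee|b_0|)\prod_{k=0}^l D_k\sum_{h=0}^{l}B^h .
\]
The widths then combine into $\prod_{k=0}^{L-1}D_k$, up to the missing factor $D_L=1$. The weight powers combine into $B^{L-1-l}\sum_{h=0}^l B^h$. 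The $m$ and grid factors are bounded uniformly over $l$ by the full products over $l=1,\ldots,L-1$; each factor is at least $1$, so padding is harmless. Summing over the $L$ values of $l$ supplies the factor $L$.

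The main obstacle is the power of $B$. A single term naturally gives $\sum_{h=L-1-l}^{L-1}B^h$, which is bounded by $\sum_{h=0}^{L-1}B^h$, and this matches the stated $\sum_{l=0}^{L-1}B^l$ only after accepting the crude factor $L$ from summing over $l$. I expect this to work directly, since every term in the telescoping sum is at most $\sum_{h=0}^{L-1}B^h$ times the common prefactor. If an exponent is off by one, for example because the Lipschitz product runs over $L-1-l$ layers while the activation bound includes layer $l$'s own weights, I would absorb it by bounding each term by the full geometric sum $\sum_{h=0}^{L-1}B^h$. In that case I would check that no stray factor $B^L$ appears; it would not, because the Lipschitz factor of the top layer $l=L-1$ is empty. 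Finally, I would take the supremum over $\bx\in[a_0,b_0]^{D_0}$, which contains $[0,1]^d$, to get the $L^\infty$ bound.
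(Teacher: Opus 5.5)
Your proposal is correct and follows essentially the same route as the paper: the same one-layer-at-a-time telescoping, with \cref{lem:KANs_lipschitz_bound} supplying the factor $C_{l+1}$ and \cref{lem:KANs_layerwise_error} supplying $\epsilon C_l^\ast$, followed by the same bookkeeping $2m/\Delta^{(k)}+1\le m\bigl((G_k+2m)/H+1\bigr)$ and $B^{L-1-l}\sum_{h=0}^{l}B^h\le\sum_{h=0}^{L-1}B^h$. The off-by-one in the exponent of $B$ that you were worried about does not occur, because the powers combine exactly as you computed.
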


\begin{proof}
    For the given $\epsilon > 0$, assume that $\| \boldsymbol{\theta} - \boldsymbol{\theta}^\ast \|_\infty < \epsilon$.
    For the network representation $f=\Phi_{L-1}\circ\cdots\circ\Phi_0$, define the following partial composition notation. For each $l = 1, \ldots, L - 1$, let
    \begin{align*}
        T^+_l (f) (\cdot) &:= \Phi_{l - 1} \circ \cdots \circ \Phi_0 (\cdot), \\
        T^-_l (f) (\cdot) &:= \Phi_{L - 1} \circ \cdots \circ \Phi_l (\cdot).
    \end{align*}
    Also define $T^+_0 (f) (\bx) = \bx$ when $l = 0$, and define $T^-_L(f) (\bx) = \bx$ when $l = L$.
    Now fix $\bx \in [a_0, b_0]^{D_0}$. Then, by a telescoping argument that replaces one layer at a time and by the triangle inequality,
    \begin{equation}    \label{eq:KANs_lipschitz_lemma1}
        |f(\bx) - f^\ast(\bx)| 
        \leq 
        \sum_{l = 0}^{L - 1} 
        \left| 
        T^-_{l + 1} (f) \circ \Phi_l \circ T^+_{l} (f^\ast) (\bx) 
        - 
        T^-_{l + 1} (f) \circ \Phi^\ast_l \circ T^+_{l} (f^\ast) (\bx) 
        \right|
    \end{equation}
    Applying \cref{lem:KANs_lipschitz_bound} to the right-hand side of \cref{eq:KANs_lipschitz_lemma1}, we obtain
    \begin{equation}    \label{eq:KANs_lipschitz_lemma2}
    \begin{split}
        |f(\bx) - f^\ast(\bx)|
        \leq 
        &\sum_{l = 0}^{L - 2}
        C_{l + 1} 
        \left\| 
        \Phi_l \circ T^+_l (f^\ast) (\bx) 
        - 
        \Phi_l^\ast \circ T^+_l (f^\ast) (\bx) 
        \right\|_\infty \\
        &\quad +
        \left| 
        \Phi_{L - 1} \circ T^+_{L - 1} (f^\ast) (\bx) 
        - 
        \Phi_{L - 1}^\ast \circ T^+_{L - 1} (f^\ast) (\bx) 
        \right|,
    \end{split}
    \end{equation}
    where, for $l^\prime = 1, \ldots, L - 1$,
    \begin{equation*}
        C_{l^\prime}
        :=
        \prod_{k=l^\prime}^{L - 1} 
        \left(\frac{2m}{\Delta^{(k)}} + 1 \right) 
        \| \bW^{(k)} \|_\infty D_k .
    \end{equation*}

    Next, applying Lemma~\ref{lem:KANs_layerwise_error} to \cref{eq:KANs_lipschitz_lemma2}, we obtain the following upper bound:
    \begin{equation}    \label{eq:KANs_lipschitz_lemma3}
        |f(\bx)-f^\ast(\bx)|
        \leq
        \left( \sum_{l = 1}^{L - 1} C_{l} C_{l - 1}^\ast + C_{L - 1}^\ast \right) \epsilon .
    \end{equation}
    Here,
    \begin{equation*}
        C^\ast_0 
        :=
        2 (|a_0| \vee |b_0|) D_0
    \end{equation*}
    and, for $l^\prime = 1, \ldots, L - 1$,
    \begin{equation*}
        C^\ast_{l^\prime} 
        :=
        2 (|a_0| \vee |b_0|) 
        \prod_{k=0}^{l^\prime} D_k 
        \left(
        1 + \sum_{k^\prime = 0}^{l^\prime - 1} 
        \prod_{k = l^\prime - 1 - k^\prime}^{l^\prime - 1} 
        \| \bW^{\ast (k)} \|_\infty 
        \right).
    \end{equation*}

    Since
    \begin{equation*}
        \Delta^{(k)}
        =
        \frac{b_k - a_k}{G_k + 2m}
        =
        \frac{2H}{G_k + 2m},
        \qquad
        \| \bW^{(k)} \|_\infty \leq B,
        \qquad
        \| \bW^{\ast (k)} \|_\infty \leq B
    \end{equation*}
    we have, for $l=1,\ldots,L-1$,
    \begin{align*}
        C_l
        &\leq
        \prod_{k=l}^{L-1}
        \left\{
        m\left(\frac{G_k+2m}{H}+1\right)
        \right\}
        B D_k \\
        &\leq
        m^{L-l} B^{L-l}
        \left[
        \prod_{k=l}^{L-1}
        \left(\frac{G_k+2m}{H}+1\right)
        \right]
        \left(
        \prod_{k=l}^{L-1} D_k
        \right).
    \end{align*}
    Moreover,
    \begin{align*}
        C_{l-1}^\ast
        &\leq
        2 (|a_0| \vee |b_0|)
        \left(
        \prod_{k=0}^{l-1}D_k
        \right)
        \left(
        \sum_{r=0}^{l-1}B^r
        \right).
    \end{align*}
    Therefore,
    \begin{align*}
        C_l C_{l-1}^\ast
        &\leq
        2 (|a_0| \vee |b_0|)
        m^{L-l}
        \left[
        \prod_{k=l}^{L-1}
        \left(\frac{G_k+2m}{H}+1\right)
        \right]
        \left(
        \prod_{k=0}^{L-1}D_k
        \right)
        B^{L-l}
        \left(
        \sum_{r=0}^{l-1}B^r
        \right) \\
        &\leq
        2 (|a_0| \vee |b_0|)
        m^{L-1}
        \left[
        \prod_{k=1}^{L-1}
        \left(\frac{G_k+2m}{H}+1\right)
        \right]
        \left(
        \prod_{k=0}^{L-1}D_k
        \right)
        \left(
        \sum_{r=0}^{L-1}B^r
        \right),
    \end{align*}
    where the last inequality uses $m\geq 1$ and
    \[
        B^{L-l}\sum_{r=0}^{l-1}B^r
        =
        \sum_{r=L-l}^{L-1}B^r
        \leq
        \sum_{r=0}^{L-1}B^r
    \]
    The term corresponding to the last layer can be bounded in the same way as
    \begin{equation*}
        C_{L-1}^\ast
        \leq
        2 (|a_0| \vee |b_0|)
        \left(
        \prod_{k=0}^{L-1}D_k
        \right)
        \left(
        \sum_{r=0}^{L-1}B^r
        \right)
    \end{equation*}
    and hence it is also bounded by the same common constant above. Therefore, the right-hand side of \cref{eq:KANs_lipschitz_lemma3} is bounded by
    \begin{equation*}
        \epsilon \cdot
        2 (|a_0| \vee |b_0|)
        L m^{L - 1}
        \left[
        \prod_{l=1}^{L - 1}
        \left( \frac{G_l + 2m}{H} + 1 \right)
        \right]
        \left(
        \prod_{l=0}^{L - 1} D_l
        \right)
        \left(
        \sum_{l=0}^{L - 1} B^l
        \right)
    \end{equation*}
    Since $\bx \in [a_0,b_0]^{D_0}$ was arbitrary,
    \[
        \|f-f^\ast\|_{L^\infty}
        \leq
        \epsilon K (L,\bD,\bG,H,B\,;\,a_0,b_0,m)
    \]
    This completes the proof.
\end{proof}

\begin{corollary}   \label{cor:KANs_lipschitz}
    Let $L, D, G, m \in \bbN$, $H > 0$, and $B \geq 1$. For $f_{\btheta}$, $f_{\btheta^\ast} \in KAN(L,D,G,H ;m)$, assume that
    \begin{equation*}
        \| \btheta \|_\infty \vee \| \btheta^\ast \|_\infty \leq B
    \end{equation*}
    For any $\epsilon > 0$, if $\| \btheta - \btheta^\ast \|_\infty < \epsilon$, then there exists $K (L, D, G, H, B ; a_0, b_0, d, m)$ such that
    \begin{equation*}
        \| f_{\btheta} - f_{\btheta^\ast} \|_{L^\infty} \leq \epsilon K (L, D, G, H, B; a_0, b_0, d, m)
    \end{equation*}
    Here,
    \begin{equation*}
        K (L, D, G, H, B ; a_0, b_0, d, m) = 2 (|a_0| \vee |b_0|) L^2 m^{L - 1} d \left( \frac{G + 2m}{H} + 1 \right)^{L - 1} D^{L - 1} B^L.
    \end{equation*}
\end{corollary}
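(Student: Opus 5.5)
The corollary follows by specializing \cref{lem:KANs_lipschitz} to the equal-width, uniform-grid setting and bounding each factor of $K(L,\bD,\bG,H,B;a_0,b_0,m)$ crudely. We take $\bD=(d,D,\dots,D,1)$ and $\bG=(G_0,G,\dots,G)$. The parameter-closeness hypothesis $\|\btheta-\btheta^\ast\|_\infty<\epsilon$ and the common bound $\|\btheta\|_\infty\vee\|\btheta^\ast\|_\infty\le B$ are exactly those of the lemma. It therefore suffices to show
\[
K(L,\bD,\bG,H,B;a_0,b_0,m)\le 2(|a_0|\vee|b_0|)L^2m^{L-1}d\Bigl(\tfrac{G+2m}{H}+1\Bigr)^{L-1}D^{L-1}B^L .
\]

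We bound the factors of the lemma's constant one at a time.

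\textbf{Grid factor.} Since the product in \cref{lem:KANs_lipschitz} runs over $l=1,\dots,L-1$ only, the first-layer grid $G_0$ never enters. All remaining layers have $G_l=G$, so
\[
\prod_{l=1}^{L-1}\Bigl(\frac{G_l+2m}{H}+1\Bigr)=\Bigl(\frac{G+2m}{H}+1\Bigr)^{L-1}.
\]

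\textbf{Width factor.} We have $\prod_{l=0}^{L-1}D_l=d\,D^{L-1}$ because $D_0=d$ and $D_1=\dots=D_{L-1}=D$; the output width $D_L=1$ is not included. This gives exactly the factor $dD^{L-1}$.

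\textbf{Weight factor.} Since $B\ge1$, each term satisfies $B^l\le B^L$, so $\sum_{l=0}^{L-1}B^l\le LB^{L-1}\le LB^L$. The extra factor $L$ here, combined with the $L$ already present in the lemma, produces the $L^2$ in the stated constant.

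Multiplying these bounds gives the displayed inequality, and the corollary follows. We also need the lemma's standing hypothesis $|a_0|\vee|b_0|\ge1$, which is inherited from \cref{lem:activation_bound}. It holds here because $[\xi^{(0)}_0,\xi^{(0)}_{G_0}]\supseteq[0,1]$ and $b_0>\xi^{(0)}_{G_0}\ge1$.

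The only point needing care is the bookkeeping of indices: the grid product starts at $l=1$, the width product stops at $D_{L-1}$, and the $L$ from the geometric sum must be absorbed into $L^2$. No analytic obstacle is expected.
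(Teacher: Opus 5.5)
Your proposal is correct and follows the paper's proof, which substitutes $\bD=(d,D,\dots,D,1)$ and $\bG=(G_0,G,\dots,G)$ into the constant of \cref{lem:KANs_lipschitz}; the bound $\sum_{l=0}^{L-1}B^l\le LB^{L}$ (using $B\ge1$), which produces the factor $L^2B^L$, is the step the paper leaves implicit. Your check that $|a_0|\vee|b_0|\ge1$ because $b_0>\xi^{(0)}_{G_0}\ge 1$ is also correct, and the paper does not state it.
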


\begin{proof}
    The result follows directly by substituting $\bD = (d, D, \ldots, D, 1)$ and $\bG = (G_0,, G \ldots, G)$ into $K (L, \bD, \bG, H, B ; a_0, b_0, d, m)$ in \cref{lem:KANs_lipschitz}.
\end{proof}

%%%%%%%%%%%%%%%%%%%%%%%%%%%%%%%%%%%%%%%%%%%%%%%%%%%%%%%%%%%%

\subsubsection{Covering number bound for sparse KANs}
\label{subsubsec:covering_sparse_kans}

\begin{lemma}[Covering number bound]\label{lem:entropy}
    Let $L, D, G \in \bbN$, $H > 0$, $B \geq 1$, and $1 \leq S \leq T(L,D,G)$, where $T(L, D, G) = d D (G_0 + m  + 1) + ((L - 2)D^2 + D) (G + m + 1 )$. Then, for any $\epsilon > 0$, the following holds:
    \begin{equation}\label{eq:entropy}
    \begin{aligned}
    &\log \cN\Bigl(
    \epsilon,
    \Bigl\{
    f_{\btheta}\in KAN(L,D,G,H;m):
    \norm{\btheta}_\infty\le B,\ 
    \norm{\btheta}_0\le S
    \Bigr\},
    \norm{\cdot}_{L^\infty}
    \Bigr)
    \\
    &\le\;
    S\log\frac{eT(L,D,G)}{S}+
    S\log\left(1+\frac{B K (L, D, G, H, B ; a_0, b_0, d, m)}{\epsilon}\right)
    \end{aligned}
    \end{equation}
    where
    \begin{equation*}
        K (L, D, G, H, B ; a_0, b_0, d, m) = 2 (|a_0| \vee |b_0|) L^2 m^{L - 1} d \left( \frac{G + 2m}{H} + 1 \right)^{L - 1} D^{L - 1} B^L
    \end{equation*}
\end{lemma}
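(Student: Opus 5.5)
The plan is the standard sparse-network covering argument: split the class according to the support of $\btheta$, cover each support piece by an $\ell_\infty$-grid in parameter space, and transfer that grid to a function cover through the parameter-to-function Lipschitz bound of \cref{cor:KANs_lipschitz}. Write $T:=T(L,D,G)$ and $K:=K(L,D,G,H,B;a_0,b_0,d,m)$.

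\textbf{Step 1 (support decomposition).} Every $\btheta$ with $\norm{\btheta}_0\le S$ has its support inside some index set $I\subset[T]$ with $\abs{I}=S$; this uses $S\le T$. Hence the class is a union, over such $I$, of the sets
\[
\mathcal F_I:=\{f_{\btheta}:\ \mathrm{supp}(\btheta)\subset I,\ \norm{\btheta}_\infty\le B\}.
\]
The number of such sets is $\binom{T}{S}\le (eT/S)^S$, which produces the first term $S\log(eT/S)$ once we use that the covering number of a union is at most the sum of the covering numbers.

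\textbf{Step 2 (cover a single support piece).} Fix $I$ and set $\delta:=\epsilon/K$. Cover the cube $[-B,B]^I$ in $\ell_\infty$ by the grid whose points are the centers of cells of side $2\delta$. This needs at most $\lceil B/\delta\rceil^S\le (1+BK/\epsilon)^S$ points, and every point lies in $[-B,B]^I$. Extend each grid point by zeros off $I$ to get a parameter $\btheta^\ast$ with $\norm{\btheta^\ast}_\infty\le B$.

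\textbf{Step 3 (transfer to functions).} For any $\btheta$ in the piece, pick the nearest grid point $\btheta^\ast$. Then $\norm{\btheta-\btheta^\ast}_\infty\le\delta$, and both parameters have sup-norm at most $B$ with $B\ge1$. Applying \cref{cor:KANs_lipschitz} with $\delta$ in place of its $\epsilon$ gives $\norm{f_{\btheta}-f_{\btheta^\ast}}_{L^\infty}\le \delta K=\epsilon$.

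The corollary requires a strict inequality $<\delta$. To handle this, either work with a slightly larger $\delta'$ and use continuity, or simply note that the Lipschitz proof gives the same bound with $\le$. If needed, replace $1+BK/\epsilon$ by $1+2BK/\epsilon$, which has the same order.

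\textbf{Step 4 (combine and identify the main obstacle).} The two counts give
\[
\log\cN\le \log\binom{T}{S}+S\log(1+BK/\epsilon),
\]
which is \cref{eq:entropy}.

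The only delicate point is that the sup-norm in \cref{cor:KANs_lipschitz} is over $[a_0,b_0]^{d}$ and not over $[0,1]^d$. This causes no problem, because $[0,1]^d\subseteq[a_0,b_0]^d$, so the $L^\infty$ bound on the larger domain controls the one on $[0,1]^d$. With that observed, the argument is purely combinatorial and I expect no real obstacle beyond keeping the grid constant consistent with the stated bound.
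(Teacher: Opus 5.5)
Your proposal is correct and follows the paper's proof. Both arguments use \cref{cor:KANs_lipschitz} to reduce to an $\ell_\infty$-cover of the sparse parameter set at scale $\epsilon/K$, split by support, cover each piece with $\lceil B/\delta\rceil^{S}$ grid points, and count supports. The only difference is cosmetic: you enlarge every support to size exactly $S$ and get $\binom{T}{S}\le (eT/S)^S$ directly, while the paper sums $\binom{T}{s}\lceil B/\delta\rceil^{s}$ over $s\le S$; your notes on the non-strict inequality and on $[0,1]^d\subseteq[a_0,b_0]^d$ settle points the paper leaves implicit.
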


\begin{proof}
    Write $T : = T(L, D, G)$, and define the parameter set as
    \[
    \Theta(B,S):=\{\btheta\in\mathbb{R}^{T}:\norm{\btheta}_\infty\le B,\ \norm{\btheta}_0\le S\}.
    \]
    By \cref{cor:KANs_lipschitz}, every $\epsilon$-cover of $\Theta(B, S)$ induces an $\epsilon / K (L, D, G, H, B ; a_0, b_0, d, m)$-cover of the KAN class in \cref{eq:entropy}. For brevity, write $\delta = \epsilon / K (L, D, G, H, B ; a_0, b_0, d, m)$. Therefore, the left-hand side of \cref{eq:entropy} is bounded above by
    \[
    \log \cN\left( \delta ,\Theta(B,S),\norm{\cdot}_\infty\right).
    \]
    We now decompose $\Theta(B,S)$ according to the support set $\text{supp}(\btheta) = \{ i : \theta_i \neq 0 \}$. For $I \subset \{ 1, \ldots, T \}$, define
    \[
    \Theta_I(B):=\{\btheta\in\mathbb{R}^{T}:\mathrm{supp}(\btheta)\subset I,\ \norm{\btheta}_\infty\le B\}.
    \]
    Then
    \[
    \Theta(B,S)
    =
    \bigcup_{s=0}^{S}\ \bigcup_{\substack{I\subset\{1,\dots,T\}\\ \abs{I}=s}}\Theta_I(B).
    \]
    For a fixed $I$ s.t. $|I| = s$, covering the set $\Theta_I(B)$ under the $\ell_\infty$ metric is equivalent to covering $[-B, B]^s$ under the $\ell_\infty$ metric. Hence,
    \[
    \cN\bigl(\delta,\Theta_I(B),\norm{\cdot}_\infty\bigr)
    \le
    \left\lceil \frac{B}{\delta} \right\rceil^s.
    \]
    Moreover, since there are $\binom{T}{s}$ possible support sets of size $s = |I|$, we have
    \begin{align*}
    \cN \left( \delta ,\Theta(B,S),\norm{\cdot}_\infty \right)
    &\le
    \sum_{s=1}^{S}\binom{T}{s} \left\lceil \frac{B}{\delta} \right\rceil^s \\
    &\le
    \left(1+\frac{B}{\delta}\right)^S\sum_{s=1}^{S}\binom{T}{s}    \\
    &\le
    \left(1+\frac{B}{\delta}\right)^S \left( \frac{e T}{S} \right)^S.
    \end{align*}
    In the last inequality, we used $\sum_{s=1}^{S} \binom{T}{s} \leq \left( \frac{e T}{S} \right)^S$. Finally, substituting back $\delta = \epsilon / K (L, D, G, H, B ; a_0, b_0, d, m)$ and taking logarithms completes the proof.
\end{proof}

%%%%%%%%%%%%%%%%%%%%%%%%%%%%%%%%%%%%%%%%%%%%%%%%%%%%%%%%%%%%

\section{Proofs on posterior concentration}

In this section, we present the proofs of the main theorems corresponding to \cref{sec:main_results}.
\Cref{app:sufficient} is a standard lemma used in \Cref{app:main-proof,app:main-ber-proof,app:main-adaptive-proof,app:composite-proof}. It is a reformulation of Theorem 4 of \cite{ghosal2007convergence} for independent observations in the framework of a nonparametric regression model with Gaussian errors.

\subsection{Lemmas for posterior concentration}\label{app:sufficient}
\begin{lemma}[Sufficient conditions for posterior concentration]\label{lem:sufficient}
Let $\cF_n\subset\UB$ be a sieve,and define
\[
\cM_n:=\cF_n\times[\underline{\sigma},\overline{\sigma}].
\]
Also let $\epsilon_n\to0$ and $n\epsilon_n^2\to\infty$. Define
\[
A_n
:=
\left\{
(f,\sigma):
\norm{f-f_0}_{L^2(P_X)}\le \epsilon_n/2,\ 
\abs{\sigma-\sigma_0}\le \epsilon_n/2
\right\}.
\]
Suppose that, for a sufficiently large constant $C>0$, the following three conditions hold:
\begin{align}
\log \cN\bigl(\epsilon_n,\cF_n,\norm{\cdot}_{L^2(P_X)}\bigr)
&\lesssim n\epsilon_n^2,\label{eq:suff-1}\\
-\log \Pi_n(A_n)
&\lesssim n\epsilon_n^2,\label{eq:suff-2}\\
\Pi_n(\cM_n^c)
&=
o\bigl(e^{-Cn\epsilon_n^2}\bigr).\label{eq:suff-3}
\end{align}
Then
\[
\Pi_n\!\left(
(f,\sigma):
\norm{f-f_0}_{L^2(P_X)}+\abs{\sigma^2-\sigma_0^2}
>
M_n\epsilon_n
\;\middle|\;
\cD_n
\right)
\to 0
\]
holds in $P_{f_0,\sigma_0}^{(n)}$-probability for any sequence $M_n\to\infty$.
\end{lemma}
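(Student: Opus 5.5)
The plan is to apply the general contraction theorem for independent, non-identically distributed observations, Theorem 4 of \citet{ghosal2007convergence}, conditionally on the design, or equivalently its i.i.d.\ version (Theorem 2.1 of \citet{ghosal2000convergence}) for the pairs $(X_i,Y_i)$. The work is to relate the abstract conditions there to \cref{eq:suff-1}--\cref{eq:suff-3} in the Gaussian regression model with $f\in\UB$ and $\sigma\in[\underline\sigma,\overline\sigma]$.

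\textbf{Step 1: metric comparisons.} Write $p_{f,\sigma}$ for the joint density of $(X,Y)$. Since $f$ and $f_0$ are bounded by $1$ and $\sigma,\sigma_0\in[\underline\sigma,\overline\sigma]$, explicit Gaussian formulas give the following bounds.
\begin{itemize}
\item The Kullback--Leibler divergence satisfies $K(p_{f_0,\sigma_0},p_{f,\sigma})\lesssim \norm{f-f_0}_{L^2(P_X)}^2+\abs{\sigma-\sigma_0}^2$.
\item The second KL variation $V$ obeys the same bound. It follows from the log-likelihood ratio being a quadratic in $\varepsilon$ with coefficients controlled by $\abs{f-f_0}$ and $\abs{\sigma^2-\sigma_0^2}$, using boundedness.
\end{itemize}
Hence $A_n$ is contained in a KL neighbourhood $B_2(p_0,c\epsilon_n)$, and \cref{eq:suff-2} yields the prior-mass condition $\Pi_n(B_2)\ge e^{-Cn\epsilon_n^2}$ after rescaling $\epsilon_n$ by a constant.

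Conversely, the squared Hellinger distance between the conditional normals $N(f(x),\sigma^2)$ and $N(f_0(x),\sigma_0^2)$ equals
\[
1-\sqrt{\tfrac{2\sigma\sigma_0}{\sigma^2+\sigma_0^2}}\exp\Bigl(-\tfrac{(f-f_0)^2}{4(\sigma^2+\sigma_0^2)}\Bigr).
\]
On the bounded range this is $\gtrsim (f-f_0)^2+(\sigma-\sigma_0)^2$. Integrating over $P_X$ gives $h^2(p_{f,\sigma},p_{f_0,\sigma_0})\gtrsim\norm{f-f_0}_{L^2(P_X)}^2+\abs{\sigma-\sigma_0}^2$. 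Since $\abs{\sigma^2-\sigma_0^2}\le 2\overline\sigma\abs{\sigma-\sigma_0}$, contraction in Hellinger distance at rate $M_n\epsilon_n$ implies the stated contraction.

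\textbf{Step 2: entropy of the sieve.} Hellinger distance is bounded above by a multiple of $\norm{f-f'}_{L^2(P_X)}+\abs{\sigma-\sigma'}$. A Hellinger $\epsilon_n$-cover of $\{p_{f,\sigma}:(f,\sigma)\in\cM_n\}$ can therefore be built as a product of an $L^2(P_X)$ cover of $\cF_n$ and a grid on $[\underline\sigma,\overline\sigma]$ of size $O(1/\epsilon_n)$. This gives log-covering number $\lesssim n\epsilon_n^2+\log(1/\epsilon_n)\lesssim n\epsilon_n^2$, using $n\epsilon_n^2\to\infty$. By Le Cam--Birgé, this entropy bound supplies exponentially consistent tests on Hellinger shells.

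\textbf{Step 3: remaining mass.} The complement condition \cref{eq:suff-3} with $C$ larger than the prior-mass constant plus $2$ is exactly the sieve condition of the theorem. Combining Steps 1--3 gives Hellinger contraction, and Step 1 converts it to the claimed metric.

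\textbf{Main obstacle.} I expect the main obstacle to be the KL-variation bound in Step 1 together with the constant bookkeeping: $C$ must dominate the constant implicit in \cref{eq:suff-2}, which is why the statement requires $C$ sufficiently large. With $\UB$ and the bounded $\sigma$-range this is routine.
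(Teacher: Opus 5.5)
Your proposal is correct and follows essentially the same route as the paper. Both arguments bound the Hellinger distance, the Kullback--Leibler divergence and the second KL variation by $\|f-g\|_{L^2(P_X)}+|\sigma-\tau|$ on the bounded range, build the Hellinger cover of $\cM_n$ as a product of an $L^2(P_X)$ cover of $\cF_n$ and a grid on $[\underline{\sigma},\overline{\sigma}]$, invoke the general contraction theorem, and pass to $|\sigma^2-\sigma_0^2|$ via $|\sigma^2-\sigma_0^2|\le 2\overline{\sigma}|\sigma-\sigma_0|$. The one addition is your explicit derivation of the reverse inequality $h^2\gtrsim \|f-f_0\|_{L^2(P_X)}^2+|\sigma-\sigma_0|^2$ from the closed-form Gaussian Hellinger affinity; the paper uses this only implicitly, deferring to the cited works, when it states contraction directly in $\|f-f_0\|_{L^2(P_X)}+|\sigma-\sigma_0|$.
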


\begin{proof}
Write the density of a single observation $(X,Y)$ as
\[
p_{f,\sigma}(x,y)
:=
p_X(x)\frac{1}{\sqrt{2\pi}\sigma}
\exp\!\left[-\frac{(y-f(x))^2}{2\sigma^2}\right].
\]
On the bounded variance interval $[\underline{\sigma},\overline{\sigma}]$, the Hellinger distance, the Kullback--Leibler divergence, and the second moment of the log-likelihood ratio for the Gaussian regression model are all controlled by
\[
\rho\bigl((f,\sigma),(g,\tau)\bigr)
:=
\norm{f-g}_{L^2(P_X)}+\abs{\sigma-\tau}.
\]
More precisely, by the calculations in \citet{lee2025posterior}, there exist constants $c_1,c_2,c_3>0$ such that, for all $(f,\sigma),(g,\tau)\in\cM_n$,
\begin{align}
h^2\!\left(p_{f,\sigma},p_{g,\tau}\right)
&\le
c_1\Bigl(\norm{f-g}_{L^2(P_X)}^2+\abs{\sigma-\tau}^2\Bigr),\label{eq:app-h-bound}\\
K\!\left(p_{f_0,\sigma_0},p_{f,\sigma}\right)
&\le
c_2\Bigl(\norm{f-f_0}_{L^2(P_X)}^2+\abs{\sigma-\sigma_0}^2\Bigr),\label{eq:app-kl-bound}\\
V\!\left(p_{f_0,\sigma_0},p_{f,\sigma}\right)
&\le
c_3\Bigl(\norm{f-f_0}_{L^2(P_X)}^2+\abs{\sigma-\sigma_0}^2\Bigr)\label{eq:app-v-bound}
\end{align}
hold. Here, $K$ denotes the Kullback--Leibler divergence, and $V$ denotes the second moment of the log-likelihood ratio.

By \cref{eq:app-h-bound}, combining an $L^2(P_X)$-cover of $\cF_n$ and an absolute-value cover of the interval $[\underline{\sigma},\overline{\sigma}]$ yields a Hellinger cover of $\cM_n$. That is, for some constant $c>0$,
\[
\log \cN\bigl(c\eta,\cM_n,h\bigr)
\le
\log \cN\bigl(\eta,\cF_n,\norm{\cdot}_{L^2(P_X)}\bigr)
+
\log \cN\bigl(\eta,[\underline{\sigma},\overline{\sigma}],\abs{\cdot}\bigr).
\]
The last term satisfies
\[
\log \cN\bigl(\eta,[\underline{\sigma},\overline{\sigma}],\abs{\cdot}\bigr)
\le
\log\left(1+\frac{\overline{\sigma}-\underline{\sigma}}{\eta}\right)
\lesssim
\log(\eta^{-1}),
\]
and since $n\epsilon_n^2\to\infty$, this is of smaller order than \cref{eq:suff-1} when $\eta=\epsilon_n$. Hence,
\[
\log \cN\bigl(c\epsilon_n,\cM_n,h\bigr)\lesssim n\epsilon_n^2.
\]

Secondly, by \cref{eq:app-kl-bound} and \cref{eq:app-v-bound}, on $A_n$ we have
\[
K\!\left(p_{f_0,\sigma_0},p_{f,\sigma}\right)
\vee
V\!\left(p_{f_0,\sigma_0},p_{f,\sigma}\right)
\lesssim
\epsilon_n^2.
\]
Therefore, $A_n$ is contained in a Kullback-Leibler neighborhood. This proves that \cref{eq:suff-2} is a sufficient condition.

Condition \cref{eq:suff-3} is trivial since it is the standard prior probability condition outside the sieve.

Now applying the theorem in \citet{Ghosal_Vaart_2017} to $\cM_n$, the standard posterior contraction theorem directly implies from \cref{eq:suff-1}--\cref{eq:suff-3} that
\[
\Pi_n\!\left(
\rho\bigl((f,\sigma),(f_0,\sigma_0)\bigr)>M_n\epsilon_n
\;\middle|\;
\cD_n
\right)\to 0
\]
in $P_{f_0,\sigma_0}^{(n)}$-probability. Finally, since $\sigma,\sigma_0\in[\underline{\sigma},\overline{\sigma}]$,
\[
\abs{\sigma^2-\sigma_0^2}
=
\abs{\sigma-\sigma_0}\abs{\sigma+\sigma_0}
\le
2\overline{\sigma}\abs{\sigma-\sigma_0}.
\]
Applying Lemma B.1 of \citet{xie2020adaptive}, the desired conclusion
\[
\Pi_n\!\left(
(f,\sigma):
\norm{f-f_0}_{L^2(P_X)}+\abs{\sigma^2-\sigma_0^2}>M_n\epsilon_n
\;\middle|\;
\cD_n
\right)\to 0
\]
follows.
\end{proof}

%%%%%%%%%%%%%%%%%%%%%%%%%%%%%%%%%%%%%%%%%%%%%%%%%%%%%%%%%%%%

\subsection{Proof of \cref{thm:main}} \label{app:main-proof}

\begin{proof}
We define the sieve for applying \cref{lem:sufficient} as follows.
\begin{align}    \label{eq:sas_sieve}
\begin{split}
    \cF_n^{\mathrm{spi}} &:= \left\{ f_{\btheta} \in KAN(L_0, D_n, G_n, H_n ; m) : \| \btheta \|_\infty \leq \overline{B}_n, \, \| \btheta \|_0 \leq S_n \right\}    \\
    \widetilde{\cF}_n^{\mathrm{spi}} &:= \left\{ \operatorname{clip} \circ f_{\btheta} : f_{\btheta} \in \cF_n^{\mathrm{spi}} \right\}
\end{split}
\end{align}
and set $\widetilde{\cM}^{\mathrm{spi}}_n = \widetilde{\cF}_n^{\mathrm{spi}} \times [\underline{\sigma}, \overline{\sigma}]$. Here, we take the truncation sequence $\overline{B}_n$, which is used only for the sieve, as
\begin{equation}    \label{eq:sas_sieve_trunc}
    \overline{B}_n := \overline{B}_0 N_n^{\beta + 1 / \alpha} ( \log n )^{1 / \alpha}
\end{equation}
where $\overline{B}_0 > 0$ is a sufficiently large constant chosen so that the calculation of the sieve-complement probability closes. We now show that the three conditions \cref{eq:suff-1}--\cref{eq:suff-3} in \cref{lem:sufficient} are satisfied for the rate $\epsilon_n$ in \cref{eq:rate}. Note that \cref{lem:sufficient} is applied with $\cF_n = \widetilde{\cF}_n^{\mathrm{spi}}$ and $\cM_n = \widetilde{\cM}_n^{\mathrm{spi}}$.

First, we show \eqref{eq:suff-1}. By the density condition in \eqref{assump:A2},
\[
\norm{g}_{L^2(P_X)}
\le
\norm{p_X}_{L^\infty}^{1/2}\norm{g}_{L^2}
\le
C_X^{1/2} \norm{g}_{L^\infty}
\]
and since $\operatorname{clip}$ is 1-Lipschitz,
$L^\infty$-cover of $\mathcal F_n^{\mathrm{spi}}$ induces an
$L^\infty$-cover, hence an $L^2(P_X)$-cover,
of $\widetilde{\mathcal F}_n^{\mathrm{spi}}$. Therefore,
\[
\log \cN\bigl(\epsilon_n, \widetilde{\cF}_n^{\mathrm{spi}},\norm{\cdot}_{L^2(P_X)}\bigr)
\le
\log \cN\bigl(C_X^{-1/2}\epsilon_n,\cF_n^{\mathrm{spi}},\norm{\cdot}_{L^\infty}\bigr).
\]
Applying \cref{lem:entropy} to the sieve \eqref{eq:sas_sieve}, we obtain
\begin{equation}    \label{eq:main-proof1}
\log \cN\bigl(C_X^{-1/2}\epsilon_n,\cF_n^{\mathrm{spi}},\norm{\cdot}_{L^\infty}\bigr)
\leq
S_n\log\frac{eT_n}{S_n}
+
S_n\log\left(1+\frac{\overline{B}_n K_n}{C_X^{-1/2}\epsilon_n}\right),
\end{equation}
where $T_n := T(L_0, D_n, G_n)$ and $K_n := K (L_0, D_n, G_n, H_n, \overline{B}_n; m)$. Substituting the values of the relevant sequences gives
\begin{align} \label{eq:main-proof2}
    \begin{split}
        T_n &\lesssim N_n^{2 + \beta}   \\
        K_n &= 2 (|a_0| \vee |b_0|) L_0^2 m^{L_0 - 1} d \left( \frac{G_n + 2m}{H_n} + 1\right)^{L_0 - 1} D_n^{L_0 - 1} \overline{B}_n^{L_0} \\
            &\lesssim N_n^{L_0 (\beta + 1) - 1 + L_0/\alpha} (\log n)^{L_0 / \alpha}
    \end{split}
\end{align}
Moreover, using $S_n = \lceil S_0 N_n \rceil$ and $N_n = \lceil n^{1 / (2\tbfs + 1)} \rceil$ in \cref{eq:main-proof1}, we see that
$\log\frac{eT_n}{S_n}    \lesssim \log n$ and
$\log\left(1+\frac{\overline{B}_n K_n}{\epsilon_n}\right) \lesssim \log n$.
Therefore,
\begin{equation}    \label{eq:main-proof3}
    S_n\log\frac{eT_n}{S_n}
    +
    S_n\log\left(1+\frac{\overline{B}_n K_n}{\epsilon_n}\right)
    \lesssim
    N_n \log n \asymp n \epsilon_n^2
\end{equation}
holds.

Secondly, we prove \eqref{eq:suff-2}. Applying \cref{lem:approx} with $N = N_n$, there exists $f^\star_n \in KAN_c (L_0, D_n, G_n, H_n; m)$ with some parameter $\btheta^\star_n$ satisfying $\| \btheta^\star_n \|_\infty \leq B^\star_n$ and $\|\btheta^\star_n \|_0 \leq S_n$ such that
\begin{equation}    \label{eq:main-proof4}
    \| f_0 - f^\star_n \|_{L^2} \lesssim N_n^{-\tbfs}
\end{equation}
holds. Here, $B^\star_n := B_0 N_n^\beta$ for some $B_0 > 0$. We then define the oracle model space by
\begin{equation}    \label{eq:main-proof5}
    \cF^\star_n := \left\{ f_{\btheta} \in KAN(L_0, D_n, G_n, H_n ; m) : \| \btheta \|_\infty \leq B^\star_n, \| \btheta \|_0 \leq S_n\right\}.
\end{equation}
Since
\begin{equation}    \label{eq:main-proof6}
    \frac{\overline{B}_n}{B^\star_n} = \frac{\overline{B}_0}{B_0} N_n^{1/\alpha} ( \log n)^{1/\alpha} \rightarrow \infty,
\end{equation}
we have $B^\star_n \leq \overline{B}_n$ for all sufficiently large $n$. Thus, $f_n^\star \in \cF^\star_n \subset \cF^{\mathrm{spi}}_n$ for sufficiently large $n$. In addition, $N_n^{-\tbfs} \leq n^{- \tbfs / (2 \tbfs + 1)} = \epsilon_n (\log n)^{-1/2}$ for sufficiently large $n$. Hence, for sufficiently large $n$,
\begin{equation}    \label{eq:main-proof7}
    \norm{f_0-f_n^\star}_{L^2}
    \le
    \frac{\epsilon_n}{4C_X^{1/2}}
\end{equation}
holds, and by applying \eqref{assump:A2}, we obtain
\begin{equation}    \label{eq:main-proof8}
\norm{f_0-f_n^\star}_{L^2(P_X)}
\le
\norm{p_X}_{L^\infty}^{1/2}
\norm{f_0-f_n^\star}_{L^2}
\le
\epsilon_n/4.
\end{equation}
Since $f_0 \in \UB$, we have $\operatorname{clip} \circ f_0 = f_0$. Moreover, the clipping map $\operatorname{clip}$ is 1-Lipschitz. Therefore, for any $f \in \cF^\star_n$,
\begin{align}   \label{eq:main-proof9}
\begin{split}
    \| \operatorname{clip} \circ f - f_0 \|_{L^2(P_X)}
    &=
    \| \operatorname{clip} \circ f - \operatorname{clip} \circ f_0 \|_{L^2(P_X)}  \\
    &\leq
    \| \operatorname{clip} \circ f - \operatorname{clip} \circ f^\star_n \|_{L^2(P_X)}
    +
    \| \operatorname{clip} \circ f^\star_n - \operatorname{clip} \circ f_0 \|_{L^2(P_X)}  \\
    &\leq
    \| f - f^\star_n \|_{L^2(P_X)}
    +
    \| f^\star_n - f_0 \|_{L^2(P_X)}  \\
    &\leq
    C_X^{1/2}\| f - f^\star_n \|_{L^2}
    +
    \epsilon_n/4 .
\end{split}
\end{align}
Combining the preceding bounds, for the prior mass $\Pi_n(A_n)$ in \cref{eq:suff-2}, we have
\begin{align}  
\begin{split}   \label{eq:main-proof10}
    \Pi_n(A_n) 
    &\geq 
    \Pi_n \left(
    f \in \cF^{\mathrm{spi}}_n :
    \| f - f^\star_n \|_{L^2} \leq \frac{\epsilon_n}{4 C_X^{1/2}}
    \right) \,
    \Pi \left(\sigma: | \sigma - \sigma_0| \leq \frac{\epsilon_n}{2} \right) \\
    &\geq 
    \Pi_n \left(
    f \in \cF^\star_n :
    \| f - f^\star_n \|_{L^\infty} \leq \frac{\epsilon_n}{4 C_X^{1/2}}
    \right) \,
    \Pi \left(\sigma: | \sigma - \sigma_0| \leq \frac{\epsilon_n}{2} \right).
\end{split}
\end{align}
In the second inequality, we used the fact that the $L^\infty$ norm dominates the $L^2$ norm.
Now, for the parameter vector $\boldsymbol{\theta}_n^\star$ realizing $f_n^\star$, let
\begin{equation}    \label{eq:main-proof11}
I_n^\star:=\{t:\theta_{n,t}^\star\neq 0\},
\qquad
S_n^\star:=|I_n^\star|.
\end{equation}
Then $S_n^\star \leq S_n$.
Since the spike-and-slab prior selects exactly $S_n$ nonzero coordinates, choose any set
$J_n^\star\subset\{1,\ldots,T_n\}$ such that $I_n^\star\subset J_n^\star$ and $|J_n^\star|=S_n$ and denote
a vector $\bone_{J^\star_n} \in \{ 0, 1 \}^{T_n}$ as
\begin{equation}    \label{eq:main-proof-bone}
    (\bone_{J^\star_n})_t = \begin{cases}
        1,      &t \in J^\star_n    \\
        0,      &t \notin J^\star_n.
    \end{cases}
\end{equation}

That is, we enlarge the parameter support of the actual approximator to an $S_n$-sized support. We now apply \cref{cor:KANs_lipschitz} to the first term in the last inequality of \eqref{eq:main-proof10}. For $f_{\btheta}, f^\star_n \in \cF^\star_n$, if $\| \btheta - \btheta^\star_n \|_\infty \leq \delta_n$, then
\begin{equation}    \label{eq:main-proof12}
    \|f_{\boldsymbol{\theta}}-f_n^\star\|_{L^\infty}
    \le
    K_n^\star \|\boldsymbol{\theta}-\boldsymbol{\theta}_n^\star\|_\infty
    \le
    \frac{\epsilon_n}{4C_X^{1/2}}.
\end{equation}
where $K_n^\star := K(L_0, D_n, G_n, H_n, B^\star_n; a_0, b_0, d, m)$. Recall that $K_n^\star \asymp D_n^{L_0 - 1} (B_n^\star)^{L_0} \asymp N_n^{L_0 - 1} ( B_n^\star)^{L_0} = N_n^{(1 + \beta)L_0 - 1}$.
Define
\begin{equation}    \label{eq:main-proof13}
    \delta_n :=
    \frac{\epsilon_n}{4C_X^{1/2}K_n^\star}.
\end{equation}
Then the lower bound in \cref{eq:main-proof10} is
\begin{equation}    \label{eq:main-proof14}
\Pi_n\Bigl( \btheta:
\|\boldsymbol{\theta}-\boldsymbol{\theta}_n^\star\|_\infty\le \delta_n
\mid
\bgamma =\bone_{J_n^\star}
\Bigr)
\pi_n(\bgamma=\bone_{J_n^\star})
\Pi (\sigma : |\sigma-\sigma_0|\le \epsilon_n/2).
\end{equation}
First, since $\pi_n (\bgamma = \bone_{J_n^\star} ) = \binom{T_n}{S_n}^{-1}$,
\begin{equation*}
    - \log \pi_n (\bgamma = \bone_{J^\star_n}) 
    \le
    S_n\log\frac{eT_n}{S_n}
    \lesssim
    N_n \log n
\end{equation*}
holds. Next, since $\| \btheta^\star_n \|_\infty \leq B^\star_n$ and $\delta_n \leq 1$ for all sufficiently large $n$, combining this with \eqref{assump:B1} yields
\begin{align*}
\Pi_n\Bigl( \btheta :
\|\boldsymbol{\theta}-\boldsymbol{\theta}_n^\star\|_\infty\le \delta_n
\mid
\boldsymbol{\gamma}=\bone_{J_n^\star}
\Bigr)
&\ge
\prod_{t\in J_n^\star}
\int_{\theta_{n,t}^\star-\delta_n}^{\theta_{n,t}^\star+\delta_n}
\widetilde{\pi}_{SL}(u)\,du\\
&\ge
\left(
2\delta_n
\inf_{|u|\le B_n^\star+1}\widetilde{\pi}_{SL}(u)
\right)^{S_n}   \\
&\ge
(2\delta_n n^{-c_1})^{S_n}.
\end{align*}
Therefore, it remains to bound the right-hand side of
\begin{equation}    \label{eq:main-proof15}
-\log \Pi_n\Bigl( \btheta :
\|\boldsymbol{\theta}-\boldsymbol{\theta}_n^\star\|_\infty\le \delta_n
\mid
\boldsymbol{\gamma}=\bone_{J_n^\star}
\Bigr)
\leq S_n \log \delta_n^{-1} + c_1 S_n \log n
\end{equation}
by $n \epsilon_n^2 = n^{1 / (2\tbfs + 1)} \log n$.
The second term is immediate, so we treat the first term.
\[
\delta_n^{-1} 
\asymp K^\star_n \epsilon_n^{-1} 
= N_n^{(1 + \beta) L_0 - 1 + \tbfs} ( \log n )^{- 1/ 2}
\]
Thus, $\log \delta_n^{-1} \lesssim \log n$, which completes this part.
Finally, for the variance term in \cref{eq:main-proof14}, by \eqref{assump:B3} and $\sigma_0 \geq \underline{\sigma} > 0$, for all sufficiently large $n$,
\begin{equation}    \label{eq:main-proof16}
\Pi_n(\abs{\sigma-\sigma_0}\le \epsilon_n/2)
\ge
c_\sigma\Bigl[\bigl(\sigma_0+\epsilon_n/2\bigr)^2-\bigl(\sigma_0-\epsilon_n/2\bigr)^2\Bigr]
=
2c_\sigma\sigma_0\epsilon_n,
\end{equation}
and hence
\begin{equation}    \label{eq:main-proof17}
-\log \Pi_n(\abs{\sigma-\sigma_0}\le \epsilon_n/2)
\lesssim
\log(\epsilon_n^{-1})
\lesssim
\log n.
\end{equation}
This verifies \eqref{eq:suff-2}.

Finally, we prove \eqref{eq:suff-3}. By \eqref{assump:B3}, the prior distribution on the variance is supported on
$[\underline{\sigma}^2,\overline{\sigma}^2]$. Since the sufficient lemma is applied
with $\cF_n=\widetilde{\cF}_n^{\mathrm{spi}}$, where
\[
\widetilde{\cF}_n^{\mathrm{spi}}
:=
\left\{
\operatorname{clip}\circ f_{\btheta}: f_{\btheta}\in\cF_n^{\mathrm{spi}}
\right\},
\]
we have
\begin{equation}    \label{eq:main-proof18}
\Pi_n\bigl((\widetilde{\cM}_n^{\mathrm{spi}})^c\bigr)
=
\Pi_n\bigl((\widetilde{\cF}_n^{\mathrm{spi}})^c\bigr)
\le
\Pi_n\bigl((\cF_n^{\mathrm{spi}})^c\bigr).
\end{equation}
Moreover, since the sparsity index in \eqref{eq:sas_prior} always has exactly $S_n$ active coordinates,
\[
\Pi_n\bigl(\norm{\btheta}_0>S_n\bigr)=0.
\]
Therefore,
\[
\Pi_n\bigl((\cF_n^{\mathrm{spi}})^c\bigr)
=
\Pi_n\bigl(\norm{\btheta}_\infty>\overline{B}_n\bigr).
\]
Conditionally on $\bgamma$, the number of nonzero coordinates is $S_n$, and all zero coordinates are equal to $0$. Thus, by the union bound and \eqref{assump:B2},
\begin{align}   \label{eq:main-proof19}
\Pi_n\bigl(\norm{\btheta}_\infty>\overline{B}_n\bigr)
&=
\sum_{\substack{\bgamma\in\{0,1\}^{T_n}\\ \norm{\bgamma}_0=S_n}}
\pi_n(\bgamma)
\Pi_n\bigl(\norm{\btheta}_\infty>\overline{B}_n\mid \bgamma\bigr) \notag \\
&\le
\sum_{\substack{\bgamma\in\{0,1\}^{T_n}\\ \norm{\bgamma}_0=S_n}}
\pi_n(\bgamma)
\sum_{t:\gamma_t=1}
\Pi_n\bigl(|\theta_t|>\overline{B}_n\mid \gamma_t=1\bigr) \notag \\
&\le
S_n
\int_{\abs{u}>\overline{B}_n}
\widetilde{\pi}_{SL}(u)\,du \notag \\
&\le
c_2 S_n
\exp\left[
-c_3\left(\frac{\overline{B}_n}{\tau_n}\right)^\alpha
\right].
\end{align}
By \eqref{assump:B2}, we have $\tau_n\lesssim B_n^\star$, and by the definition of the truncation sequence,
\[
B_n^\star=B_0N_n^\beta,
\qquad
\overline{B}_n
=
\overline{B}_0N_n^{\beta+1/\alpha}(\log n)^{1/\alpha}.
\]
Therefore, there exists a constant $C_\tau>0$ such that
\[
\left(\frac{\overline{B}_n}{\tau_n}\right)^\alpha
\ge
C_\tau \overline{B}_0^\alpha N_n\log n.
\]
Substituting this into \cref{eq:main-proof18} and \cref{eq:main-proof19}, we obtain
\[
\Pi_n\bigl((\widetilde{\cM}_n^{\mathrm{spi}})^c\bigr)
\le
c_2 S_n
\exp\left[
-c_3 C_\tau \overline{B}_0^\alpha N_n\log n
\right].
\]
On the other hand, since $S_n=\lceil S_0N_n\rceil$, we have $\log S_n\lesssim \log n$. Hence,
\[
\log \Pi_n\bigl((\widetilde{\cM}_n^{\mathrm{spi}})^c\bigr)
\le
\log c_2+\log S_n
-
c_3 C_\tau \overline{B}_0^\alpha N_n\log n .
\]
Moreover,
\[
n\epsilon_n^2
=
n^{1/(2\tbfs+1)}\log n
\asymp
N_n\log n.
\]
Therefore, by choosing $\overline{B}_0>0$ sufficiently large, for the sufficiently large constant $C>0$ appearing in \cref{lem:sufficient}, we have
\[
\Pi_n\bigl((\widetilde{\cM}_n^{\mathrm{spi}})^c\bigr)
=
o\left(e^{-Cn\epsilon_n^2}\right).
\]
This proves \cref{eq:suff-3}.

\end{proof}

%%%%%%%%%%%%%%%%%%%%%%%%%%%%%%%%%%%%%%%%%%%%%%%%%%%%%%%%%%%%
 
\subsection{Proof of \cref{thm:main-ber}}   \label{app:main-ber-proof}
\begin{proof}
We define a sieve as in the proof of \Cref{app:main-proof}.
The proof follows almost the same structure. Let the truncation sequence $\overline{B}_n$ be defined as in \cref{eq:sas_sieve_trunc}, and let the sieve be as in \cref{eq:sas_sieve}, except that the sparsity level $S_n$ is adjusted.
\begin{align*}
    \cF_n^{\mathrm{ber}} &:= \left\{ f_{\btheta} \in KAN(L_0, D_n, G_n, H_n ; m) : \| \btheta \|_\infty \leq \overline{B}_n, \, \| \btheta \|_0 \leq \overline{S}_n \right\} \\
    \widetilde{\cF}_n^{\mathrm{ber}} &:= \left\{ \operatorname{clip} \circ f_{\btheta} : f_{\btheta} \in \cF_n^{\mathrm{ber}}  \right\}   \\
    \widetilde\cM_n^{\mathrm{ber}} &:= \widetilde\cF_n^{\mathrm{ber}} \times [\underline{\sigma}, \overline{\sigma}]
\end{align*}
where $\overline{S}_n := \overline{S}_0 S_n$ for some sufficiently large $\overline{S}_0 > 0$.

We first show that \eqref{eq:suff-1} holds. in the proof of \Cref{app:main-proof}, by \eqref{assump:A2} and \Cref{lem:entropy}, we have
\begin{equation}
\log \cN\bigl(C_X^{-1/2}\epsilon_n,\cF_n^{\mathrm{ber}},\norm{\cdot}_{L^\infty}\bigr)
\leq
\overline{S}_n\log\frac{eT_n}{\overline{S}_n}
+
\overline{S}_n\log\left(1+\frac{\overline{B}_n K_n}{C_X^{-1/2}\epsilon_n}\right).
\end{equation}
The remaining argument is identical to the steps in \cref{eq:main-proof2}--\cref{eq:main-proof3}, and the right-hand side can be controlled by $n\epsilon_n^2$.

Next, we prove \eqref{eq:suff-2}. Following the same steps as in \cref{eq:main-proof4}--\cref{eq:main-proof13} in the proof of the prior mass condition in \Cref{app:main-proof}, we start from \cref{eq:main-proof14}.
\begin{align}
\begin{split}
    &\Pi_n (A_n)    \\
    &\ge
    \Pi_n\Bigl( \btheta:
    \|\boldsymbol{\theta}-\boldsymbol{\theta}_n^\star\|_\infty\le \delta_n
    \mid
    \bgamma =\bone_{J_n^\star}
    \Bigr)
    \pi_n(\bgamma=\bone_{J_n^\star})
    \Pi (\sigma : |\sigma-\sigma_0|\le \epsilon_n/2).
\end{split}
\end{align}
The first probability on the right-hand side is handled in the same way as in \cref{eq:main-proof15}. Next, for the second probability,
\begin{equation*}
    \pi_n (\bgamma = \bone_{J^\star_n}) = \rho_n^{S_n} (1 - \rho_n)^{T_n - S_n}
\end{equation*}
and its negative logarithm is
\begin{equation*}
    S_n \log \frac{1}{\rho_n} + (T_n - S_n) \log \frac{1}{1 - \rho_n}.
\end{equation*}
By the first condition in \cref{eq:ber_sas_rho}, there exists $c_{\rho, 1} > 0$ such that $\rho_n \geq c_{\rho, 1} T_n^{-1}$ for all sufficiently large $n$. Therefore,
\begin{equation*}
S_n\log\frac{1}{\rho_n}
\le
S_n\log\frac{T_n}{c_{\rho, 1}}
\lesssim
S_n\log n
\lesssim n \epsilon_n^2.
\end{equation*}
Next, the second condition in \cref{eq:ber_sas_rho} implies that $T_n \rho_n \le S_n n^{-c_{\rho, 2}}$ for some $c_{\rho, 2} > 0$. Since $\rho_n \rightarrow 0$, for all sufficiently large $n$,
\begin{equation}    \label{eq:main-ber-proof1}
(T_n-S_n)\log\frac{1}{1-\rho_n}
\le
2T_n\rho_n
\lesssim
S_n.
\end{equation}
Thus, the second probability is also controlled. The third probability, corresponding to the variance parameter, is handled in the same way as in \cref{eq:main-proof16}--\cref{eq:main-proof17}.

Now we show that \eqref{eq:suff-3} holds. As in \cref{eq:main-proof18}, it suffices to show that
\begin{equation*}
\Pi_n\bigl((\cF_n^{\mathrm{ber}})^c\bigr)
=
o (e^{-C n \epsilon_n^2} ).
\end{equation*}
Under the Bernoulli spike-and-slab prior in \cref{eq:ber_sas_prior}, there are two ways to leave the sieve: either the number of nonzero coordinates exceeds $\overline{S}_n$, or some nonzero coordinate exceeds $\overline{B}_n$. Thus,
\begin{equation}
\Pi_n\bigl((\cF_n^{\mathrm{ber}})^c\bigr) \leq \Pi_n(\norm{\bgamma}_0> \overline{S}_n)
+
\Pi_n\Bigl( \| \btheta \|_\infty > \overline{B}_n \Bigr).
\end{equation}
Write $\| \bgamma \|_0 \sim \text{Bin} (T_n, \rho_n)$, and denote its mean by $\mu_n = T_n \rho_n$. From \cref{eq:main-ber-proof1}, we have $\mu_n \le S_n/e$ for all sufficiently large $n$. Therefore, by the Chernoff bound for the binomial distribution,
\begin{equation*}
    \Pi_n (\| \bgamma \|_0 > \overline{S}_n ) \leq \left( \frac{e \mu_n}{\overline{S}_n} \right)^{\overline{S}_n}.
\end{equation*}
Taking negative logarithms and using the second condition in \cref{eq:ber_sas_rho}, we obtain
\begin{equation*}
    - \log \Pi_n (\| \bgamma \|_0 > \overline{S}_n ) \geq \overline{S}_n \left( \log \frac{\overline{S}_n}{\mu_n} - 1 \right) \gtrsim \overline{S}_n \log n.
\end{equation*}
Since $\overline S_n=\overline S_0S_n$ and $S_n\log n\asymp n\epsilon_n^2$, choosing $\overline S_0>0$ sufficiently large makes the preceding negative logarithm sufficiently larger than $C n\epsilon_n^2$. Therefore,
\[
\Pi_n(\|\bgamma\|_0>\overline S_n)
=
o(e^{-C n\epsilon_n^2})
\]
holds.

Next, the second probability is handled as follows:
\begin{equation*}
    \Pi_n\Bigl( \| \btheta \|_\infty > \overline{B}_n \Bigr) = \Pi_n\Bigl( \max_{1 \leq t \leq T_n} \gamma_t |\theta_t| > \overline{B}_n \Bigr) \leq \sum_{t=1}^{T_n} \Pi_n\Bigl( \gamma_t = 1, \, |\theta_t| > \overline{B}_n \Bigr).
\end{equation*}
The probability in the last sum is bounded, exactly as in \cref{eq:main-proof19}, by
\begin{align*}
    \sum_{t=1}^{T_n} \Pi_n\Bigl( \gamma_t = 1, \, |\theta_t| > \overline{B}_n \Bigr)
    &\le T_n \rho_n \int_{| u | > \overline{B}_n} \tilde\pi_{SL} (u) du
    \\
    &\le \mu_n c_2 \exp\!\left[-c_3\left(\frac{\overline{B}_n}{\tau_n}\right)^\alpha\right].
\end{align*}
Similarly, taking negative logarithms gives
\begin{equation*}
    \left( \frac{\overline{B}_n}{\tau_n} \right)^{\alpha} \gtrsim N_n \log n \asymp n \epsilon_n^2.
\end{equation*}
Therefore, by choosing $\overline{B}_0 > 0$ sufficiently large, this term is of order $o(e^{-C n \epsilon_n^2})$.
\end{proof}

\subsection{Proof of \cref{thm:main-adaptive}}
\label{app:main-adaptive-proof}

\begin{proof}
Fix the unknown smoothness vector $\bfs = (s_1, \ldots, s_d) \in \mathfrak{S}$,
and write $\tbfs$, $\underline{\bfs}$, and $\omega = (1/p - 1/2)_+$ as defined
in \cref{sec:main_results}. For brevity, throughout this proof let
\[
    N_n := \left\lceil n^{1/(2\tbfs+1)} \right\rceil,
    \qquad
    \epsilon_n := \epsilon_n(\bfs) = n^{-\tbfs/(2\tbfs+1)}(\log n)^{1/2},
\]
so that $N_n \log n \asymp n\epsilon_n^2$.

Next, we define the cutoff
\begin{equation*}
    \overline{N}_n := \left\lceil \overline{N}_0 N_n \right\rceil,
\end{equation*}
where the constant $\overline{N}_0 > 0$ will be chosen sufficiently large below.
For each $1 \leq N \leq \overline{N}_n$, let
\begin{equation*}
    \cF_n^{\mathrm{ad}}(N)
    :=
    \left\{
    f_{\btheta} \in KAN(L_0, D(N), G(N), H(N); m)
    :
    \|\btheta\|_\infty \leq \overline{B}_n(N),\;
    \|\btheta\|_0 \leq S(N)
    \right\},
\end{equation*}
where the truncation sequence is
\[
    \overline{B}_n(N)
    :=
    \overline{B}_0\, N^{\beta_{\mathrm{ad}}} (N_n \log n)^{1/\alpha},
\]
and $S(N)$ is the sparsity level used in the adaptive prior. The constant
$\overline{B}_0 > 0$ will be chosen sufficiently large when controlling the
sieve-complement probability.

Define the clipped sieve for each $N$ by
\[
    \widetilde{\cF}_n^{\mathrm{ad}}(N)
    :=
    \left\{
    \operatorname{clip}\circ f_{\btheta}
    :
    f_{\btheta}\in \cF_n^{\mathrm{ad}}(N)
    \right\}.
\]
Finally, define the full adaptive sieves by
\[
    \cF_n^{\mathrm{ad}}
    :=
    \bigcup_{N=1}^{\overline{N}_n} \cF_n^{\mathrm{ad}}(N),
    \qquad
    \widetilde{\cF}_n^{\mathrm{ad}}
    :=
    \bigcup_{N=1}^{\overline{N}_n} \widetilde{\cF}_n^{\mathrm{ad}}(N),
\]
and set
\[
    \widetilde{\cM}_n^{\mathrm{ad}}
    :=
    \widetilde{\cF}_n^{\mathrm{ad}}
    \times
    [\underline{\sigma},\overline{\sigma}].
\]
We verify conditions \cref{eq:suff-1}--\cref{eq:suff-3} of
\cref{lem:sufficient} for the rate $\epsilon_n$. Note that
\cref{lem:sufficient} is applied with
$\cF_n=\widetilde{\cF}_n^{\mathrm{ad}}$ and
$\cM_n=\widetilde{\cM}_n^{\mathrm{ad}}$.

Now we first prove \eqref{eq:suff-1}. By \eqref{assump:A2},
$\|g\|_{L^2(P_X)} \leq C_X^{1/2} \|g\|_{L^\infty}$.
Moreover, since $\operatorname{clip}$ is 1-Lipschitz, an
$L^\infty$-cover of $\cF_n^{\mathrm{ad}}(N)$ induces an
$L^\infty$-cover, hence an $L^2(P_X)$-cover, of
$\widetilde{\cF}_n^{\mathrm{ad}}(N)$. Therefore,
\begin{align}
\log \cN\!\left(
    \epsilon_n,\, \widetilde{\cF}_n^{\mathrm{ad}},\, \|\cdot\|_{L^2(P_X)}
\right)
&\leq
\log \sum_{N=1}^{\overline{N}_n}
\cN\!\left(
    C_X^{-1/2}\epsilon_n,\,
    \cF_n^{\mathrm{ad}}(N),\,
    \|\cdot\|_{L^\infty}
\right).    \label{eq:ad-entropy-union}
\end{align}
Applying \cref{lem:entropy} to each $\cF_n^{\mathrm{ad}}(N)$ gives
\begin{align}
\begin{split}   \label{eq:ad-entropy-one}
&\log \cN\!\left(
    C_X^{-1/2}\epsilon_n,\,
    \cF_n^{\mathrm{ad}}(N),\,
    \|\cdot\|_{L^\infty}
\right) \\
&\quad\leq
S(N) \log \frac{eT(N)}{S(N)}    \\
&\qquad+
S(N) \log\!\left(
    1 + \frac{\overline{B}_n(N)\,
    K(L_0, D(N), G(N), H(N), \overline{B}_n(N); a_0, b_0, d, m)}
    {C_X^{-1/2}\epsilon_n}
\right).    
\end{split}
\end{align}
Since $D(N) \asymp N$, $G(N)/H(N) \asymp 1$, $H(N) \asymp N^{\beta_{\mathrm{ad}}}$,
and $L_0$ is fixed, we have $T(N) \lesssim N^{2+\beta_{\mathrm{ad}}}$.
Moreover, uniformly over $1 \leq N \leq \overline{N}_n$,
\begin{align*}
    \log T(N) &\lesssim \log n,  \\
    \log \overline{B}_n(N) &\lesssim \log n,    \\
    \log K\bigl(L_0, D(N), G(N), H(N), \overline{B}_n(N); a_0, b_0, d, m\bigr)
    &\lesssim \log n.
\end{align*}
Hence the right-hand side of \cref{eq:ad-entropy-one} is bounded by
$C \cdot S(N) \log n \lesssim N \log n$.
Substituting into \cref{eq:ad-entropy-union},
\[
    \log \cN\!\left(
    \epsilon_n,\, \widetilde{\cF}_n^{\mathrm{ad}},\, \|\cdot\|_{L^2(P_X)}
    \right)
    \lesssim
    \log \overline{N}_n + \overline{N}_n \log n
    \lesssim N_n \log n
    \asymp n\epsilon_n^2,
\]
where the last equivalence follows from the definitions of $N_n$ and
$\epsilon_n$. This verifies \cref{eq:suff-1}.

Next, we aim to prove the condition in \eqref{eq:suff-2}. Applying \cref{lem:approx} with $N = N_n$, there exists
\[
    f_{n}^\star
    \in
    KAN_c\bigl(L_0,\, D(N_n),\, G(N_n),\, H(N_n);\, m\bigr)
\]
with realizing parameter vector $\btheta_{n}^\star$ satisfying
\[
    \|\btheta_{n}^\star\|_0 \leq S(N_n),
    \qquad
    \|\btheta_{n}^\star\|_\infty \leq B^\star(N_n),
\]
and
\[
    \|f_0 - f_{n}^\star\|_{L^2}
    \lesssim N_n^{-\tbfs}
    \lesssim \epsilon_n(\log n)^{-1/2}.
\]
Therefore, for all sufficiently large $n$, by \eqref{assump:A2},
\begin{equation*} 
    \|f_0 - f_{n}^\star\|_{L^2(P_X)} \leq \epsilon_n/4.
\end{equation*}
Moreover, since $B^\star(N_n) \leq \overline{B}_n(N_n)$ for all sufficiently
large $n$, we have $f_{n}^\star \in \cF_n^{\mathrm{ad}}(N_n)$.

Let $I_{n}^\star := \{t : \theta_{n,t}^\star \neq 0\}$.
Choose $J_{n}^\star \subset \{1, \ldots, T(N_n)\}$ such that
$I_{n}^\star \subset J_{n}^\star$ and $|J_{n}^\star| = S(N_n)$,
and set $\bone_{J_{n}^\star} \in \{0, 1 \}^{T(N_n)}$ as in \cref{eq:main-proof-bone}.
Define
\[
    K_{n}^\star
    :=
    K\bigl(L_0, D(N_n), G(N_n), H(N_n), B^\star(N_n); a_0, b_0, d, m\bigr),
    \qquad
    \delta_n
    :=
    \frac{\epsilon_n}{4C_X^{1/2} K_{n}^\star}.
\]
Since $\log K_{n}^\star \lesssim \log n$, we have $\delta_n \leq 1$ and
$\log \delta_n^{-1} \lesssim \log n$ for all sufficiently large $n$.
By \cref{cor:KANs_lipschitz}, if $\|\btheta - \btheta_{n}^\star\|_\infty \leq \delta_n$,
then $\|f_{\btheta} - f_{n}^\star\|_{L^\infty} \leq \epsilon_n / (4C_X^{1/2})$,
and hence $\|f_{\btheta} - f_{n}^\star\|_{L^2(P_X)} \leq \epsilon_n/4$.
Since $f_0\in\UB$, we have $\operatorname{clip}\circ f_0=f_0$.
Moreover, $\operatorname{clip}$ is 1-Lipschitz. Therefore,
\[
\begin{split}
    \|\operatorname{clip}\circ f_{\btheta} - f_0\|_{L^2(P_X)}
    &=
    \|\operatorname{clip}\circ f_{\btheta}
    - \operatorname{clip}\circ f_0\|_{L^2(P_X)}  \\
    &\leq
    \|\operatorname{clip}\circ f_{\btheta}
    - \operatorname{clip}\circ f_n^\star\|_{L^2(P_X)}
    +
    \|\operatorname{clip}\circ f_n^\star
    - \operatorname{clip}\circ f_0\|_{L^2(P_X)}  \\
    &\leq
    \|f_{\btheta} - f_n^\star\|_{L^2(P_X)}
    +
    \|f_n^\star - f_0\|_{L^2(P_X)}
    \leq
    \epsilon_n/2.
\end{split}
\]
Let $A_n$ denote the KL neighborhood in \cref{lem:sufficient}, where
\cref{lem:sufficient} is applied with
$\cF_n=\widetilde{\cF}_n^{\mathrm{ad}}$. Then
\begin{align}
\Pi_n(A_n)
&\geq
\pi_N(N_n)
\cdot
\Pi\!\left(\bgamma = \bone_{J_n^\star} \mid N = N_n\right)
\nonumber\\
&\quad\times
\Pi\!\left(
    \|\btheta - \btheta_{n}^\star\|_\infty \leq \delta_n
    \mid \bgamma = \bone_{J_n^\star},\, N = N_n
\right)
\nonumber\\
&\quad\times
\Pi\!\left(|\sigma - \sigma_0| \leq \epsilon_n/2\right).
\label{eq:ad-prior-mass-decomp}
\end{align}
We bound each factor in turn. Since $\pi_N(N) \propto \exp(-\lambda_N N \log N)$,
\begin{equation}    \label{eq:ad-pm1}
    -\log \pi_N(N_n) \lesssim N_n \log N_n \lesssim N_n \log n.
\end{equation}

The combinatorial prior assigns
$\Pi(\bgamma = \bone_{J_n^\star} \mid N = N_n) = \binom{T(N_n)}{S(N_n)}^{-1}$,
so
\begin{equation}    \label{eq:ad-pm2}
    -\log \Pi\!\left(\bgamma = \bone_{J_n^\star} \mid N = N_n\right)
    \leq
    S(N_n) \log \frac{eT(N_n)}{S(N_n)}
    \lesssim N_n \log n.
\end{equation}

By \eqref{assump:C1}, using $\|\btheta_{n}^\star\|_\infty \leq B^\star(N_n)$
and $\delta_n \leq 1$,
\begin{align*}
&\Pi\!\left(
    \|\btheta - \btheta_{n}^\star\|_\infty \leq \delta_n
    \mid \bgamma = \bone_{J_n^\star},\, N = N_n
\right)\\
&\quad\geq
\left[
    2\delta_n \inf_{|u| \leq B^\star(N_n)+1}
    \widetilde{\pi}_{SL}(u \mid N_n)
\right]^{S(N_n)}.
\end{align*}
Therefore,
\begin{equation}    \label{eq:ad-pm3}
-\log
\Pi\!\left( \btheta:
    \|\btheta - \btheta_{n}^\star\|_\infty \leq \delta_n
    \mid \bgamma = \bone_{J_n^\star},\, N = N_n
\right)
\lesssim
S(N_n)\{\log \delta_n^{-1} + \log n\}
\lesssim N_n \log n.
\end{equation}

By \eqref{assump:C3}, the same calculation as in
\cref{eq:main-proof16}--\cref{eq:main-proof17} gives
\begin{equation}    \label{eq:ad-pm4}
    -\log \Pi\!\left(|\sigma - \sigma_0| \leq \epsilon_n/2\right)
    \lesssim \log(\epsilon_n^{-1}) \lesssim \log n.
\end{equation}
Combining \cref{eq:ad-pm1}--\cref{eq:ad-pm4} in \cref{eq:ad-prior-mass-decomp},
\[
    -\log \Pi_n(A_n)
    \lesssim N_n \log n
    \asymp n\epsilon_n^2,
\]
which verifies \cref{eq:suff-2}.

Finally, we prove \eqref{eq:suff-3}. By \eqref{assump:C3}, the variance prior is supported on
$[\underline{\sigma}^2, \overline{\sigma}^2]$. Since \cref{lem:sufficient} is
applied with $\cF_n=\widetilde{\cF}_n^{\mathrm{ad}}$ and
$\cM_n=\widetilde{\cM}_n^{\mathrm{ad}}$, we have
\[
\begin{split}
    \Pi_n^{\mathrm{ad}}\bigl((\widetilde{\cM}_n^{\mathrm{ad}})^c\bigr)
    &=
    \Pi_n^{\mathrm{ad}}\bigl((\widetilde{\cF}_n^{\mathrm{ad}})^c\bigr) \\
    &\leq
    \pi_N(N > \overline{N}_n)
    +
    \sum_{N=1}^{\overline{N}_n}
    \pi_N(N)\,
    \Pi\!\left(\|\btheta\|_\infty > \overline{B}_n(N) \mid N\right).
\end{split}
\]
Indeed, for $N\leq \overline N_n$, the event
$\|\btheta\|_\infty\leq \overline B_n(N)$ implies
$f_{\btheta}\in \cF_n^{\mathrm{ad}}(N)$, and hence
$\operatorname{clip}\circ f_{\btheta}\in
\widetilde{\cF}_n^{\mathrm{ad}}(N)\subset
\widetilde{\cF}_n^{\mathrm{ad}}$.
For all sufficiently large $n$,
\[
    \pi_N(N > \overline{N}_n)
    \lesssim
    \sum_{N > \overline{N}_n} \exp(-\lambda_N N \log N)
    \lesssim
    \exp\!\left\{
        -\frac{\lambda_N}{2} \overline{N}_n \log \overline{N}_n
    \right\}.
\]
Since $\overline{N}_n \log \overline{N}_n \asymp \overline{N}_0 N_n \log n$,
choosing $\overline{N}_0$ sufficiently large gives
\begin{equation}    \label{eq:ad-sc1}
    \pi_N(N > \overline{N}_n) = o\!\left(e^{-Cn\epsilon_n^2}\right)
\end{equation}
for the constant $C > 0$ in \cref{lem:sufficient}.

Since conditionally on $N$ the spike-and-slab prior selects exactly $S(N)$
nonzero coordinates, by the union bound and \eqref{assump:C2},
\[
    \Pi\!\left(\|\btheta\|_\infty > \overline{B}_n(N) \mid N\right)
    \leq
    S(N) \int_{|u| > \overline{B}_n(N)}
    \widetilde{\pi}_{SL}(u \mid N)\, du
    \leq
    c_2 S(N)
    \exp\!\left[
        -c_3 \left(\frac{\overline{B}_n(N)}{\tau_N}\right)^\alpha
    \right].
\]
Since $\tau_N \lesssim N^{\beta_{\mathrm{ad}}}$, the definition of
$\overline{B}_n(N)$ implies, uniformly over $1 \leq N \leq \overline{N}_n$,
\[
    \left(\frac{\overline{B}_n(N)}{\tau_N}\right)^\alpha
    \gtrsim \overline{B}_0^\alpha N_n \log n.
\]
Therefore,
\[
    \sum_{N=1}^{\overline{N}_n}
    \pi_N(N)\,
    \Pi\!\left( \btheta: \|\btheta\|_\infty > \overline{B}_n(N) \mid N\right)
    \leq
    c_2 S(\overline{N}_n)
    \exp\!\left[-c_4 \overline{B}_0^\alpha N_n \log n\right].
\]
Since $S(\overline{N}_n) \lesssim N_n$ and $N_n \log n \asymp n\epsilon_n^2$,
choosing $\overline{B}_0 > 0$ sufficiently large gives
\begin{equation}    \label{eq:ad-sc2}
    \sum_{N=1}^{\overline{N}_n}
    \pi_N(N)\,
    \Pi\!\left(\|\btheta\|_\infty > \overline{B}_n(N) \mid N\right)
    = o\!\left(e^{-Cn\epsilon_n^2}\right).
\end{equation}
Combining \cref{eq:ad-sc1} and \cref{eq:ad-sc2} verifies \cref{eq:suff-3}.
Applying \cref{lem:sufficient} completes the proof.
\end{proof}

\subsection{Proof of \cref{thm:composite}}
\label{app:composite-proof}

\begin{proof}
Let $N_n^* := \lceil n^{1/(2\tilde{s}^*+1)} \rceil$ and
$\epsilon_n^* := n^{-\tilde{s}^*/(2\tilde{s}^*+1)}(\log n)^{1/2}$,
so that $N_n^* \log n \asymp n(\epsilon_n^*)^2$.
Introduce the auxiliary truncation sequence
\[
    \overline{B}_n^{\mathrm{cp}}
    :=
    \overline{B}_{\mathrm{cp}}
    (N_n^*)^{\beta_{\mathrm{cp}} + 1/\alpha}
    (\log n)^{1/\alpha},
\]
where $\overline{B}_{\mathrm{cp}} > 0$ is chosen sufficiently large so that the
sieve-complement calculation closes.
Define the raw sieve
\[
    \cF_n^{\mathrm{cp}}
    :=
    \left\{
        f_{\btheta} \in KAN(L_{\mathrm{cp}}, D_n^{\mathrm{cp}},
        G_n^{\mathrm{cp}}, H_n^{\mathrm{cp}}; m)
        :
        \|\btheta\|_\infty \leq \overline{B}_n^{\mathrm{cp}},\;
        \|\btheta\|_0 \leq S_n^{\mathrm{cp}}
    \right\}
\]
and the corresponding clipped sieve
\[
    \widetilde{\cF}_n^{\mathrm{cp}}
    :=
    \left\{
        \operatorname{clip}\circ f_{\btheta}
        :
        f_{\btheta}\in \cF_n^{\mathrm{cp}}
    \right\}.
\]
Set
\[
    \widetilde{\cM}_n^{\mathrm{cp}}
    :=
    \widetilde{\cF}_n^{\mathrm{cp}}
    \times
    [\underline{\sigma}, \overline{\sigma}].
\]
We verify conditions \cref{eq:suff-1}--\cref{eq:suff-3} of
\cref{lem:sufficient} for the rate $\epsilon_n^*$. Note that
\cref{lem:sufficient} is applied with
$\cF_n=\widetilde{\cF}_n^{\mathrm{cp}}$ and
$\cM_n=\widetilde{\cM}_n^{\mathrm{cp}}$.

Now we prove that \eqref{eq:suff-1} holds.
Let
\[
    T_n^{\mathrm{cp}} := T(L_{\mathrm{cp}}, D_n^{\mathrm{cp}}, G_n^{\mathrm{cp}})
\]
and
\[
    K_n^{\mathrm{cp}}
    :=
    K(L_{\mathrm{cp}}, D_n^{\mathrm{cp}}, G_n^{\mathrm{cp}},
    H_n^{\mathrm{cp}}, \overline{B}_n^{\mathrm{cp}}; a_0, b_0, d, m).
\]
By Assumption \eqref{assump:A2},
$\|g\|_{L^2(P_X)}\leq C_X^{1/2}\|g\|_{L^\infty}$.
Moreover, since $\operatorname{clip}$ is 1-Lipschitz, an
$L^\infty$-cover of $\cF_n^{\mathrm{cp}}$ induces an $L^\infty$-cover,
hence an $L^2(P_X)$-cover, of $\widetilde{\cF}_n^{\mathrm{cp}}$.
Therefore, by \cref{lem:entropy},
\[
\begin{split}
    \log \cN\!\bigl(\epsilon_n^*,\, \widetilde{\cF}_n^{\mathrm{cp}},\,
    \|\cdot\|_{L^2(P_X)}\bigr)
    &\leq
    \log \cN\!\bigl(C_X^{-1/2}\epsilon_n^*,\, \cF_n^{\mathrm{cp}},\,
    \|\cdot\|_{L^\infty}\bigr) \\
    &\leq
    S_n^{\mathrm{cp}} \log \frac{e T_n^{\mathrm{cp}}}{S_n^{\mathrm{cp}}}
    +
    S_n^{\mathrm{cp}}
    \log\!\left(1 + \frac{\overline{B}_n^{\mathrm{cp}} K_n^{\mathrm{cp}}}
    {C_X^{-1/2} \epsilon_n^*}\right).
\end{split}
\]
Since $L_{\mathrm{cp}}$ is a fixed constant, we have
$T_n^{\mathrm{cp}} \lesssim (N_n^*)^{2+\beta_{\mathrm{cp}}}$ and
$\log K_n^{\mathrm{cp}} \lesssim \log n$, so the right-hand side is bounded by
\[
    S_n^{\mathrm{cp}} \log n
    \asymp N_n^* \log n
    \asymp n(\epsilon_n^*)^2,
\]
verifying \cref{eq:suff-1}.

Next, we show \eqref{eq:suff-2}.
Applying \cref{lem:comp_approx} with $N = N_n^*$, there exists
$f_n^{\mathrm{cp},*} \in KAN_c(L_{\mathrm{cp}}, D_n^{\mathrm{cp}},
G_n^{\mathrm{cp}}, H_n^{\mathrm{cp}}; m)$
with realizing parameter vector $\btheta_n^{\mathrm{cp},*}$ satisfying
$\|\btheta_n^{\mathrm{cp},*}\|_\infty \leq B_n^{\mathrm{cp},*}$ and
$\|\btheta_n^{\mathrm{cp},*}\|_0 \leq S_n^{\mathrm{cp}}$, and such that
\[
    \|f_0 - f_n^{\mathrm{cp},*}\|_{L^2(P_X)}
    \lesssim (N_n^*)^{-\tilde{s}^*}
    \lesssim \epsilon_n^*(\log n)^{-1/2}
    \leq \frac{\epsilon_n^*}{4}
\]
for all sufficiently large $n$. Since $B_n^{\mathrm{cp},*} \leq
\overline{B}_n^{\mathrm{cp}}$ for all sufficiently large $n$, we have
$f_n^{\mathrm{cp},*} \in \cF_n^{\mathrm{cp}}$.

Let $I_n^{\mathrm{cp},*} := \{t : \theta_{n,t}^{\mathrm{cp},*} \neq 0\}$
and choose $J_n^{\mathrm{cp},*} \subset \{1, \ldots, T_n^{\mathrm{cp}}\}$ with
$I_n^{\mathrm{cp},*} \subset J_n^{\mathrm{cp},*}$ and
$|J_n^{\mathrm{cp},*}| = S_n^{\mathrm{cp}}$.
Define
\[
    K_n^{\mathrm{cp},*}
    :=
    K\!\bigl(L_{\mathrm{cp}}, D_n^{\mathrm{cp}}, G_n^{\mathrm{cp}},
    H_n^{\mathrm{cp}}, B_n^{\mathrm{cp},*}; a_0, b_0, d, m\bigr),
    \qquad
    \delta_n^{\mathrm{cp}}
    :=
    \frac{\epsilon_n^*}{4 C_X^{1/2} K_n^{\mathrm{cp},*}}.
\]
Since $\log K_n^{\mathrm{cp},*}\lesssim \log n$, we have
$\delta_n^{\mathrm{cp}}\leq 1$ and
$\log(\delta_n^{\mathrm{cp}})^{-1}\lesssim \log n$ for all sufficiently large
$n$. By \cref{cor:KANs_lipschitz},
$\|\btheta - \btheta_n^{\mathrm{cp},*}\|_\infty \leq \delta_n^{\mathrm{cp}}$
implies
\[
    \|f_{\btheta} - f_n^{\mathrm{cp},*}\|_{L^\infty}
    \leq
    \frac{\epsilon_n^*}{4C_X^{1/2}},
\]
and hence
\[
    \|f_{\btheta} - f_n^{\mathrm{cp},*}\|_{L^2(P_X)}
    \leq
    \frac{\epsilon_n^*}{4}.
\]

Since $f_0\in\UB$, we have $\operatorname{clip}\circ f_0=f_0$.
Moreover, $\operatorname{clip}$ is 1-Lipschitz. Therefore,
\[
\begin{split}
    \|\operatorname{clip}\circ f_{\btheta} - f_0\|_{L^2(P_X)}
    &=
    \|\operatorname{clip}\circ f_{\btheta}
    - \operatorname{clip}\circ f_0\|_{L^2(P_X)} \\
    &\leq
    \|\operatorname{clip}\circ f_{\btheta}
    - \operatorname{clip}\circ f_n^{\mathrm{cp},*}\|_{L^2(P_X)}
    +
    \|\operatorname{clip}\circ f_n^{\mathrm{cp},*}
    - \operatorname{clip}\circ f_0\|_{L^2(P_X)} \\
    &\leq
    \|f_{\btheta} - f_n^{\mathrm{cp},*}\|_{L^2(P_X)}
    +
    \|f_n^{\mathrm{cp},*} - f_0\|_{L^2(P_X)} \\
    &\leq
    \frac{\epsilon_n^*}{2}.
\end{split}
\]
Let $A_n$ denote the KL neighborhood in \cref{lem:sufficient}, where
\cref{lem:sufficient} is applied with
$\cF_n=\widetilde{\cF}_n^{\mathrm{cp}}$. Using the same prior mass
decomposition as in the proof of \cref{thm:main}
(cf.\ \cref{eq:main-proof10}--\cref{eq:main-proof17}),
\begin{align*}
    \Pi_n(A_n)
    &\geq
    \Pi_n\!\left(
        \btheta : \|\btheta - \btheta_n^{\mathrm{cp},*}\|_\infty
        \leq \delta_n^{\mathrm{cp}}
        \;\middle|\;
        \bgamma = \bone_{J_n^{\mathrm{cp},*}}
    \right)
    \pi_n\!\left(\bgamma = \bone_{J_n^{\mathrm{cp},*}}\right)
    \Pi_n\!\left(\sigma : |\sigma - \sigma_0| \leq \epsilon_n^*/2\right).
\end{align*}
Since $\pi_n(\bgamma = \bone_{J_n^{\mathrm{cp},*}})
=
\binom{T_n^{\mathrm{cp}}}{S_n^{\mathrm{cp}}}^{-1}$, we have
\[
    -\log \pi_n(\bgamma = \bone_{J_n^{\mathrm{cp},*}})
    \leq
    S_n^{\mathrm{cp}}\log\frac{eT_n^{\mathrm{cp}}}{S_n^{\mathrm{cp}}}
    \lesssim
    N_n^*\log n.
\]
Moreover, by \eqref{assump:B1}, using
$\|\btheta_n^{\mathrm{cp},*}\|_\infty\leq B_n^{\mathrm{cp},*}$ and
$\delta_n^{\mathrm{cp}}\leq 1$,
\[
\begin{split}
&\Pi_n\!\left(
        \btheta : \|\btheta - \btheta_n^{\mathrm{cp},*}\|_\infty
        \leq \delta_n^{\mathrm{cp}}
        \;\middle|\;
        \bgamma = \bone_{J_n^{\mathrm{cp},*}}
    \right) \\
&\quad\geq
\left(
    2\delta_n^{\mathrm{cp}}
    \inf_{|u|\leq B_n^{\mathrm{cp},*}+1}
    \widetilde{\pi}_{SL}(u)
\right)^{S_n^{\mathrm{cp}}},
\end{split}
\]
and hence
\[
-\log
\Pi_n\!\left(
        \btheta : \|\btheta - \btheta_n^{\mathrm{cp},*}\|_\infty
        \leq \delta_n^{\mathrm{cp}}
        \;\middle|\;
        \bgamma = \bone_{J_n^{\mathrm{cp},*}}
    \right)
\lesssim
S_n^{\mathrm{cp}}\{\log(\delta_n^{\mathrm{cp}})^{-1}+\log n\}
\lesssim
N_n^*\log n.
\]
Finally, by \eqref{assump:B3}, the same calculation as in
\cref{eq:main-proof16}--\cref{eq:main-proof17} gives
\[
    -\log
    \Pi_n\!\left(\sigma : |\sigma - \sigma_0| \leq \epsilon_n^*/2\right)
    \lesssim
    \log((\epsilon_n^*)^{-1})
    \lesssim
    \log n.
\]
Combining these bounds yields
\[
    -\log \Pi_n(A_n)
    \lesssim
    N_n^* \log n
    \asymp
    n(\epsilon_n^*)^2,
\]
which verifies \cref{eq:suff-2}.

Finally, we show that \eqref{eq:suff-3} holds.
By \eqref{assump:B3}, the variance prior is supported on
$[\underline{\sigma}^2, \overline{\sigma}^2]$. Since \cref{lem:sufficient} is
applied with $\cF_n=\widetilde{\cF}_n^{\mathrm{cp}}$ and
$\cM_n=\widetilde{\cM}_n^{\mathrm{cp}}$, we have
\[
    \Pi_n\bigl((\widetilde{\cM}_n^{\mathrm{cp}})^c\bigr)
    =
    \Pi_n\bigl((\widetilde{\cF}_n^{\mathrm{cp}})^c\bigr)
    \leq
    \Pi_n\bigl((\cF_n^{\mathrm{cp}})^c\bigr).
\]
Since the spike-and-slab prior \cref{eq:sas_prior} fixes exactly
$S_n^{\mathrm{cp}}$ active coordinates, the only way to leave $\cF_n^{\mathrm{cp}}$
is for some active slab coefficient to exceed $\overline{B}_n^{\mathrm{cp}}$.
By the union bound and \eqref{assump:B2},
\[
    \Pi_n\!\bigl((\cF_n^{\mathrm{cp}})^c\bigr)
    \leq
    S_n^{\mathrm{cp}} \cdot c_2
    \exp\!\left[
        -c_3 \left( \frac{\overline{B}_n^{\mathrm{cp}}}{\tau_n} \right)^\alpha
    \right].
\]
Using $\tau_n \lesssim (N_n^*)^{\beta_{\mathrm{cp}}}$ and the definition of
$\overline{B}_n^{\mathrm{cp}}$,
\[
    \left(\frac{\overline{B}_n^{\mathrm{cp}}}{\tau_n}\right)^\alpha
    \gtrsim N_n^* \log n
    \asymp n(\epsilon_n^*)^2.
\]
Choosing $\overline{B}_{\mathrm{cp}}$ sufficiently large gives
\[
    \Pi_n\bigl((\widetilde{\cM}_n^{\mathrm{cp}})^c\bigr)
    =
    o(e^{-Cn(\epsilon_n^*)^2}),
\]
verifying \cref{eq:suff-3}.
Applying \cref{lem:sufficient} completes the proof.
\end{proof}

%%%%%%%%%%%%%%%%%%%%%%%%%%%%%%%%%%%%%%%%%%%%%%%%%%%%%%%%%%%%

\newpage
\section*{NeurIPS Paper Checklist}

%%% DELETED INSTRUCTION BLOCK %%%
 
%%% END INSTRUCTIONS %%%

\begin{enumerate}

\item {\bf Claims}
    \item[] Question: Do the main claims made in the abstract and introduction accurately reflect the paper's contributions and scope?
    \item[] Answer: \answerYes{} % Replace by \answerYes{}, \answerNo{}, or \answerNA{}.
    \item[] Justification: The abstract and introduction clearly state the paper's main contribution as a theoretical analysis of posterior contraction rates for sparse Bayesian Kolmogorov--Arnold networks over anisotropic Besov spaces, and accurately reflect the paper's theoretical scope and assumptions.
    \item[] Guidelines:
    \begin{itemize}
        \item The answer \answerNA{} means that the abstract and introduction do not include the claims made in the paper.
        \item The abstract and/or introduction should clearly state the claims made, including the contributions made in the paper and important assumptions and limitations. A \answerNo{} or \answerNA{} answer to this question will not be perceived well by the reviewers. 
        \item The claims made should match theoretical and experimental results, and reflect how much the results can be expected to generalize to other settings. 
        \item It is fine to include aspirational goals as motivation as long as it is clear that these goals are not attained by the paper. 
    \end{itemize}

\item {\bf Limitations}
    \item[] Question: Does the paper discuss the limitations of the work performed by the authors?
    \item[] Answer: \answerYes{} % Replace by \answerYes{}, \answerNo{}, or \answerNA{}.
    \item[] Justification: The paper discusses its limitations in Section 4, Conclusion and Discussion. Specifically, we point out that the theoretical analysis is based on fixed knot placement, and that extending the framework to adaptive or data-driven knot selection is an important direction for future work. We also acknowledge that the scalability of Bayesian KANs remains an important open problem.
    \item[] Guidelines:
    \begin{itemize}
        \item The answer \answerNA{} means that the paper has no limitation while the answer \answerNo{} means that the paper has limitations, but those are not discussed in the paper. 
        \item The authors are encouraged to create a separate ``Limitations'' section in their paper.
        \item The paper should point out any strong assumptions and how robust the results are to violations of these assumptions (e.g., independence assumptions, noiseless settings, model well-specification, asymptotic approximations only holding locally). The authors should reflect on how these assumptions might be violated in practice and what the implications would be.
        \item The authors should reflect on the scope of the claims made, e.g., if the approach was only tested on a few datasets or with a few runs. In general, empirical results often depend on implicit assumptions, which should be articulated.
        \item The authors should reflect on the factors that influence the performance of the approach. For example, a facial recognition algorithm may perform poorly when image resolution is low or images are taken in low lighting. Or a speech-to-text system might not be used reliably to provide closed captions for online lectures because it fails to handle technical jargon.
        \item The authors should discuss the computational efficiency of the proposed algorithms and how they scale with dataset size.
        \item If applicable, the authors should discuss possible limitations of their approach to address problems of privacy and fairness.
        \item While the authors might fear that complete honesty about limitations might be used by reviewers as grounds for rejection, a worse outcome might be that reviewers discover limitations that aren't acknowledged in the paper. The authors should use their best judgment and recognize that individual actions in favor of transparency play an important role in developing norms that preserve the integrity of the community. Reviewers will be specifically instructed to not penalize honesty concerning limitations.
    \end{itemize}

\item {\bf Theory assumptions and proofs}
    \item[] Question: For each theoretical result, does the paper provide the full set of assumptions and a complete (and correct) proof?
    \item[] Answer: \answerYes{} % Replace by \answerYes{}, \answerNo{}, or \answerNA{}.
    \item[] Justification: The paper provides the full set of assumptions required for the theoretical results, organized as Assumptions A--D. All theorems, lemmas, and corollaries are stated clearly, and their proofs are either provided or explicitly referenced. The complete proofs and supporting results are included in the supplemental material.
    \item[] Guidelines:
    \begin{itemize}
        \item The answer \answerNA{} means that the paper does not include theoretical results. 
        \item All the theorems, formulas, and proofs in the paper should be numbered and cross-referenced.
        \item All assumptions should be clearly stated or referenced in the statement of any theorems.
        \item The proofs can either appear in the main paper or the supplemental material, but if they appear in the supplemental material, the authors are encouraged to provide a short proof sketch to provide intuition. 
        \item Inversely, any informal proof provided in the core of the paper should be complemented by formal proofs provided in appendix or supplemental material.
        \item Theorems and Lemmas that the proof relies upon should be properly referenced. 
    \end{itemize}

    \item {\bf Experimental result reproducibility}
    \item[] Question: Does the paper fully disclose all the information needed to reproduce the main experimental results of the paper to the extent that it affects the main claims and/or conclusions of the paper (regardless of whether the code and data are provided or not)?
    \item[] Answer: \answerNA{} % Replace by \answerYes{}, \answerNo{}, or \answerNA{}.
    \item[] Justification: This question is not applicable because the paper does not include experimental results. The paper is theoretical in nature and focuses on establishing posterior contraction guarantees for sparse Bayesian Kolmogorov--Arnold networks.
    \item[] Guidelines:
    \begin{itemize}
        \item The answer \answerNA{} means that the paper does not include experiments.
        \item If the paper includes experiments, a \answerNo{} answer to this question will not be perceived well by the reviewers: Making the paper reproducible is important, regardless of whether the code and data are provided or not.
        \item If the contribution is a dataset and\slash or model, the authors should describe the steps taken to make their results reproducible or verifiable. 
        \item Depending on the contribution, reproducibility can be accomplished in various ways. For example, if the contribution is a novel architecture, describing the architecture fully might suffice, or if the contribution is a specific model and empirical evaluation, it may be necessary to either make it possible for others to replicate the model with the same dataset, or provide access to the model. In general. releasing code and data is often one good way to accomplish this, but reproducibility can also be provided via detailed instructions for how to replicate the results, access to a hosted model (e.g., in the case of a large language model), releasing of a model checkpoint, or other means that are appropriate to the research performed.
        \item While NeurIPS does not require releasing code, the conference does require all submissions to provide some reasonable avenue for reproducibility, which may depend on the nature of the contribution. For example
        \begin{enumerate}
            \item If the contribution is primarily a new algorithm, the paper should make it clear how to reproduce that algorithm.
            \item If the contribution is primarily a new model architecture, the paper should describe the architecture clearly and fully.
            \item If the contribution is a new model (e.g., a large language model), then there should either be a way to access this model for reproducing the results or a way to reproduce the model (e.g., with an open-source dataset or instructions for how to construct the dataset).
            \item We recognize that reproducibility may be tricky in some cases, in which case authors are welcome to describe the particular way they provide for reproducibility. In the case of closed-source models, it may be that access to the model is limited in some way (e.g., to registered users), but it should be possible for other researchers to have some path to reproducing or verifying the results.
        \end{enumerate}
    \end{itemize}

\item {\bf Open access to data and code}
    \item[] Question: Does the paper provide open access to the data and code, with sufficient instructions to faithfully reproduce the main experimental results, as described in supplemental material?
    \item[] Answer: \answerNA{} % Replace by \answerYes{}, \answerNo{}, or \answerNA{}.
    \item[] Justification: This question is not applicable because the paper does not include experiments requiring data or code. The contribution of the paper is centered on theoretical analysis, namely posterior contraction guarantees for sparse Bayesian Kolmogorov--Arnold networks.
    \item[] Guidelines: As answered above, our contribution is centered on theoretical anlaysis. Therefore, this question is not applicable.
    \begin{itemize}
        \item The answer \answerNA{} means that paper does not include experiments requiring code.
        \item Please see the NeurIPS code and data submission guidelines (\url{https://neurips.cc/public/guides/CodeSubmissionPolicy}) for more details.
        \item While we encourage the release of code and data, we understand that this might not be possible, so \answerNo{} is an acceptable answer. Papers cannot be rejected simply for not including code, unless this is central to the contribution (e.g., for a new open-source benchmark).
        \item The instructions should contain the exact command and environment needed to run to reproduce the results. See the NeurIPS code and data submission guidelines (\url{https://neurips.cc/public/guides/CodeSubmissionPolicy}) for more details.
        \item The authors should provide instructions on data access and preparation, including how to access the raw data, preprocessed data, intermediate data, and generated data, etc.
        \item The authors should provide scripts to reproduce all experimental results for the new proposed method and baselines. If only a subset of experiments are reproducible, they should state which ones are omitted from the script and why.
        \item At submission time, to preserve anonymity, the authors should release anonymized versions (if applicable).
        \item Providing as much information as possible in supplemental material (appended to the paper) is recommended, but including URLs to data and code is permitted.
    \end{itemize}

\item {\bf Experimental setting/details}
    \item[] Question: Does the paper specify all the training and test details (e.g., data splits, hyperparameters, how they were chosen, type of optimizer) necessary to understand the results?
    \item[] Answer: \answerNA{} % Replace by \answerYes{}, \answerNo{}, or \answerNA{}.
    \item[] Justification: This question is not applicable because the paper does not include an empirical evaluation.
    \item[] Guidelines:
    \begin{itemize}
        \item The answer \answerNA{} means that the paper does not include experiments.
        \item The experimental setting should be presented in the core of the paper to a level of detail that is necessary to appreciate the results and make sense of them.
        \item The full details can be provided either with the code, in appendix, or as supplemental material.
    \end{itemize}

\item {\bf Experiment statistical significance}
    \item[] Question: Does the paper report error bars suitably and correctly defined or other appropriate information about the statistical significance of the experiments?
    \item[] Answer: \answerNA{} % Replace by \answerYes{}, \answerNo{}, or \answerNA{}.
    \item[] Justification: This question is not applicable because the paper does not include experiments. There are no error bars, confidence intervals, or statistical significance tests to report.
    \item[] Guidelines:
    \begin{itemize}
        \item The answer \answerNA{} means that the paper does not include experiments.
        \item The authors should answer \answerYes{} if the results are accompanied by error bars, confidence intervals, or statistical significance tests, at least for the experiments that support the main claims of the paper.
        \item The factors of variability that the error bars are capturing should be clearly stated (for example, train/test split, initialization, random drawing of some parameter, or overall run with given experimental conditions).
        \item The method for calculating the error bars should be explained (closed form formula, call to a library function, bootstrap, etc.)
        \item The assumptions made should be given (e.g., Normally distributed errors).
        \item It should be clear whether the error bar is the standard deviation or the standard error of the mean.
        \item It is OK to report 1-sigma error bars, but one should state it. The authors should preferably report a 2-sigma error bar than state that they have a 96\% CI, if the hypothesis of Normality of errors is not verified.
        \item For asymmetric distributions, the authors should be careful not to show in tables or figures symmetric error bars that would yield results that are out of range (e.g., negative error rates).
        \item If error bars are reported in tables or plots, the authors should explain in the text how they were calculated and reference the corresponding figures or tables in the text.
    \end{itemize}

\item {\bf Experiments compute resources}
    \item[] Question: For each experiment, does the paper provide sufficient information on the computer resources (type of compute workers, memory, time of execution) needed to reproduce the experiments?
    \item[] Answer: \answerNA{} % Replace by \answerYes{}, \answerNo{}, or \answerNA{}.
    \item[] Justification: The paper does not include experiments. Therefore, no information regarding computational resources is provided.
    \item[] Guidelines:
    \begin{itemize}
        \item The answer \answerNA{} means that the paper does not include experiments.
        \item The paper should indicate the type of compute workers CPU or GPU, internal cluster, or cloud provider, including relevant memory and storage.
        \item The paper should provide the amount of compute required for each of the individual experimental runs as well as estimate the total compute. 
        \item The paper should disclose whether the full research project required more compute than the experiments reported in the paper (e.g., preliminary or failed experiments that didn't make it into the paper). 
    \end{itemize}
    
\item {\bf Code of ethics}
    \item[] Question: Does the research conducted in the paper conform, in every respect, with the NeurIPS Code of Ethics \url{https://neurips.cc/public/EthicsGuidelines}?
    \item[] Answer: \answerYes{} % Replace by \answerYes{}, \answerNo{}, or \answerNA{}.
    \item[] Justification: The research conducted in this paper conforms with the NeurIPS Code of Ethics.
    \item[] Guidelines:
    \begin{itemize}
        \item The answer \answerNA{} means that the authors have not reviewed the NeurIPS Code of Ethics.
        \item If the authors answer \answerNo, they should explain the special circumstances that require a deviation from the Code of Ethics.
        \item The authors should make sure to preserve anonymity (e.g., if there is a special consideration due to laws or regulations in their jurisdiction).
    \end{itemize}

\item {\bf Broader impacts}
    \item[] Question: Does the paper discuss both potential positive societal impacts and negative societal impacts of the work performed?
    \item[] Answer: \answerNA{} % Replace by \answerYes{}, \answerNo{}, or \answerNA{}.
    \item[] Justification: The paper focuses on mathematical aspects of Bayesian neural networks and does not raise direct societal implications.
    \item[] Guidelines:
    \begin{itemize}
        \item The answer \answerNA{} means that there is no societal impact of the work performed.
        \item If the authors answer \answerNA{} or \answerNo, they should explain why their work has no societal impact or why the paper does not address societal impact.
        \item Examples of negative societal impacts include potential malicious or unintended uses (e.g., disinformation, generating fake profiles, surveillance), fairness considerations (e.g., deployment of technologies that could make decisions that unfairly impact specific groups), privacy considerations, and security considerations.
        \item The conference expects that many papers will be foundational research and not tied to particular applications, let alone deployments. However, if there is a direct path to any negative applications, the authors should point it out. For example, it is legitimate to point out that an improvement in the quality of generative models could be used to generate Deepfakes for disinformation. On the other hand, it is not needed to point out that a generic algorithm for optimizing neural networks could enable people to train models that generate Deepfakes faster.
        \item The authors should consider possible harms that could arise when the technology is being used as intended and functioning correctly, harms that could arise when the technology is being used as intended but gives incorrect results, and harms following from (intentional or unintentional) misuse of the technology.
        \item If there are negative societal impacts, the authors could also discuss possible mitigation strategies (e.g., gated release of models, providing defenses in addition to attacks, mechanisms for monitoring misuse, mechanisms to monitor how a system learns from feedback over time, improving the efficiency and accessibility of ML).
    \end{itemize}
    
\item {\bf Safeguards}
    \item[] Question: Does the paper describe safeguards that have been put in place for responsible release of data or models that have a high risk for misuse (e.g., pre-trained language models, image generators, or scraped datasets)?
    \item[] Answer: \answerNA{} % Replace by \answerYes{}, \answerNo{}, or \answerNA{}.
    \item[] Justification: The paper does not release any data or models. It does not pose risks related to model misuse, dual use, or data privacy.
    \item[] Guidelines:
    \begin{itemize}
        \item The answer \answerNA{} means that the paper poses no such risks.
        \item Released models that have a high risk for misuse or dual-use should be released with necessary safeguards to allow for controlled use of the model, for example by requiring that users adhere to usage guidelines or restrictions to access the model or implementing safety filters. 
        \item Datasets that have been scraped from the Internet could pose safety risks. The authors should describe how they avoided releasing unsafe images.
        \item We recognize that providing effective safeguards is challenging, and many papers do not require this, but we encourage authors to take this into account and make a best faith effort.
    \end{itemize}

\item {\bf Licenses for existing assets}
    \item[] Question: Are the creators or original owners of assets (e.g., code, data, models), used in the paper, properly credited and are the license and terms of use explicitly mentioned and properly respected?
    \item[] Answer: \answerNA{} % Replace by \answerYes{}, \answerNo{}, or \answerNA{}.
    \item[] Justification: The paper does not use any existing code, datasets, models, or other copyrighted assets. Its content consists solely of theoretical analyses and proofs on Bayesian neural networks.
    \item[] Guidelines:
    \begin{itemize}
        \item The answer \answerNA{} means that the paper does not use existing assets.
        \item The authors should cite the original paper that produced the code package or dataset.
        \item The authors should state which version of the asset is used and, if possible, include a URL.
        \item The name of the license (e.g., CC-BY 4.0) should be included for each asset.
        \item For scraped data from a particular source (e.g., website), the copyright and terms of service of that source should be provided.
        \item If assets are released, the license, copyright information, and terms of use in the package should be provided. For popular datasets, \url{paperswithcode.com/datasets} has curated licenses for some datasets. Their licensing guide can help determine the license of a dataset.
        \item For existing datasets that are re-packaged, both the original license and the license of the derived asset (if it has changed) should be provided.
        \item If this information is not available online, the authors are encouraged to reach out to the asset's creators.
    \end{itemize}

\item {\bf New assets}
    \item[] Question: Are new assets introduced in the paper well documented and is the documentation provided alongside the assets?
    \item[] Answer: \answerNA{} % Replace by \answerYes{}, \answerNo{}, or \answerNA{}.
    \item[] Justification: The paper consists of mathematical proofs and does not release new assets.
    \item[] Guidelines:
    \begin{itemize}
        \item The answer \answerNA{} means that the paper does not release new assets.
        \item Researchers should communicate the details of the dataset\slash code\slash model as part of their submissions via structured templates. This includes details about training, license, limitations, etc. 
        \item The paper should discuss whether and how consent was obtained from people whose asset is used.
        \item At submission time, remember to anonymize your assets (if applicable). You can either create an anonymized URL or include an anonymized zip file.
    \end{itemize}

\item {\bf Crowdsourcing and research with human subjects}
    \item[] Question: For crowdsourcing experiments and research with human subjects, does the paper include the full text of instructions given to participants and screenshots, if applicable, as well as details about compensation (if any)? 
    \item[] Answer: \answerNA{} % Replace by \answerYes{}, \answerNo{}, or \answerNA{}.
    \item[] Justification: The paper exclusively discusses the theory of deep learning and does not involve crowdsourcing experiments or research with human subjects.
    \item[] Guidelines:
    \begin{itemize}
        \item The answer \answerNA{} means that the paper does not involve crowdsourcing nor research with human subjects.
        \item Including this information in the supplemental material is fine, but if the main contribution of the paper involves human subjects, then as much detail as possible should be included in the main paper. 
        \item According to the NeurIPS Code of Ethics, workers involved in data collection, curation, or other labor should be paid at least the minimum wage in the country of the data collector. 
    \end{itemize}

\item {\bf Institutional review board (IRB) approvals or equivalent for research with human subjects}
    \item[] Question: Does the paper describe potential risks incurred by study participants, whether such risks were disclosed to the subjects, and whether Institutional Review Board (IRB) approvals (or an equivalent approval/review based on the requirements of your country or institution) were obtained?
    \item[] Answer: \answerNA{} % Replace by \answerYes{}, \answerNo{}, or \answerNA{}.
    \item[] Justification: This question is not applicable because the paper does not involve crowdsourcing or research with human subjects. The paper focuses exclusively on the theory of deep learning.
    \item[] Guidelines:
    \begin{itemize}
        \item The answer \answerNA{} means that the paper does not involve crowdsourcing nor research with human subjects.
        \item Depending on the country in which research is conducted, IRB approval (or equivalent) may be required for any human subjects research. If you obtained IRB approval, you should clearly state this in the paper. 
        \item We recognize that the procedures for this may vary significantly between institutions and locations, and we expect authors to adhere to the NeurIPS Code of Ethics and the guidelines for their institution. 
        \item For initial submissions, do not include any information that would break anonymity (if applicable), such as the institution conducting the review.
    \end{itemize}

\item {\bf Declaration of LLM usage}
    \item[] Question: Does the paper describe the usage of LLMs if it is an important, original, or non-standard component of the core methods in this research? Note that if the LLM is used only for writing, editing, or formatting purposes and does \emph{not} impact the core methodology, scientific rigor, or originality of the research, declaration is not required.
    %this research? 
    \item[] Answer: \answerNA{} % Replace by \answerYes{}, \answerNo{}, or \answerNA{}.
    \item[] Justification: The theoretical results and methodological developments in the paper are derived independently and do not rely on LLMs.
    \item[] Guidelines:
    \begin{itemize}
        \item The answer \answerNA{} means that the core method development in this research does not involve LLMs as any important, original, or non-standard components.
        \item Please refer to our LLM policy in the NeurIPS handbook for what should or should not be described.
    \end{itemize}

\end{enumerate}

\end{document}